\def\eqref#1{equation~\ref{#1}}
\def\Eqref#1{Equation~\ref{#1}}
\def\1{\bm{1}}
\def\rvi{{\mathbf{i}}}
\def\rmP{{\mathbf{P}}}
\def\rmX{{\mathbf{X}}}
\def\valpha{{\bm{\alpha}}}
\def\vbeta{{\bm{\beta}}}
\def\vx{{\bm{x}}}
\DeclareMathAlphabet{\mathsfit}{\encodingdefault}{\sfdefault}{m}{sl}
\SetMathAlphabet{\mathsfit}{bold}{\encodingdefault}{\sfdefault}{bx}{n}
\def\gC{{\mathcal{C}}}
\def\gD{{\mathcal{D}}}
\def\gH{{\mathcal{H}}}
\def\gL{{\mathcal{L}}}
\def\gW{{\mathcal{W}}}
\def\gX{{\mathcal{X}}}
\newcommand{\E}{\mathbb{E}}
\newcommand{\R}{\mathbb{R}}
\newcommand{\N}{\mathbb{N}}
\DeclareMathOperator*{\argmin}{arg\,min}
\newtheorem{theorem}{Theorem}
\newtheorem{lemma}[theorem]{Lemma}
\newtheorem{definition}{Definition}
\newcommand{\BU}[1]{#1}
\newcommand{\BV}[1]{\color{black!10!blue}{\em #1}}
\newcommand{\redbold}[1]{{\textbf{#1}}}
\newcommand{\NameNoSpace}{M\textsuperscript{3}SDA}
\newcommand{\Name}{M\textsuperscript{3}SDA }
\newcommand{\NameBNoSpace}{M\textsuperscript{3}SDA-$\beta$}
\newcommand{\NameB}{M\textsuperscript{3}SDA-$\beta$ }
\newcommand{\DATASETNAME}{DomainNet }
\newcommand{\DATASETNAMENoSpace}{DomainNet}
\ificcvfinal\pagestyle{empty}\fi
\begin{document}

\title{Moment Matching for Multi-Source Domain Adaptation}

\author{Xingchao Peng\\
Boston University\\
{\tt\small xpeng@bu.edu}
\and
Qinxun Bai\\
Horizon Robotics\\
{\tt\small qinxun.bai@horizon.ai}
\and
Xide Xia\\
Boston University\\
{\tt\small xidexia@bu.edu}
\\
\and 
Zijun Huang\\
Columbia University\\
{\tt \small zijun.huang@columbia.edu}
\and 
Kate Saenko\\
Boston University\\
{\tt\small saenko@bu.edu}
\and
Bo Wang\\
{\normalsize Vector Institute \& Peter Munk Cardiac Center}\\
{\tt\small bowang@vectorinstitute.ai}
}

\maketitle

\begin{abstract}
\vspace{-0.3cm}
Conventional unsupervised domain adaptation (UDA) assumes that training data are sampled from a single domain. This neglects the more practical scenario where training data are collected from multiple sources, requiring multi-source domain adaptation. We make three major contributions towards addressing this problem. First, we collect and annotate by far the largest \textit{UDA} dataset, called DomainNet, which contains six domains and about 0.6 million images distributed among 345 categories, addressing the gap in data availability for multi-source UDA research. Second, we propose a new deep learning approach, Moment Matching for Multi-Source Domain Adaptation (\NameNoSpace), which aims to transfer knowledge learned from multiple labeled source domains to an unlabeled target domain by dynamically aligning moments of their feature distributions. Third, we provide new theoretical insights specifically for moment matching approaches in both single and multiple source domain adaptation. Extensive experiments are conducted to demonstrate the power of our new dataset in benchmarking state-of-the-art multi-source domain adaptation methods, as well as the advantage of our proposed model. Dataset and Code are available at \url{http://ai.bu.edu/M3SDA/}
\end{abstract}


\vspace{-0.3cm}
\section{Introduction}

Generalizing models learned on one visual domain to novel domains has been a major obstacle in the quest for universal object recognition. The performance of the learned models degrades significantly when testing on novel domains due to the presence of \textit{domain shift}~\cite{datashift_book2009}. 

\begin{figure}
    \centering
    \includegraphics[width=\linewidth]{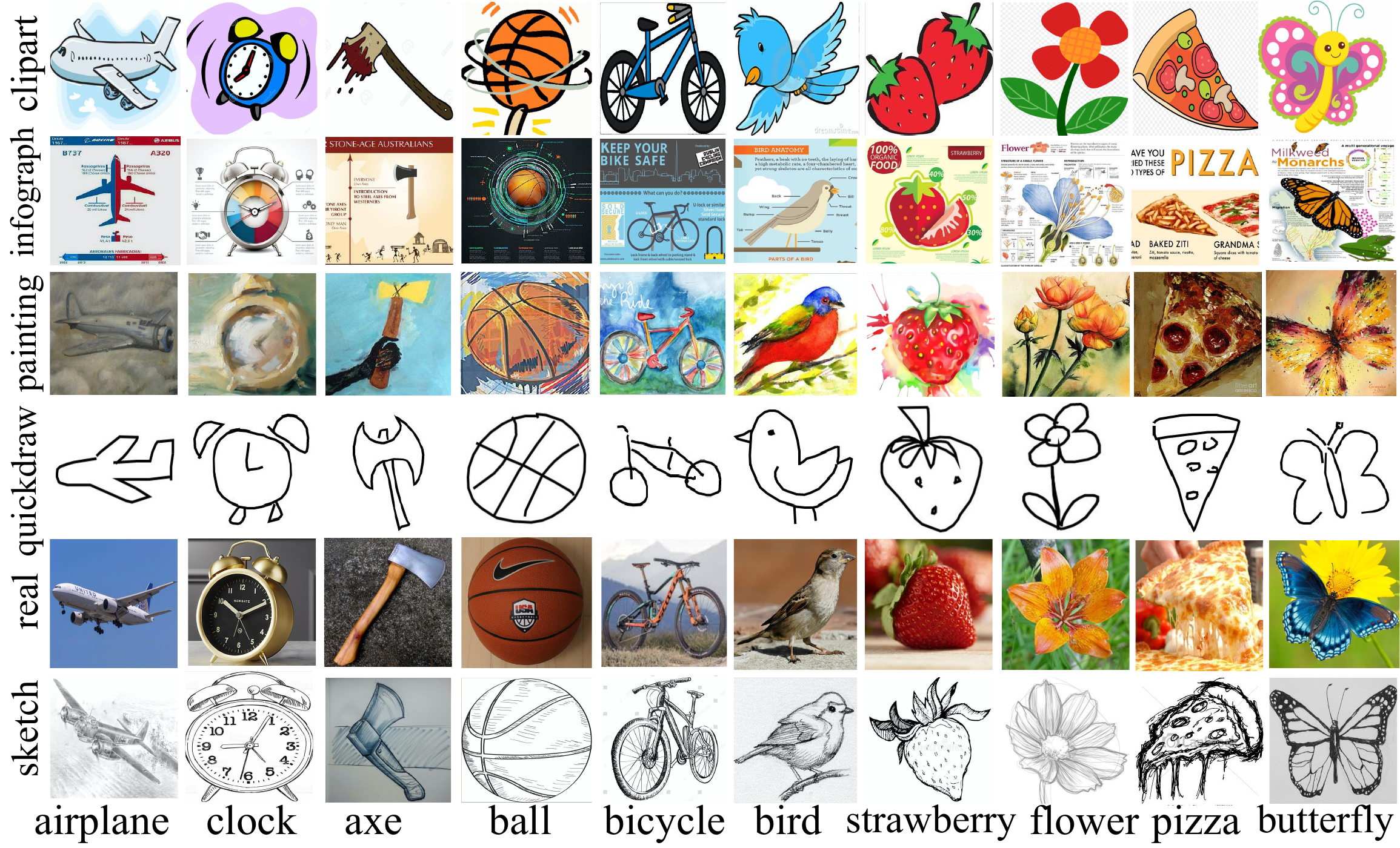}
    \caption{We address \textbf{Multi-Source Domain Adaptation} where source images come from multiple domains. We collect a large scale dataset, DomainNet, with six domains, 345 categories, and $\sim$0.6 million images and propose a model (\NameNoSpace) to transfer knowledge from multiple source domains to an unlabeled target domain.}

    \label{fig_msda_sample_image}
    \vspace{-0.3cm}
\end{figure}

Recently, transfer learning and domain adaptation methods have been proposed to mitigate the domain gap. For example, several UDA methods~\cite{JAN,tzeng2014deep,long2015} incorporate Maximum Mean Discrepancy loss into a neural network to diminish the domain discrepancy; other models introduce different learning schema to align the source and target domains, including aligning second order correlation~\cite{sun2015return,peng2017synthetic}, moment matching~\cite{zellinger2017central}, adversarial domain confusion~\cite{adda, DANN, MCD} and GAN-based alignment~\cite{CycleGAN2017,hoffman2017cycada,UNIT}. However, most of current UDA methods assume that source samples are collected from a single domain. This assumption neglects the more practical scenarios where labeled images are typically collected from multiple domains. For example, the training images can be taken under different weather or lighting conditions, share different visual cues, and even have different modalities (as shown in Figure~\ref{fig_msda_sample_image}). 

In this paper, we consider multi-source domain adaptation (MSDA), a more difficult but practical problem of knowledge transfer from multiple distinct domains to one unlabeled target domain. The main challenges in the research of MSDA are that: (1) the source data has multiple domains, which hampers the effectiveness of mainstream single UDA method; (2) source domains also possess domain shift with each other; (3) the lack of large-scale multi-domain dataset hinders the development of MSDA models.

In the context of MSDA, some theoretical analysis~\cite{ben2010theory,Mansour_nips2018,crammer2008learning,Zhao2017MultipleSD, NIPS2018_8046} has been proposed for multi-source domain adaptation (MSDA). Ben-David et al~\cite{ben2010theory} pioneer this direction by introducing an $\mathcal{H}\Delta\mathcal{H}$-divergence between the weighted combination of source domains and target domain. More applied works~\cite{duan2012exploiting, xu2018deep} use an adversarial discriminator to align the multi-source domains with the target domain. However, these works focus only on aligning the source domains with the target, neglecting the domain shift between the source domains. Moreover, $\mathcal{H}\Delta\mathcal{H}$-divergence based analysis does not directly correspond to moment matching approaches.

\begin{table}[t]
\small
\setlength{\tabcolsep}{0.1em}
    \centering
    \begin{tabular}{c|c| c c c| c}
    Dataset & Year &\footnotesize{Images} & \footnotesize{Classes} & \footnotesize{Domains} & \textit{Description} \\
    \hline
    
    \footnotesize{Digit-Five}& - &$\sim$100,000 & 10 & 5 & digit \\
    \footnotesize{Office~\cite{office}} & 2010 & 4,110 & 31 & 3 & office\\
    \footnotesize{Office-Caltech~\cite{gong2012geodesic}}& 2012 & 2,533 & 10 & 4 & office \\
     \footnotesize{CAD-Pascal~\cite{peng2015learning}}& 2015 & 12,000 & 20 & 6 & \footnotesize{animal,vehicle} \\
    \footnotesize{Office-Home~\cite{officehome}} & 2017 & 15,500 & 65 & 4 
    & office, home\\
    \footnotesize{PACS}~\cite{PACS} & 2017 & 9,991 & 7 & 4 & animal, stuff\\
    \footnotesize{Open MIC~\cite{openmic}} & 2018 &  16,156 & - & - & museum\\
    \footnotesize{Syn2Real~\cite{syn2real}} & 2018 &  280,157 & 12 & 3 & \footnotesize{ animal,vehicle}\\ 
    \hline
    \textbf{\DATASETNAMENoSpace} (Ours)&-& \textbf{569,010}& \textbf{345} & \textbf{6} & \footnotesize{see \textit{\url{Appendix}}}
    
    \end{tabular}
    \caption{A collection of most notable datasets to evaluate domain adaptation methods. Specifically, ``Digit-Five'' dataset indicates five most popular digit datasets (\textit{MNIST}~\cite{lecun1998gradient}, \textit{MNIST-M}~\cite{DANN}, Synthetic Digits~\cite{DANN}, \textit{SVHN}, and \textit{USPS}) which are widely used to evaluate domain adaptation models.  Our dataset is challenging as it contains more images, categories, and domains than other datasets. (see Table~\ref{dataset_1}, Table~\ref{dataset_2}, and Table~\ref{dataset_3} in \textit{\url{Appendix}} for detailed categories.)}
    \label{tab_dataset}
    \vspace{-0.4cm}
    
\end{table}

In terms of data, research has been hampered due to the lack of large-scale domain adaptation datasets, as
 state-of-the-art datasets contain only a few images or have a limited number of classes. Many domain adaptation models exhibit saturation when evaluated on these datasets. For example, many methods achieve $\sim$90 accuracy on the popular Office~\cite{office} dataset; Self-Ensembling~\cite{SE} reports $\sim$99\% accuracy on the ``Digit-Five'' dataset and $\sim$92\% accuracy on Syn2Real~\cite{syn2real} dataset.

In this paper, we first collect and label a new multi-domain dataset called \textbf{\DATASETNAMENoSpace}, aiming to overcome benchmark saturation. Our dataset consists of six distinct domains, 345 categories and $\sim$0.6 million images. A comparison of \DATASETNAME and several existing datasets is shown in Table~\ref{tab_dataset}, and example images are illustrated in Figure~\ref{fig_msda_sample_image}.  We evaluate several state-of-the-art single domain adaptation methods on our dataset, leading to surprising findings (see Section~\ref{sec_exp_all}). We also extensively evaluate our model on existing datasets and on  \DATASETNAME and show that it outperforms the existing single- and multi-source approaches.

Secondly, we propose a novel approach called \Name to tackle MSDA task by aligning the source domains with the target domain, and aligning the source domains with each other simultaneously. We dispose multiple complex adversarial training procedures presented in~\cite{xu2018deep}, but directly align the moments of their deep feature distributions, leading to a more robust and effective MSDA model. To our best knowledge, we are the first to empirically demonstrate that aligning the source domains is beneficial for MSDA tasks.

Finally, we extend existing theoretical analysis~\cite{ben2010theory,NIPS2018_8046,Zhao2017MultipleSD} to the case of moment-based divergence between source and target domains, which provides new theoretical insight specifically for moment matching approaches in 
domain adaptation, including our approach and many others.


 
\begin{figure*}[t]
    \centering
        \includegraphics[width=\linewidth]{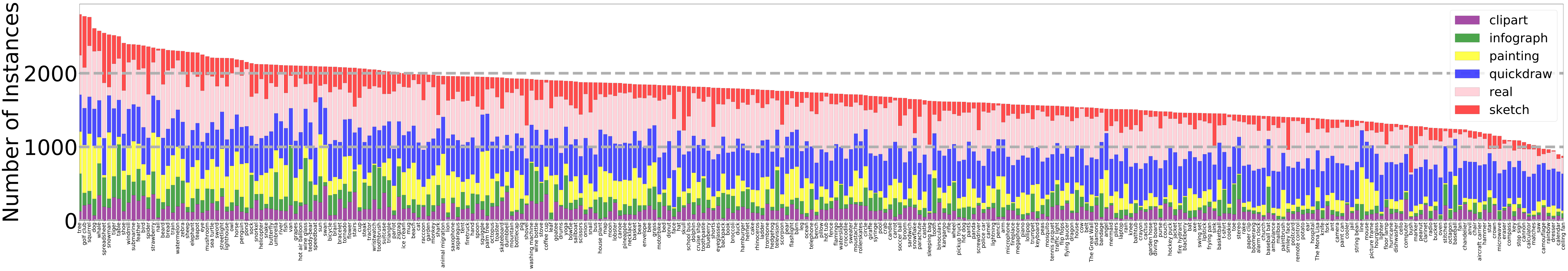}
        \includegraphics[width=\linewidth]{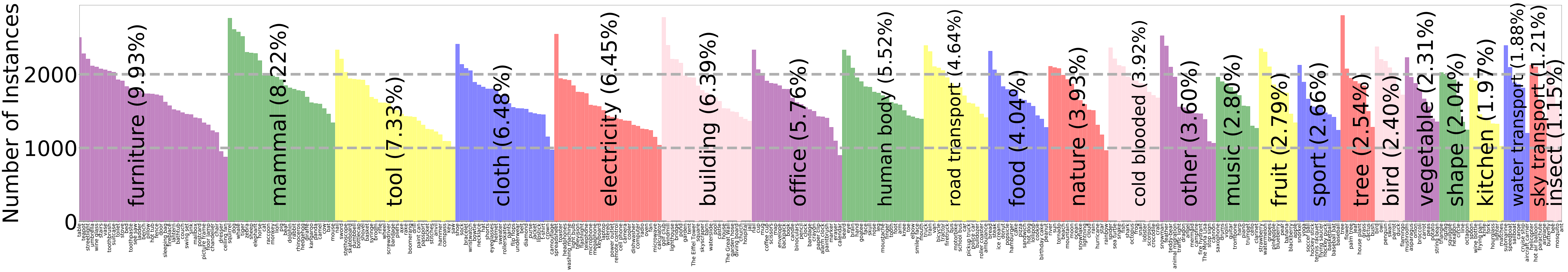}
    \caption{\textbf{Statistics for our \DATASETNAME dataset.} The two plots show object classes sorted by the total number of instances. The top figure shows the percentages each domain takes in the dataset. The bottom figure shows the number of instances grouped by 24 different divisions. Detailed numbers are shown in Table~\ref{dataset_1}, Table~\ref{dataset_2} and Table~\ref{dataset_3} in \textit{\url{Appendix}}. (Zoom in to see the exact class names!)} 
    \label{fig_number1}
    \centering
    \vspace{-0.3cm}
\end{figure*}

\section{Related Work}
\vspace{0.2em}

 \noindent \textbf{Domain Adaptation Datasets} Several notable datasets that can be utilized to evaluate domain adaptation approaches are summarized in Table~\ref{tab_dataset}. The Office dataset~\cite{office} is a popular benchmark for office environment objects. It contains 31 categories captured in three domains: office environment images taken with a high quality camera (DSLR), office environment images taken with a low quality camera (Webcam), and images from an online merchandising website (Amazon). The office dataset and its extension,  Office-Caltech10~\cite{gong2012geodesic}, have been used in numerous domain adaptation papers~\cite{long2015, adda, JAN, sun2015return, xu2018deep}, and the adaptation performance has reached $\sim$90\% accuracy. More recent benchmarks~\cite{officehome, openmic, visda} are proposed to evaluate the effectiveness of domain adaptation models. However, these datasets are small-scale and limited by their specific environments, such as \textit{office}, \textit{home}, and \textit{museum}.  Our dataset contains about 600k images, distributed in 345 categories and 6 distinct domains. We capture various object divisions, ranging from \textit{furniture}, \textit{cloth}, \textit{electronic} to \textit{mammal, building}, etc. 
 
 \vspace{0.2em}
 \noindent \textbf{Single-source UDA}  Over the past decades, various single-source UDA methods have been proposed. These methods can be taxonomically divided into three categories. The first category is the discrepancy-based DA approach, which utilizes different metric learning schemas to diminish the domain shift between source and target domains. Inspired by the kernel two-sample test~\cite{gretton2007kernel}, Maximum Mean Discrepancy (MMD) is applied to reduce distribution shift in various methods~\cite{JAN,tzeng2014deep,ghifary2014domain, meda}. Other commonly used methods include correlation alignment~\cite{sun2015return,peng2017synthetic}, Kullback-Leibler (KL) divergence~\cite{zhuang2015supervised}, and $\mathcal{H}$ divergence~\cite{ben2010theory}. The second category is the adversarial-based approach~\cite{cogan, adda}. A domain discriminator is leveraged to encourage the domain confusion by an adversarial objective. Among these approaches, generative adversarial networks are widely used to learn domain-invariant features as well to generate fake source or target data. Other frameworks utilize only adversarial loss to bridge two domains. The third category is reconstruction-based, which assumes the data reconstruction helps the DA models to learn domain-invariant features. The reconstruction is obtained via an encoder-decoder~\cite{bousmalis2016domain,ghifary2016deep} or a GAN discriminator, such as dual-GAN~\cite{yi2017dualgan}, cycle-GAN~\cite{CycleGAN2017}, disco-GAN~\cite{kim2017learning}, and CyCADA~\cite{hoffman2017cycada}. Though these methods make progress on UDA, few of them consider the practical scenario where training data are collected from multiple sources. Our paper proposes a model to tackle multi-source domain adaptation, which is a more general and challenging scenario.
 
  \vspace{0.2em}
 \noindent \textbf{Multi-Source Domain Adaptation}  Compared with single source UDA, multi-source domain adaptation assumes that training data from multiple sources are available. Originated from the early theoretical analysis~\cite{ben2010theory,Mansour_nips2018,crammer2008learning}, MSDA has many practical applications~\cite{xu2018deep,duan2012exploiting}. Ben-David et al~\cite{ben2010theory} introduce an $\mathcal{H}\Delta\mathcal{H}$-divergence between the weighted combination of source domains and target domain. Crammer et al~\cite{crammer2008learning} establish a general bound on the expected loss of the model by minimizing the empirical loss on the nearest \textit{k} sources. Mansour et al~\cite{Mansour_nips2018} claim that the target hypothesis can be represented by a weighted combination of source hypotheses. In the more applied works, Deep Cocktail Network (DCTN)~\cite{xu2018deep} proposes a \textit{k}-way domain discriminator and category classifier for digit classification and real-world object recognition. Hoffman et al~\cite{NIPS2018_8046} propose normalized solutions with theoretical guarantees for cross-entropy loss, aiming to provide a solution for the MSDA problem with very practical benefits. Duan et al~\cite{duan2012exploiting} propose \textit{Domain Selection Machine} for event recognition in consumer videos by leveraging a large number of loosely labeled web images from different sources. Different from these methods, our model directly matches all the distributions by matching the moments. Moreover, we provide a concrete proof of why matching the moments of multiple distributions works for multi-source domain adaptation.

 \vspace{0.2em}
 \noindent \textbf{Moment Matching}  The moments of distributions have been studied by the machine learning community for a long time. In order to diminish the domain discrepancy between two domains, different moment matching schemes have been proposed. For example, MMD matches the first moments of two distributions. Sun et al~\cite{sun2015return} propose an approach that matches the second moments. Zhang et al~\cite{zhang2018aligning} propose to align infinte-dimensional covariance matrices in RKHS. Zellinger et al~\cite{zellinger2017central} introduce a moment matching regularizer to match high moments. As the generative adversarial network (GAN) becomes popular, many GAN-based moment matching approaches have been proposed. \textit{McGAN} ~\cite{mroueh2017mcgan} utilizes a GAN to match the mean and covariance of feature distributions. GMMN~\cite{li2015generative} and MMD GAN~\cite{li2017mmd} are proposed for aligning distribution moments with generative neural networks. Compared to these methods, our work focuses on matching distribution moments for multiple domains and more importantly, we demonstrate that
 this is crucial for multi-source domain adaptation.  

\section{The DomainNet dataset}
\label{sec_dataset}

\begin{figure*}[t]
    \centering
    \includegraphics[width=\linewidth]{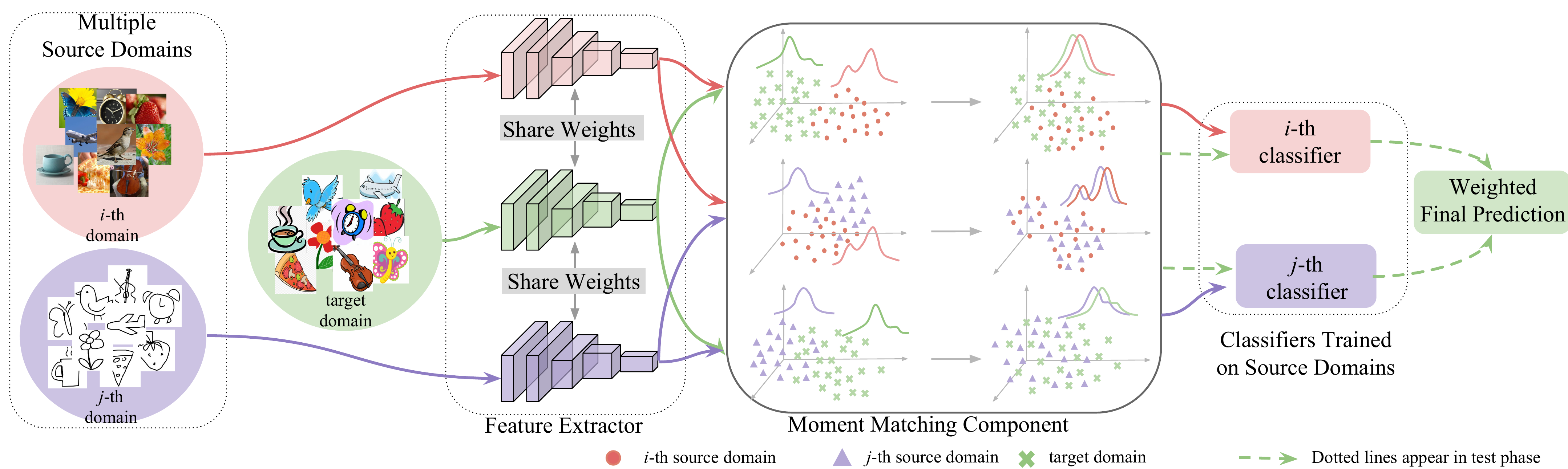}
    \vspace{-0.2cm}
    \caption{The framework of \textbf{Moment Matching for Multi-source Domain Adaptation} (\NameNoSpace). Our model consists of three components: i) feature extractor, ii) moment matching component, and iii) classifiers. Our model takes multi-source annotated training data as input and transfers the learned knowledge to classify the unlabeled target samples. Without loss of generality, we show the \textit{i}-th domain and \textit{j}-th domain as an example. The feature extractor maps the source domains into a common feature space. The moment matching component attempts to match the \textit{i}-th and \textit{j}-th domains with the target domain, as well as matching the \textit{i}-th domain with the \textit{j}-th domain. The final predictions of target samples are based on the weighted outputs of the \textit{i}-th and \textit{j}-th classifiers. (Best viewed in color!)}
    \label{fig_over_view}
    \vspace{-0.2cm}
\end{figure*}

It is well-known that deep models require massive amounts of training data. Unfortunately, existing datasets for visual domain adaptation are either small-scale or limited in the number of categories. We collect by far the largest domain adaptation dataset to date, \textbf{\DATASETNAME}. The \DATASETNAME contains six domains, with each domain containing 345 categories of common objects, as listed in Table~\ref{dataset_1}, Table~\ref{dataset_2}, and Table~\ref{dataset_3} (see \textit{\url{Appendix}}). The domains include \textbf{Clipart} ({\BV{clp}}, see \textit{\url{Appendix}}, Figure~\ref{fig_clipart}): collection of clipart images; \textbf{Infograph} ({\BV{inf}}, see Figure~\ref{fig_infograph}): infographic images with specific object; \textbf{Painting} ({\BV{pnt}}, see Figure~\ref{fig_painting}): artistic depictions of objects in the form of paintings; \textbf{Quickdraw} ({\BV{qdr}}, see Figure~\ref{fig_quickdraw}): drawings of the worldwide players of game ``Quick Draw!"\footnote{https://quickdraw.withgoogle.com/data}; \textbf{Real} ({\BV{rel}}, see Figure~\ref{fig_real}): photos and real world images; and \textbf{Sketch} ({\BV{skt}}, see Figure~\ref{fig_sketch}): sketches of specific objects.

The images from \textit{clipart}, \textit{infograph}, \textit{painting}, \textit{real}, and \textit{sketch} domains are collected by searching a category name combined with a domain name (\eg ``aeroplane painting'') in different image search engines. One of the main challenges is that the downloaded data contain a large portion of outliers. To clean the dataset, we hire 20 annotators to manually filter out the outliers. This process took around 2,500 hours (more than 2 weeks) in total. To control the annotation quality,  we assign two annotators to each image, and only take the images agreed by both annotators. After the filtering process, we keep 423.5k images from the 1.2 million images crawled from the web. The dataset has an average of around 150 images per category for \textit{clipart} and \textit{infograph} domain, around 220 per category for \textit{painting} and \textit{sketch} domain, and around 510 for \textit{real} domain. A statistical overview of the dataset is shown in Figure~\ref{fig_number1}.

The \textit{quickdraw} domain is downloaded directly from \url{https://quickdraw.withgoogle.com/}. The raw data are presented as a series of discrete points with temporal information. We use the B-spline~\cite{de1978practical} algorithm to connect all the points in each strike to get a complete drawing. We choose 500 images for each category to form the \textit{quickdraw} domain, which contains 172.5k images in total.

\vspace{-0.2cm}

\section{Moment Matching for Multi-Source  DA}
\vspace{-0.1cm}
\label{sec:theory}

\label{sec:method}

Given $\gD_{S} = \{\gD_1, \gD_2, ... , \gD_N\}$ the collection of labeled source domains and $\gD_T$ the unlabeled target domain, where all domains are defined by bounded rational measures on input space $\gX$, the multi-source domain adaptation problem aims to find a hypothesis in the given hypothesis space $\gH$, which minimizes the testing target error on $\gD_T$.

\vspace{-0.1cm}
\begin{definition}
\label{def_md}
Assume $\rmX_1$, $\rmX_2$ $, ...,$ $\rmX_N$, $\rmX_T$ are collections of i.i.d. samples from $\gD_1, \gD_2, ..., \gD_N, \gD_T$ respectively, then the Moment Distance between $\gD_{S}$ and $\gD_T$ is defined as
\begin{eqnarray}
\label{eqn:MD2}
    MD^2(\gD_S, \gD_T) = \sum_{k=1}^{2}\Bigg( \frac{1}{N}\sum_{i=1}^{N} \|\E(\rmX_i^k) - \E(\rmX_T^k) \|_2 \nonumber\\ 
       + \binom{N}{2}^{-1}\sum_{i=1}^{N-1} \sum_{j=i+1}^{N}\| \E(\rmX_i^k) - \E(\rmX_j^k)  \|_2 \Bigg).
\end{eqnarray}
\end{definition}

\vspace{-0.2cm}
\noindent \textbf{\Name}We propose a moment-matching model for MSDA based on deep neural networks. As shown in Figure~\ref{fig_over_view}, our model comprises of a feature extractor $G$, a moment-matching component, and a set of $N$ classifiers $\gC = \{C_1, C_2, ..., C_N\}$. The feature extractor $G$ maps $\gD_S$, $\gD_T$ to a common latent feature space. The moment matching component  minimizes the moment-related distance defined in Equation~\ref{eqn:MD2}. The $N$ classifiers are trained on the annotated source domains with cross-entropy loss. The overall objective function is:
\begin{equation}
\label{equ_objective}
    \min_{G,\gC}\sum_{i=1}^{N}\gL_{\gD_i} + \lambda \min_{G}MD^2(\gD_S,\gD_T),
\end{equation}
where $\gL_{\gD_i}$ is the softmax cross entropy loss for the classifier $C_i$ on domain $\gD_i$, and $\lambda$ is the trade-off parameter. 

\Name assumes that $p(y|x)$ will be aligned automatically when aligning $p(x)$, which might not hold in practice. To mitigate this limitation, we further propose \NameBNoSpace.

\vspace{0.4em}
\noindent \textbf{\NameB}In order to align $p(y|x)$ and $p(x)$ at the same time, we follow the training paradigm proposed by~\cite{MCD}. In particular, we leverage two classifiers per domain to form $N$ pairs of classifiers $\gC\textprime = \big\{(C_1,C_1\textprime),(C_2,C_2\textprime), ..., (C_N,C_N\textprime)\big\}$. The training procedure includes three steps. \textbf{i).} We train $G$ and $\gC\textprime$ to classify the multi-source samples correctly. The objective is similar to Equation~\ref{equ_objective}. \textbf{ii).} We then train the classifier pairs for a fixed $G$. The goal is to make the discrepancy of each pair of classifiers as large as possible on the target domain. For example, the outputs of $C_1$ and $C_1\textprime$ should possess a large discrepancy. Following~\cite{MCD}, we define the discrepancy of two classifiers as the L1-distance between the outputs of the two classifiers. The objective is:
\begin{equation}
\label{equ_objective_step2}
     \min_{\gC\textprime}\sum_{i=1}^{N}\gL_{\gD_i} -  \sum_{i}^N |P_{C_i}(D_T)-P_{C_i\textprime}(D_T)|,
\end{equation}
\noindent where $P_{C_i}(D_T)$, $P_{C_i\textprime}(D_T)$ denote the outputs of $C_i$, $C_i\textprime$ respectively on the target domain.  \textbf{iii).} Finally, we fix $\gC\textprime$ and train $G$ to minimize the discrepancy of each classifier pair on the target domain. The objective function is as follows:
\begin{equation}
\label{equ_objective_step3}
     \min_{G} \sum_{i}^N |P_{C_i}(D_T)-P_{C_i\textprime}(D_T)|,
\end{equation}
These three training steps are performed periodically until the whole network converges. 

\noindent \textbf{Ensemble Schema} In the testing phase, testing data from the target domain are forwarded through the feature generator and the $N$ classifiers. We propose two schemas to combine the outputs of the classifiers: 
\begin{itemize}
    \item average the outputs of the classifiers, marked as \NameNoSpace$^{\ast}$
    \item Derive a weight vector $\gW = (w_1,\ldots,w_{N-1})$ ($\sum_{i=1}^{N-1} w_i=1$, assuming $N$-th domain is the target). The final prediction is the weighted average of the outputs.
\end{itemize}
 To this end, how to derive the weight vector becomes a critical problem. The main philosophy of the weight vector is to make it represent the intrinsic closeness between the target domain and source domains. In our setting, the weighted vector is derived by the source-only accuracy between the $i$-th domain and the $N$-th domain, \textit{i.e.} $w_i = acc_i/\sum_{j=1}^{N-1}acc_j$.

\subsection{Theoretical Insight}
Following~\cite{ben2010theory}, we introduce a rigorous model of multi-source domain adaptation for binary classification. A domain $\gD=(\mu, f)$ is defined by a probability measure (distribution) $\mu$ on the input space $\gX$ and a labeling function $f:\gX\to\{0,1\}$.
A hypothesis is a function $h:\gX\to\{0,1\}$.
The probability that $h$ disagrees with the domain labeling function $f$ under the domain distribution $\mu$ is defined as:
\begin{equation}
\label{eq_def_error}
    \epsilon_{\gD}(h)=\epsilon_{\gD}(h,f)=\E_{\mu}\big[|h(\vx)-f(\vx)|\big].
\end{equation}

For a source domain $\gD_S$ and a target domain $\gD_T$, we refer to the source error and the target error of a hypothesis $h$ as $\epsilon_S(h)=\epsilon_{\gD_S}(h)$ and $\epsilon_T(h)=\epsilon_{\gD_T}(h)$ respectively.
When the expectation in Equation~\ref{eq_def_error} is computed with respect to an empirical distribution, we denote the corresponding empirical error by $\hat{\epsilon}_{\gD}(h)$, such as $\hat{\epsilon}_S(h)$ and $\hat{\epsilon}_T(h)$. In particular, we examine algorithms that minimize convex combinations of source errors, i.e., given a weight vector $\valpha=(\alpha_1,\ldots,\alpha_N)$ with $\sum_{j=1}^N\alpha_j=1$, we define the $\valpha$-weighted source error of a hypothesis $h$ as $\epsilon_{\valpha}(h)=\sum_{j=1}^N\alpha_j\epsilon_j(h),$
where $\epsilon_j(h)$ is the shorthand of $\epsilon_{\gD_j}(h)$. The empirical $\valpha$-weighted source error can be defined analogously and denoted by $\hat{\epsilon}_{\valpha}(h)$. 

Previous theoretical bounds~\cite{ben2010theory,NIPS2018_8046,Zhao2017MultipleSD} on the target error are based on the $\mathcal{H}\Delta\mathcal{H}$-divergence between the source and target domains. While providing theoretical insights for general multi-source domain adaptation, these $\mathcal{H}\Delta\mathcal{H}$-divergence based bounds do not directly motivate moment-based approaches. In order to provide a specific insight for moment-based approaches, we introduce the $k$-th order cross-moment divergence between domains, denoted by $d_{CM^{k}}(\cdot,\cdot)$, and extend the analysis in~\cite{ben2010theory} to derive the following moment-based bound for multi-source domain adaptation. See \textit{\url{Appendix}} for the definition of the cross-moment divergence and the proof of the theorem.



\begin{table*}[t]
\centering
\footnotesize
{
\begin{tabular}{l|c| c| c |c |c |c|c}
\multirow{2}{1cm}{Standards} &
\multirow{2}{1cm}{Models} & 
\multirow{2}{1.4cm}{{\BV{mt,up,sv,sy $\rightarrow$ mm}} } &  
\multirow{2}{1.4cm}{{\BV{mm,up,sv,sy $\rightarrow$ mt}} } & 
\multirow{2}{1.5cm}{{\BV{mm,mt,sv,sy $\rightarrow$ up}} }& 
\multirow{2}{1.6cm}{{\BV{mm,mt,up,sy $\rightarrow$ sv}} }&    
\multirow{2}{1.6cm}{{\BV{mm,mt,up,sv $\rightarrow$ sy}} } & 
\multirow{2}{1cm}{{\BU{Avg}}} \\ 
&&&&&&& \\
 \Xhline{0.7pt} 
\multirow{3}{*}{ \begin{tabular}[c]{@{}c@{}}Source\\Combine\end{tabular} } 
                                    &Source Only & 63.70$\pm$0.83 & 92.30$\pm$0.91 & 90.71$\pm$0.54 & 71.51$\pm$0.75 & 83.44$\pm$0.79 &\BU{80.33}$\pm$0.76\\
                                     &DAN~\cite{long2015} & 67.87$\pm$0.75 & 97.50$\pm$ 0.62  & 93.49$\pm$0.85 & 67.80$\pm$0.84 & 86.93$\pm$0.93 & \BU{82.72}$\pm$ 0.79 \\
                                      &DANN~\cite{DANN} & 70.81$\pm$0.94 & 97.90$\pm$0.83 & 93.47$\pm$0.79 & 68.50$\pm$0.85 & 87.37$\pm$0.68 & \BU{83.61}$\pm$0.82\\
                                     
\Xhline{0.7pt}
\multirow{9}{*}{ \begin{tabular}[c]{@{}c@{}}Multi-\\Source\end{tabular} } 
                                      &Source Only & 63.37$\pm$0.74 & 90.50$\pm$0.83 & 88.71$\pm$0.89 & 63.54$\pm$0.93 & 82.44$\pm$0.65 &\BU{77.71}$\pm$0.81\\
                                      &DAN~\cite{long2015} & 63.78$\pm$0.71 & 96.31$\pm$0.54 & 94.24$\pm$0.87 & 62.45$\pm$0.72 & 85.43$\pm$0.77 & \BU{80.44}$\pm$0.72\\
                                      &CORAL~\cite{sun2015return} & 62.53$\pm$0.69 & 97.21$\pm$0.83 & 93.45$\pm$0.82 & 64.40$\pm$0.72 & 82.77$\pm$0.69 & \BU{80.07}$\pm$0.75 \\
                                      &DANN~\cite{DANN} & 71.30$\pm$0.56 & 97.60$\pm$0.75 & 92.33$\pm$0.85 & 63.48$\pm$0.79 & 85.34$\pm$0.84 & \BU{82.01}$\pm$0.76\\
                                      &JAN~\cite{JAN} & 65.88$\pm$0.68 & 97.21$\pm$0.73 & 95.42$\pm$0.77 & 75.27$\pm$0.71 & 86.55$\pm$0.64 &84.07$\pm$0.71  \\ 
                                      &ADDA~\cite{adda} & 71.57$\pm$ 0.52 & 97.89$\pm$0.84 & 92.83$\pm$0.74 & 75.48$\pm$0.48 & 86.45$\pm$0.62 & \BU{84.84}$\pm$0.64\\
                                      
                                      & DCTN~\cite{xu2018deep} & 70.53$\pm$1.24 & 96.23$\pm$0.82 & 92.81$\pm$0.27 & 77.61$\pm$0.41 & 86.77$\pm$0.78 & \BU{84.79}$\pm$0.72\\
                                      &MEDA~\cite{meda} & 71.31$\pm$0.75 & 96.47$\pm$0.78 & 97.01$\pm$0.82 & 78.45$\pm$0.77 & 84.62$\pm$0.79 & 85.60$\pm$ 0.78\\  
                                      &MCD~\cite{MCD} & 72.50$\pm$0.67 & 96.21$\pm$0.81 & 95.33$\pm$0.74 & 78.89$\pm$0.78 & 87.47$\pm$0.65 & 86.10$\pm$0.73\\
                                      &\textbf{ \Name} (ours)& 69.76$\pm$0.86 & \redbold{98.58}$\pm$0.47 & 95.23$\pm$0.79 & 78.56$\pm$0.95 & 87.56$\pm$0.53 & \BU{86.13}$\pm$0.64 \\
                                      & \textbf{\NameB} (ours)& \redbold{72.82}$\pm$1.13 & 98.43$\pm$0.68 &\redbold{ 96.14}$\pm$0.81 & \redbold{81.32}$\pm$0.86 & \redbold{89.58}$\pm$0.56 & \redbold{87.65}$\pm$ 0.75 \\
\end{tabular}
} 
\vspace{0.1cm}
\caption{
\textbf{Digits Classification Results.} \textbf{mt}, \textbf{up}, 
\textbf{sv}, \textbf{sy}, \textbf{mm} are abbreviations for \textit{MNIST}, \textit{USPS}, \textit{SVHN}, 
\textit{Synthetic Digits}, \textit{MNIST-M}, respectively. Our model 
\Name and \NameB achieve \textbf{86.13\%} and \textbf{87.65\%} accuracy, outperforming other baselines by a large margin.
} 
\label{tab_digit_five}
\end{table*}

\begin{restatable}{theorem}{pairwisebound}
\label{thm:pairwise_bound}
Let $\gH$ be a hypothesis space of $VC$ dimension $d.$ Let $m$ be the size of labeled samples from all sources $\{\gD_1, \gD_2, ... , \gD_N\}$, $S_j$ be the labeled sample set of size $\beta_jm$ ($\sum_j\beta_j=1$) drawn from $\mu_j$ and labeled by the groundtruth labeling function $f_j.$ If $\hat{h}\in\gH$ is the empirical minimizer of $\hat{\epsilon}_{\valpha}(h)$ for a fixed weight vector $\valpha$ and $h^*_T=\min_{h\in\gH}\epsilon_T(h)$ is the target error minimizer, then for any $\delta\in(0,1)$ and any $\epsilon>0$, there exist $N$ integers $\{n^j_\epsilon\}_{j=1}^N$ and $N$ constants $\{a_{n^j_\epsilon}\}_{j=1}^N$, such that with probability at least $1-\delta,$

\begin{equation}
\label{eqn:pair_bound}
\begin{split}
\epsilon_T(\hat{h})\leq& \ \epsilon_T(h^*_T)+\eta_{\valpha,\vbeta,m,\delta}
	+ \epsilon \\
	&+ \sum_{j=1}^N\alpha_j\bigg(2\lambda_j+a_{n^j_\epsilon}\sum_{k=1}^{n^j_\epsilon}d_{CM^{k}}\big(\gD_j, \gD_T\big)\bigg),
\end{split}
\end{equation}
where $\eta_{\valpha,\vbeta,m,\delta} = 4\sqrt{\Big(\sum_{j=1}^{N} \frac {\alpha_{j}^2} {\beta_{j}} \Big) \Big(\frac {2d(\log(\frac{2m}{d})+1)+2\log(\frac{4}{\delta})} {m}\Big) }$
and $\lambda_j=\min_{h\in\gH}\{\epsilon_T(h)+\epsilon_j(h)\}.$
\end{restatable}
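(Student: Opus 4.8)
The plan is to follow the structure of the multi-source bound of Ben-David et al.~\cite{ben2010theory} but to replace its $\gH\Delta\gH$-divergence term by a moment-based surrogate; three ingredients are then chained together. The first is a purely deterministic ``relabeling'' step. Using that $\epsilon_{\gD}(\cdot,\cdot)$ is a pseudometric and choosing, for each $j$, a joint minimizer $h^*_j=\argmin_{h\in\gH}\{\epsilon_T(h)+\epsilon_j(h)\}$ realizing $\lambda_j$, the triangle inequality gives $\epsilon_T(h)\le\epsilon_j(h)+\lambda_j+\rho_j$ for every $h$, where $\rho_j:=\sup_{g\in\gH\Delta\gH}\big|\E_{\mu_j}[g]-\E_{\mu_T}[g]\big|$; the same argument run in the other direction gives $\epsilon_j(h)\le\epsilon_T(h)+\lambda_j+\rho_j$. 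Averaging the first inequality with the weights $\valpha$ yields $\epsilon_T(h)\le\epsilon_{\valpha}(h)+\sum_{j=1}^N\alpha_j(\lambda_j+\rho_j)$.

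The second, and main, ingredient is to bound each $\rho_j$ by cross-moments. The disagreement indicators $g=\mathbf{1}[h\neq h']$ with $h,h'\in\gH$ are $[0,1]$-valued functions on the input space $\gX$, so by a Stone--Weierstrass argument, for the given $\epsilon>0$ there is a degree $n^j_\epsilon$ such that every such $g$ is within sup-norm $\epsilon$ of a polynomial $p_g$ of degree $n^j_\epsilon$; writing $p_g$ in monomials, the constant term cancels in $\E_{\mu_j}[p_g]-\E_{\mu_T}[p_g]$, so $|\E_{\mu_j}[p_g]-\E_{\mu_T}[p_g]|\le a_{n^j_\epsilon}\sum_{k=1}^{n^j_\epsilon}d_{CM^{k}}(\gD_j,\gD_T)$ with $a_{n^j_\epsilon}$ bounding the monomial-coefficient norms, and hence $\rho_j\le a_{n^j_\epsilon}\sum_{k=1}^{n^j_\epsilon}d_{CM^{k}}(\gD_j,\gD_T)+2\epsilon$. (The appendix carries out the precise dependence of $n^j_\epsilon$ and $a_{n^j_\epsilon}$ on $\epsilon$ and on the definition/normalization of $d_{CM^{k}}$, and the uniformity over $g\in\gH\Delta\gH$.)

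The third ingredient is the standard VC uniform-convergence bound for convex combinations of empirical errors: with probability at least $1-\delta$, $|\hat{\epsilon}_{\valpha}(h)-\epsilon_{\valpha}(h)|\le\tfrac12\eta_{\valpha,\vbeta,m,\delta}$ for all $h\in\gH$ simultaneously, obtained from a Hoeffding bound on the weighted sum of source empirical errors (whose effective variance is $\propto\sum_j\alpha_j^2/\beta_j$) together with the VC growth function, exactly as in~\cite{ben2010theory}; this supplies the $\eta_{\valpha,\vbeta,m,\delta}$ term, including the $\log(4/\delta)$ inside it. Assembling: apply the relabeling bound to $\hat h$, then $\epsilon_{\valpha}(\hat h)\le\hat{\epsilon}_{\valpha}(\hat h)+\tfrac12\eta\le\hat{\epsilon}_{\valpha}(h^*_T)+\tfrac12\eta\le\epsilon_{\valpha}(h^*_T)+\eta$ by uniform convergence and the empirical optimality of $\hat h$, and finally bound $\epsilon_{\valpha}(h^*_T)=\sum_j\alpha_j\epsilon_j(h^*_T)\le\epsilon_T(h^*_T)+\sum_j\alpha_j(\lambda_j+\rho_j)$ using the reverse relabeling inequality. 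Chaining these and substituting the bound on $\rho_j$ produces \eqref{eqn:pair_bound}, the two accumulated copies of the $\lambda_j$ and discrepancy terms merging into the stated $2\lambda_j+a_{n^j_\epsilon}\sum_k d_{CM^{k}}$ (the extra factor on the moment term absorbed into $a_{n^j_\epsilon}$, the accumulated approximation error into a rescaling of $\epsilon$).

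I expect the genuine difficulty to lie entirely in the second ingredient: making the polynomial approximation uniform over the disagreement class $\gH\Delta\gH$ so that a single pair $(n^j_\epsilon,a_{n^j_\epsilon})$ suffices, and keeping $a_{n^j_\epsilon}$ finite, which forces either a compactness/boundedness assumption on $\gX$ or a smoothness/tail hypothesis, and then matching the resulting coefficient sums to the normalization chosen for $d_{CM^{k}}$ in the appendix. The relabeling bookkeeping and the VC term are routine adaptations of~\cite{ben2010theory}.
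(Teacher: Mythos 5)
Your overall architecture matches the paper's proof: the triangle-inequality decomposition through $h^*_j=\argmin_{h\in\gH}\{\epsilon_T(h)+\epsilon_j(h)\}$ giving $\lambda_j$ plus a disagreement-discrepancy term (the paper's Equation~\ref{eq:intermediate}), the weighted Hoeffding bound combined with VC uniform convergence supplying $\eta_{\valpha,\vbeta,m,\delta}$ (the paper's Lemma~\ref{lem:ben2010}), and the final chaining through $\hat{\epsilon}_{\valpha}$ with factors of two absorbed into $a_{n^j_\epsilon}$ and a rescaled $\epsilon$. The genuine gap is in the one step you yourself identify as the crux. You assert that, by a Stone--Weierstrass argument, every disagreement function $g=\1_{h(\vx)\neq h'(\vx)}$ is within sup-norm $\epsilon$ of a polynomial of a fixed degree $n^j_\epsilon$. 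This is false as stated: these indicators are discontinuous $\{0,1\}$-valued functions, and a uniform limit of polynomials on $\gX$ is continuous, so no sup-norm polynomial approximation of such a $g$ exists at all, let alone one with a single degree valid uniformly over the whole disagreement class. Stone--Weierstrass can only enter after the discontinuity has been removed.

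The paper's Lemma~\ref{lem:keymoment} repairs exactly this point by a two-step approximation: first bound $|\epsilon_j(h,h')-\epsilon_T(h,h')|$ by the supremum over pairs, then approximate the relevant (near-extremal) indicator in the $L^1$ sense --- using that $\gC_c(\gX)$ is dense in $L^1$ on the compact $\gX$, with the approximation taken with respect to the measures $\mu_j,\mu_T$ --- by a continuous $f$, and only then apply the Weierstrass theorem (Theorem~\ref{lem:weierstrass}) to $f$; expanding the resulting polynomial in monomials yields the bound $\frac{1}{2}a_{n^j_\epsilon}\sum_{k=1}^{n^j_\epsilon}d_{CM^{k}}(\gD_j,\gD_T)+\epsilon$. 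Note also that the class-uniformity you expect to be the main difficulty is sidestepped rather than solved: since the quantity being bounded is a supremum over $h,h'$ and the integers $n^j_\epsilon$ and constants $a_{n^j_\epsilon}$ are existentially quantified (they may depend on $\epsilon$ and on the pair of domains), it suffices to approximate a single near-maximizing pair, and no polynomial approximation uniform over $\gH\Delta\gH$ is needed. The compactness of $\gX$ you flag is indeed assumed --- it is built into Definition~\ref{def:cmomentdiv} and into the Weierstrass step --- and your observation that the constant term cancels is consistent with the paper's polynomials starting at $k=1$. With the $L^1$-then-Weierstrass substitution in place of your sup-norm claim, the remainder of your assembly goes through essentially as in the paper.
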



Theorem~\ref{thm:pairwise_bound} shows that the upper bound on the target error of the learned hypothesis depends on the pairwise moment divergence $d_{CM^{k}}\big(\gD_S,\gD_T\big)$ between the target domain and each source domain.\footnote{Note that single source is just a special case when $N=1$.} 
This provides a direct motivation for moment matching approaches beyond ours. In particular, it motivates our multi-source domain adaptation approach to align the moments between each target-source pair. 
Moreover, it is obvious that the last term of the bound, $\sum_{k}d_{CM^{k}}\big(\gD_j, \gD_T\big)$, is lower bounded by the pairwise divergences between source domains. To see this, consider the toy example consisting of two sources $\gD_1,\gD_2$, and a target $\gD_T,$ since $d_{CM^{k}}(\cdot,\cdot)$ is a metric, triangle inequality implies the following lower bound:
$$d_{CM^{k}}\big(\gD_1, \gD_T\big) + d_{CM^{k}}\big(\gD_2, \gD_T\big)
\geq d_{CM^{k}}\big(\gD_1, \gD_2\big).$$
This motivates our algorithm to also align the moments between each pair of source domains. 
Intuitively, it is not possible to perfectly align the
target domain with every source domain, if the source
domains are not aligned themselves.
Further discussions of Theorem~\ref{thm:pairwise_bound} and its relationship with our algorithm are provided in the \textit{\url{Appendix}}.

\section{Experiments}
\label{sec_exp_all}


We perform an extensive evaluation on the following tasks: digit classification (\textit{MNIST, SVHN, USPS, MNIST-M, Sythetic Digits}), and image recognition (\textit{Office-Caltech10}, \DATASETNAME dataset). In total, we conduct 714 experiments. The experiments are run on a GPU-cluster with 24 GPUs and the total running time is more than 21,440 GPU-hours. Due to space limitations, we only report major results; more implementation details are provided in the supplementary material. Throughout the experiments, we set the trade-off parameter $\lambda$ in Equation ~\ref{equ_objective} as 0.5. In terms of the parameter sensitivity, we have observed that the performance variation is not significant if $\lambda$ is between 0.1$\sim$1. All of our experiments are implemented in the PyTorch\footnote{\url{http://pytorch.org}} platform.

\begin{table*}[t]
 \vspace{0.03cm}
\setlength{\tabcolsep}{0.085em}
\renewcommand{\arraystretch}{0.8}
{\scriptsize{
\centering

\begin{tabular}{c|c c c c c c c || c | c c c c c c c || c |c c c c c c c || c | c c c c c c c }
\redbold{AlexNet} & \BV{clp} & \BV{inf} & \BV{pnt} & \BV{qdr} & \BV{rel} & \BV{skt} & \BU{Avg.} &
\redbold{DAN} & \BV{clp} & \BV{inf} & \BV{pnt} & \BV{qdr} & \BV{rel} & \BV{skt} & \BU{Avg.} &
\redbold{JAN} & \BV{clp} & \BV{inf} & \BV{pnt} & \BV{qdr} & \BV{rel} & \BV{skt} & \BU{Avg.} &
\redbold{DANN} & \BV{clp} & \BV{inf} & \BV{pnt} & \BV{qdr} & \BV{rel} & \BV{skt} & \BU{Avg.} \\
\hline

\BV{clp} & 65.5	& 8.2 &	21.4 &	10.5 &	36.1 &	10.8 &	\BU{17.4} & 
\BV{clp} & N/A    & 9.1 &	23.4 &	16.2 &	37.9 &	29.7 &	\BU{23.2}&
\BV{clp} & N/A    & 7.8 &	24.5 &	14.3 &	38.1&	25.7&	\BU{22.1}&
\BV{clp} & N/A&	 9.1 &	23.2&	13.7&	37.6&	28.6&	\BU{22.4}\\

\BV{inf}&	32.9&	27.7&	23.8&	2.2&	26.4&	13.7&	\BU{19.8}&	
\BV{inf}&	17.2&	N/A&	15.6&	4.4&	24.8&	13.5&	\BU{15.1}&	
\BV{inf}&	17.6&	N/A&	18.7   &	8.7&28.1&	15.3&	\BU{17.7}&	
\BV{inf}&	17.9&	N/A&	16.4&	2.1&	27.8&	13.3&	\BU{15.5}\\

\BV{pnt}&	28.1&	7.5&	57.6&	2.6&	41.6&	20.8&	\BU{20.1}&  	
\BV{pnt}&	29.9&	8.9&	N/A&	7.9&42.1&	26.1&	\BU{23.0}&	
\BV{pnt}&	27.5&	8.2	& N/A	&   7.1&	43.1&	23.9&	\BU{22.0}&	
\BV{pnt}&	29.1&	8.6&	N/A&	5.1&	41.5&	24.7&	\BU{21.8} \\

\BV{qdr}&	13.4&	1.2&	2.5&	68.0&	5.5& 7.1&	\BU{5.9}&
\BV{qdr}&	14.2&	1.6&	4.4&	N/A&	8.5&10.1&	\BU{7.8}&	
\BV{qdr}&	17.8&	2.2	&7.4&	N/A&	8.1&	10.9&	\BU{9.3}&	
\BV{qdr}&	16.8&	1.8&	4.8&	N/A&	9.3	&10.2&	\BU{8.6}\\

\BV{rel}&	36.9&	10.2&	33.9&	4.9&	72.8&	23.1&	\BU{21.8}&	
\BV{rel}&	37.4&	11.5&	33.3&	10.1&	N/A&	26.4&	\BU{23.7}&	
\BV{rel}&	33.5&	9.1&	32.5&	7.5	&N/A&	21.9&	\BU{20.9}	&
\BV{rel}&	36.5&	11.4&	33.9&	5.9	&N/A&	24.5&	\BU{22.4}\\

\BV{skt}&	35.5&	7.1&	21.9&	11.8&	30.8&	56.3&	\BU{21.4}&	
\BV{skt}&	39.1&	8.8	&28.2&	13.9	&36.2&	N/A&	\BU{25.2}	&
\BV{skt} &35.3&	8.2	&27.7&	13.3&	36.8&	N/A&	\BU{24.3}&	
\BV{skt} &37.9&	8.2	&26.3&	12.2&	35.3&	N/A&	\BU{24.0}\\

\BU{Avg.}&	\BU{29.4}&	\BU{6.8}&	\BU{20.7}&	\BU{6.4}&	\BU{28.1}&	\BU{15.1}&	\redbold{17.8}&
\BU{Avg.}&	\BU{27.6}&	\BU{8.0}&	\BU{21.0}&	\BU{10.5}&	\BU{29.9}&	\BU{21.2}&	\redbold{19.7}&
\BU{Avg.}&	\BU{26.3}&	\BU{7.1}&	\BU{22.2}&	\BU{10.2}&	\BU{30.8}&	\BU{19.5}&	\redbold{19.4}&
\BU{Avg.}&	\BU{27.6}&	\BU{7.8}&	\BU{20.9}&	\BU{7.8}&	\BU{30.3}&	\BU{20.3}&	\redbold{19.1}\\
\hline\hline

\redbold{RTN} & \BV{clp} & \BV{inf} & \BV{pnt} & \BV{qdr} & \BV{rel} & \BV{skt} & \BU{Avg.} &
\redbold{ADDA} & \BV{clp} & \BV{inf} & \BV{pnt} & \BV{qdr} & \BV{rel} & \BV{skt} & \BU{Avg.} &
\redbold{MCD} & \BV{clp} & \BV{inf} & \BV{pnt} & \BV{qdr} & \BV{rel} & \BV{skt} & \BU{Avg.} &
\redbold{SE} & \BV{clp} & \BV{inf} & \BV{pnt} & \BV{qdr} & \BV{rel} & \BV{skt} & \BU{Avg.} \\

\hline

\BV{clp} & N/A	& 8.1 &	21.1 &	13.1 &	36.1 &	26.5 &	\BU{21.0} & 
\BV{clp} & N/A    & 11.2 &	24.1  &	3.2 &	41.9 &	30.7 &	\BU{22.2}&
\BV{clp} & N/A    & 14.2&	26.1 &	1.6 &	45.0&	33.8&	\BU{24.1}&
\BV{clp} & N/A&	 9.7 &	12.2&	2.2&	33.4&	23.1&	\BU{16.1}\\

\BV{inf}&	15.6&	N/A&	15.3&	3.4&	25.1&	12.8&	\BU{14.4}&	
\BV{inf}&	19.1&	N/A&	16.4&	3.2&	26.9&	14.6&	\BU{16.0}&	
\BV{inf}&	23.6&	N/A&	21.2&	1.5&   36.7&	18.0&	\BU{20.2}&	
\BV{inf}&	10.3&	N/A&	9.6&	1.2&	13.1&	6.9&	\BU{8.2}\\

\BV{pnt}&	26.8&	8.1&	N/A&	5.2&	40.6&	22.6&	\BU{20.7}&  	
\BV{pnt}&	31.2&	9.5&	N/A&	8.4&    39.1&	25.4&	\BU{22.7}&	
\BV{pnt}&	34.4&	14.8	& N/A	&1.9&	50.5&	28.4&	\BU{26.0}&	
\BV{pnt}&	17.1&	9.4&	N/A&	2.1&	28.4&	15.9&	\BU{14.6} \\

\BV{qdr}&	15.1&	1.8&	4.5&	N/A&	8.5& 8.9&	\BU{7.8}&
\BV{qdr}&	15.7&	2.6&	5.4&	N/A&	9.9& 11.9&	\BU{9.1}&	
\BV{qdr}&	15.0&	3.0	&7.0&	N/A&	11.5&	10.2&	\BU{9.3}&	
\BV{qdr}&	13.6&	3.9&	11.6&	N/A&	16.4	&11.5&	\BU{11.4}\\

\BV{rel}&	35.3&	10.7&	31.7&	7.5&	N/A&	22.9&	\BU{21.6}&	
\BV{rel}&	39.5&	14.5&	29.1&	12.1&	N/A&	25.7&	\BU{24.2}&	
\BV{rel}&	42.6&	19.6&	42.6&	2.2	&N/A&	29.3&	\BU{27.2}	&
\BV{rel}&	31.7&	12.9&	19.9&	3.7	&N/A&	26.3&	\BU{18.9}\\

\BV{skt}&	34.1&	7.4&	23.3&	12.6&	32.1&	N/A&	\BU{21.9}&	
\BV{skt}&	35.3&	8.9	& 25.2&	14.9	&37.6&	N/A&	\BU{25.4}	&
\BV{skt} &41.2&	13.7	&27.6&	3.8&	34.8&	N/A&	\BU{24.2}&	
\BV{skt} &18.7&	7.8	&12.2&	7.7&	28.9&	N/A&	\BU{15.1}\\

\BU{Avg.}&	\BU{25.4}&	\BU{7.2}&	\BU{19.2}&	\BU{8.4}&	\BU{28.4}&	\BU{18.7}&	\redbold{17.9}&
\BU{Avg.}&	\BU{28.2}&	\BU{9.3}&	\BU{20.1}&	\BU{8.4}&	\BU{31.1}&	\BU{21.7}&	\redbold{19.8}&
\BU{Avg.}&	\BU{31.4}&	\BU{13.1}&	\BU{24.9}&	\BU{2.2}&	\BU{35.7}&	\BU{23.9}&	\redbold{21.9}&
\BU{Avg.}&	\BU{18.3}&	\BU{8.7}&	\BU{13.1}&	\BU{3.4}&	\BU{24.1}&	\BU{16.7}&	\redbold{14.1}\\

\end{tabular}

}}
\vspace{0.1cm}
\caption{\textbf{Single-source baselines on the \DATASETNAME dataset.} Several single-source adaptation baselines are evaluated on the \DATASETNAME dataset, including AlexNet~\cite{alexnet}, DAN~\cite{long2015}, JAN~\cite{JAN}, DANN~\cite{DANN}, RTN~\cite{RTN}, ADDA~\cite{adda}, MCD~\cite{MCD}, SE~\cite{SE}. In each sub-table, the column-wise domains are selected as the source domain and the row-wise domains are selected as the target domain. The green numbers represent the average performance of each column or row. The red numbers denote the average accuracy for all the 30 (source, target) combinations. }
\label{tab_challengeI}
\vspace{-0.2cm}
\end{table*}
\begin{table}[t]
	\center
	\begin{footnotesize}
		\begin{tabular}{p{1.1cm}|c|p{0.45cm}p{0.45cm}p{0.45cm}p{0.45cm}|p{0.45cm}}
			\multirow{2}{0cm}{Standards} &\multirow{2}{0cm}{Models} &\multirow{2}{0cm}{\begin{tiny}A,C,D\\$\rightarrow$W\end{tiny}} &\multirow{2}{0.1cm}{\begin{tiny}A,C,W\\$\rightarrow$D\end{tiny}} &\multirow{2}{0cm}{\begin{tiny}A,D,W\\$\rightarrow$C\end{tiny}} &\multirow{2}{0cm}{\begin{tiny}C,D,W\\$\rightarrow$A\end{tiny}} &\multirow{2}{0cm}{Avg} \\ 
			&&&&&& \\
			\hline
			\multirow{2}{0.3cm}{Source Combine}	&Source only& 99.0	& 98.3& 87.8	& 86.1	  &	92.8	\\
			&DAN~\cite{long2015} 		& 99.3	& 98.2 & 89.7	& {
			\textbf{94.8}}	  &	95.5	\\
			\hline				
			\multirow{5}{0.3cm}{Multi-Source}
			&Source only & 99.1  & 98.2 & 85.4 &88.7 & 92.9 \\				
			&DAN~\cite{long2015}	&99.5	&	99.1	& 89.2	& 91.6 	& 94.8	\\
			&DCTN~\cite{xu2018deep} & 99.4 & 99.0 & 90.2 & 92.7 & 95.3\\
			&JAN~\cite{JAN}	&99.4	&	\textbf{99.4}	& 91.2	& 91.8 	& 95.5\\
			&MEDA~\cite{meda} & 99.3 & 99.2 & 91.4 & 92.9 & 95.7 \\
			&MCD~\cite{MCD} & 99.5 & 99.1 & 91.5 & 92.1 & 95.6 \\
			
			&\textbf{\Name}(ours)	&99.4		&99.2	& 91.5	&94.1 &	96.1	\\
			&\textbf{\NameB}(ours)	&	\textbf{99.5}	& 99.2	&  \textbf{92.2}	& 	94.5	 & 	\textbf{96.4}	\\
		\end{tabular}
	\end{footnotesize}
\caption{\textbf{Results on Office-Caltech10 dataset}. A,C,W and D represent \textit{Amazon}, \textit{Caltech}, \textit{Webcam} and \textit{DSLR}, respectively. All the experiments are based on ResNet-101 pre-trained on ImageNet.}
\label{tab_office_caltech}
 \vspace{-0.4cm}
\end{table}

\subsection{Experiments on Digit Recognition}
Five digit datasets are sampled from five different sources, namely \textit{MNIST}~\cite{lecun1998gradient}, \textit{Synthetic Digits}~\cite{DANN}, \textit{MNIST-M}~\cite{DANN}, \textit{SVHN}, and \textit{USPS}. Following \textit{DCTN}~\cite{xu2018deep}, we sample 25000 images from training subset and 9000 from testing subset in \textit{MNIST, MINST-M, SVHN}, and \textit{Synthetic Digits}. \textit{USPS} dataset contains only 9298 images in total, so we take the entire dataset as a domain. In all of our experiments, we take turns to set one domain as the target domain and the rest as the source domains.

We take four state-of-the-art discrepancy-based approaches: Deep Adaptation Network~\cite{long2015} (\textbf{DAN}), Joint Adaptation Network (\textbf{JAN}), Manifold Embedded Distribution Alignment (\textbf{MEDA}), and Correlation Alignment~\cite{sun2015return} (\textbf{CORAL}), and four adversarial-based approaches: Domain Adversarial Neural Network~\cite{DANN} (\textbf{DANN}), Adversarial Discriminative Domain Adaptation~\cite{adda} (\textbf{ADDA}), Maximum Classifier Discrepancy (\textbf{MCD}) and Deep Cocktail Network~\cite{xu2018deep} (\textbf{DCTN}) as our baselines. In the \textit{source combine} setting, all the source domains are combined to a single domain, and the baseline experiments are conducted in a traditional single domain adaptation manner.

The results are shown in Table~\ref{tab_digit_five}. Our model \Name achieves an \textbf{86.13\%} average accuracy, and \NameB boosts the performance to \textbf{87.65\%}, outperforming other baselines by a large margin. One interesting observation is that the results on MNIST-M dataset is lower. This phenomenon is probably due to the presence of \textit{negative transfer}~\cite{pan2010survey}. For a fair comparison, all the experiments are based on the same network architecture. For each experiment, we run the same setting for five times and report the mean and standard deviation. (See \textit{\url{Appendix}} for detailed experiment settings and analyses.)

\subsection{Experiments on Office-Caltech10}

The Office-Caltech10~\cite{gong2012geodesic} dataset is extended from the standard Office31~\cite{office} dataset. It consists of the same 10 object categories from 4 different domains: \textit{Amazon}, \textit{Caltech}, \textit{DSLR}, and \textit{Webcam}.

The experimental results on Office-Caltech10 dataset are shown in Table~\ref{tab_office_caltech}. Our model \Name gets a 96.1\% average accuracy on this dataset, and \NameB further boosts the performance to \textbf{96.4}\%. All the experiments are based on ResNet-101 pre-trained on ImageNet. As far as we know, our models achieve the best performance among all the results ever reported on this dataset. We have also tried AlexNet, but it did not work as well as ResNet-101.

\subsection{Experiments on \DATASETNAME}

\noindent \textbf{Single-Source Adaptation} To demonstrate the intrinsic difficulty of \DATASETNAMENoSpace, we evaluate multiple state-of-the-art algorithms for single-source adaptation: Deep Alignment Network (\textbf{DAN})~\cite{long2015}, Joint Adaptation Network (\textbf{JAN})~\cite{JAN}, Domain Adversarial Neural Network (\textbf{DANN})~\cite{DANN}, Residual Transfer Network (\textbf{RTN})~\cite{RTN}, Adversarial Deep Domain Adaptation (\textbf{ADDA})~\cite{adda}, Maximum Classifier Discrepancy (\textbf{MCD})~\cite{MCD}, and Self-Ensembling (\textbf{SE})~\cite{SE}. As the \DATASETNAME dataset contains 6 domains, experiments for 30 different (sources, target) combinations are performed for each baseline. For each domain, we follow a 70\%/30\% split scheme to participate our dataset into training and testing trunk. The detailed statistics can be viewed in Table~\ref{tab_split_detail} (see \textit{\url{Appendix}}). All other experimental settings (neural network, learning rate, stepsize, etc.) are kept the same as in the original papers. Specifically, DAN, JAN, DANN, and RTN are based on AlexNet~\cite{alexnet}, ADDA and MCD are based on ResNet-101~\cite{he2015deep}, and SE is based on ResNet-152~\cite{he2015deep}. Table~\ref{tab_challengeI} shows all the source-only and experimental results. (Source-only results for ResNet-101 and ResNet-152 are in \textit{\url{Appendix}}, Table~\ref{tab_resnet_source_only}). The results show that our dataset is challenging, especially for the \textit{infograph} and \textit{quickdraw} domain. We argue that the difficulty is mainly introduced by the large number of categories in our dataset.

 \begin{figure}[t]
    \centering
    \includegraphics[width=0.9\linewidth]{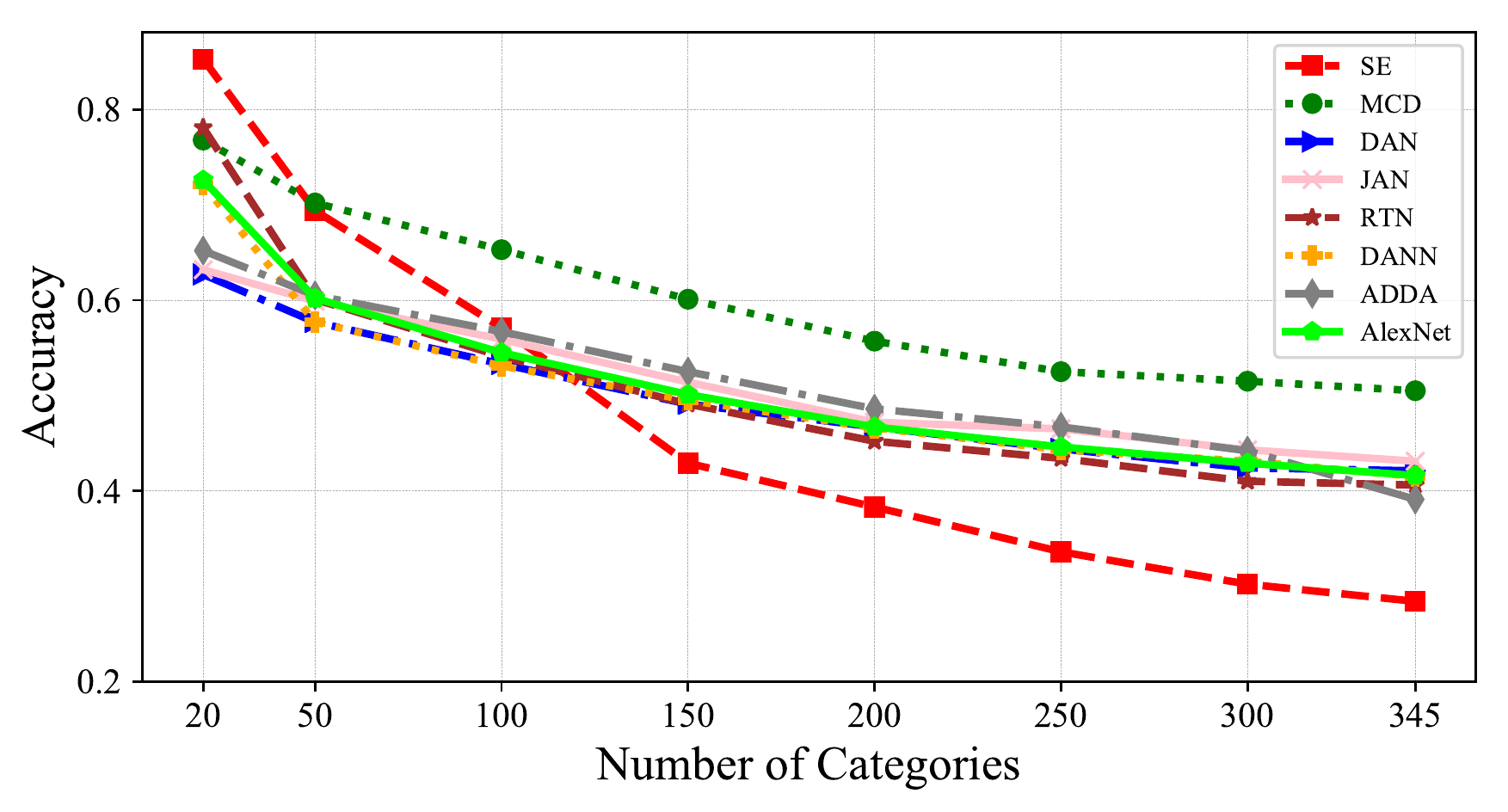}
    \caption{\textbf{Accuracy \textit{vs.} Number of categories.} This plot shows the \textit{painting}$\rightarrow$\textit{real} scenario. More plots with similar trend can be accessed in Figure ~\ref{fig_plot_acc_vs_num}  (see \textit{\url{Appendix}}).}
    \label{fig_relation_of_categories}
    \vspace{-0.45cm}
\end{figure}
 
 \vspace{.2em}
 \begin{table*}
\centering
\footnotesize
{
\begin{tabular}{l|c |c |c| c| c |c | c |c}
\multirow{2}{0.8cm}{Standards} &
\multirow{2}{0.9cm}{Models} & 
\multirow{2}{1.4cm}{\scriptsize{\BV{inf,pnt,qdr,\\rel,skt$\rightarrow$clp}} } &  
\multirow{2}{1.4cm}{\scriptsize{\BV{clp,pnt,qdr,\\rel,skt$\rightarrow$inf}} } & 
\multirow{2}{1.4cm}{\scriptsize{\BV{clp,inf,qdr,\\rel,skt$\rightarrow$pnt}} }& 
\multirow{2}{1.4cm}{\scriptsize{\BV{clp,inf,pnt,\\rel,skt$\rightarrow$qdr}} }&    
\multirow{2}{1.4cm}{\scriptsize{\BV{clp,inf,pnt,\\qdr,skt $\rightarrow$rel}} } & 
\multirow{2}{1.4cm}{\scriptsize{\BV{clp,inf,pnt,\\qdr,rel $\rightarrow$skt}} } & 
\multirow{2}{0.5cm}{{\BU{Avg}}} \\ 
&&&&&&&& \\

 \Xhline{0.7pt} 
\multirow{8}{*}{ \begin{tabular}[c]{@{}c@{}}Single\\Best\end{tabular} } 
& Source Only & 39.6$\pm$0.58 & 8.2$\pm$0.75 & 33.9 $\pm$ 0.62 & 11.8 $\pm$ 0.69 & 41.6 $\pm$ 0.84 & 23.1$\pm$0.72 & \BU{26.4 }$\pm$ 0.70\\

 &DAN~\cite{long2015} &  39.1$\pm$0.51 & 11.4$\pm$0.81 & 33.3$\pm$0.62 & \redbold{16.2}$\pm$0.38& 42.1$\pm$0.73  & 29.7$\pm$0.93 & \BU{28.6}$\pm$0.63\\
 &RTN~\cite{RTN} &  35.3$\pm$0.73 & 10.7$\pm$0.61 & 31.7$\pm$0.82 & 13.1$\pm$0.68& 40.6$\pm$0.55  & 26.5$\pm$0.78 & \BU{26.3}$\pm$0.70\\
  &JAN~\cite{JAN} &  35.3$\pm$0.71 & 9.1$\pm$0.63 & 32.5$\pm$0.65 & 14.3$\pm$0.62& 43.1$\pm$0.78  & 25.7$\pm$0.61 & \BU{26.7}$\pm$0.67\\
   &DANN~\cite{DANN} &  37.9$\pm$0.69 & 11.4$\pm$0.91 & 33.9$\pm$0.60 & 13.7$\pm$0.56& 41.5$\pm$0.67  & 28.6$\pm$0.63 & \BU{27.8}$\pm$0.68\\
    &ADDA~\cite{adda} &  39.5$\pm$0.81 & 14.5$\pm$0.69 & 29.1$\pm$0.78 & 14.9$\pm$0.54& 41.9$\pm$0.82  & 30.7$\pm$0.68 & \BU{28.4}$\pm$0.72\\
 &SE~\cite{SE} & 31.7$\pm$0.70 &  12.9$\pm$0.58  & 19.9$\pm$0.75 & 7.7$\pm$0.44 &33.4$\pm$0.56&26.3$\pm$0.50&\BU{22.0}$\pm$0.66 \\ 
 &MCD~\cite{MCD} & 42.6$\pm$0.32 &  19.6$\pm$0.76  & 42.6$\pm$0.98 & 3.8$\pm$0.64 &50.5$\pm$0.43&33.8$\pm$0.89&\BU{32.2}$\pm$0.66 \\

 \Xhline{0.7pt} 
\multirow{8}{*}{ \begin{tabular}[c]{@{}c@{}}Source\\Combine\end{tabular} } 
&Source Only & 47.6$\pm$0.52  & 13.0$\pm$0.41 & 38.1$\pm$0.45   & 13.3$\pm$0.39 & 51.9$\pm$0.85 & 33.7$\pm$0.54 & \BU{32.9}$\pm$0.54\\
&DAN~\cite{long2015}& 45.4$\pm$0.49&	12.8$\pm$0.86&	36.2$\pm$0.58&	15.3$\pm$0.37&	48.6$\pm$0.72&	34.0$\pm$0.54&	\BU{32.1}$\pm$0.59  \\
 &RTN~\cite{RTN} & 44.2$\pm$0.57&	12.6$\pm$0.73&	35.3$\pm$0.59&	14.6$\pm$0.76&	48.4$\pm$0.67&	31.7$\pm$0.73&	\BU{31.1}$\pm$0.68\\
&JAN~\cite{JAN}& 40.9$\pm$0.43&	11.1$\pm$0.61&	35.4$\pm$0.50&	12.1$\pm$0.67&	45.8$\pm$0.59&	32.3$\pm$0.63&	\BU{29.6}$\pm$0.57  \\
&DANN~\cite{DANN}& 45.5$\pm$0.59&	13.1$\pm$0.72&	37.0$\pm$0.69&	13.2$\pm$0.77&	48.9$\pm$0.65&	31.8$\pm$0.62&	\BU{32.6}$\pm$0.68  \\
&ADDA~\cite{adda}& 47.5$\pm$0.76&	11.4$\pm$0.67&	36.7$\pm$0.53&	14.7$\pm$0.50&	49.1$\pm$0.82&	33.5$\pm$0.49&	\BU{32.2}$\pm$0.63  \\

&SE~\cite{SE}& 24.7$\pm$0.32&	3.9$\pm$0.47&	12.7$\pm$0.35&	 7.1$\pm$0.46&	22.8$\pm$0.51&	9.1$\pm$0.49&	\BU{16.1}$\pm$0.43  \\

&MCD~\cite{MCD}& 54.3$\pm$0.64&	22.1$\pm$0.70&	45.7$\pm$0.63&	7.6$\pm$0.49&	58.4$\pm$0.65&	43.5$\pm$0.57&	\BU{38.5}$\pm$0.61  \\

\Xhline{0.7pt}
\multirow{4}{*}{ \begin{tabular}[c]{@{}c@{}}Multi-\\Source\end{tabular} } 
&DCTN~\cite{xu2018deep} &48.6$\pm$0.73  & 23.5$\pm$0.59  &48.8$\pm$0.63  &7.2$\pm$0.46& 53.5$\pm$0.56 & 47.3$\pm$0.47 & \BU{38.2}$\pm$0.57 \\

&\textbf{\NameNoSpace}$^{\ast}$~(ours) & 57.0$\pm$0.79 & 22.1$\pm$0.68  &50.5$\pm$0.45  &4.4$\pm$ 0.21& 62.0$\pm$0.45 & 48.5$\pm$0.56 & \BU{40.8}$\pm$ 0.52\\

&\textbf{\NameNoSpace}~(ours)&57.2$\pm$0.98  & 24.2$\pm$1.21 & 51.6$\pm$0.44 &5.2$\pm$0.45  & 61.6$\pm$0.89& \redbold{49.6}$\pm$0.56 &  \BU{41.5}$\pm$0.74\\

& \textbf{\NameBNoSpace}~(ours)&\redbold{58.6}$\pm$0.53& \redbold{26.0}$\pm$ 0.89& \redbold{52.3}$\pm$0.55& 6.3$\pm$0.58& \redbold{62.7}$\pm$0.51& 49.5$\pm$0.76& \redbold{42.6}$\pm$0.64 \\

\Xhline{0.7pt}
\multirow{3}{*}{ \begin{tabular}[c]{@{}c@{}}Oracle \\Results\end{tabular} } 

&AlexNet &65.5$\pm$0.56  & 27.7$\pm$0.34  &57.6$\pm$0.49 &68.0$\pm$0.55& 72.8$\pm$0.67 & 56.3$\pm$0.59 & \BU{58.0}$\pm$0.53 \\

&ResNet101 &69.3$\pm$0.37  & 34.5$\pm$0.42  &66.3$\pm$0.67 &66.8$\pm$0.51& 80.1$\pm$0.59 & 60.7$\pm$0.48 & \BU{63.0}$\pm$0.51 \\

&ResNet152 &71.0$\pm$0.63  & 36.1$\pm$0.61  &68.1 $\pm$ 0.49  &69.1$\pm$0.52 & 81.3$\pm$0.49 & 65.2$\pm$0.57 & \BU{65.1}$\pm$0.55 
                                  
\end{tabular}
} 
\vspace{0.1cm}
\caption{
\textbf{Multi-source domain adaptation results on the \DATASETNAMENoSpace~ dataset.} Our model \Name and \NameB achieves \textbf{41.5\%} and \textbf{42.6\%} accuracy, significantly outperforming all other baselines. \NameNoSpace$^{\ast}$ indicates the normal average of all the classifiers. When the target domain is \textit{quickdraw}, the multi-source methods perform worse than single-source and source only baselines, which indicates negative transfer~\cite{pan2010survey} occurs in this case. ({\BV{clp}}: \textit{clipart}, {\BV{inf}}: \textit{infograph}, {\BV{pnt}}: \textit{painting}, {\BV{qdr}}: \textit{quickdraw}, {\BV{rel}}: \textit{real}, {\BV{skt}}: \textit{sketch}.)
}
\label{table_lsdac}
\vspace{-0.1cm}
\end{table*}
 
\noindent \textbf{Multi-Source Domain Adaptation} \DATASETNAME contains six domains. Inspired by Xu et al~\cite{xu2018deep}, we introduce two MSDA standards: (1) \textit{single best}, reporting the single best-performing source transfer
result on the test set, and (2) \textit{source combine}, combining the source domains to a single domain and performing traditional single-source adaptation. The first standard evaluates whether MSDA can improve the best single source UDA results; the second testify whether MSDA is necessary to exploit. 

\noindent \textbf{Baselines} For both \textit{single best} and \textit{source combine} experiment setting, we take the following state-of-the-art methods as our baselines: Deep Alignment Network (\textbf{DAN})~\cite{long2015}, Joint Adaptation Network (\textbf{JAN})~\cite{JAN}, Domain Adversarial Neural Network (\textbf{DANN})~\cite{DANN}, Residual Transfer Network (\textbf{RTN})~\cite{RTN}, Adversarial Deep Domain Adaptation (\textbf{ADDA})~\cite{adda}, Maximum Classifier Discrepancy (\textbf{MCD})~\cite{MCD}, and Self-Ensembling  (\textbf{SE})~\cite{SE}. For multi-source domain adaptation, we take Deep Cocktail Network (\textbf{DCTN})~\cite{xu2018deep} as our baseline.

\noindent \textbf{Results} The experimental results of multi-source domain adaptation are shown in Table~\ref{table_lsdac}. We report the results of the two different weighting schemas and all the baseline results in Table~\ref{table_lsdac}. Our model \Name achieves an average accuracy of \textbf{41.5\%}, and \NameB boosts the performance to \textbf{42.6\%}. The results demonstrate that our models designed for MSDA outperform the \textit{single best} UDA results, the \textit{source combine} results, and the multi-source baseline. From the experimental results, we make three interesting observations. (1)The performance of \NameNoSpace$^{\ast}$ is 40.8\%. After applying the weight vector $\gW$, ~\NameNoSpace improves the mean accuracy by 0.7 percent. (2) In {\BV{clp,inf,pnt,rel,skt}}$\rightarrow${\BV{qdr}} setting, the performances of our models are worse than source-only baseline, which indicates that negative transfer~\cite{pan2010survey} occurs. (3) In the \textit{source combine} setting, the performances of DAN~\cite{long2015}, RTN~\cite{RTN}, JAN~\cite{JAN}, DANN~\cite{DANN} are lower than the source only baseline, indicating the negative transfer happens when the training data are from multiple source domains.

\noindent \textbf{Effect of Category Number} To show how the number of categories affects the performance of state-of-the-art domain adaptation methods, we choose the \textit{painting}$\rightarrow$\textit{real} setting in \DATASETNAME and gradually increase the number of category from 20 to 345. The results are  in Figure~\ref{fig_relation_of_categories}. An interesting observation is that when the number of categories is small (which is exactly the case in most domain adaptation benchmarks), all methods tend to perform well. However, their performances drop at different rates when the number of categories increases. For example, SE~\cite{SE} performs the best when there is a limit number of categories, but worst when the number of categories is larger than 150.

\vspace{-.55em}
\section{Conclusion}
In this paper, we have collected, annotated and evaluated by far the largest domain adaptation dataset named DomainNet. The dataset is challenging due to the presence of notable domain gaps and a large number of categories. We hope it will be beneficial to evaluate future single- and multi-source UDA methods. 

We have also proposed \Name to align multiple source domains with the target domain. We derive a meaningful error bound for our method under the framework of cross-moment divergence. Further, we incorporate the moment matching component into deep neural network and train the model in an end-to-end fashion. Extensive experiments on multi-source domain adaptation benchmarks demonstrate that our model outperforms all the multi-source baselines as well as the best single-source domain adaptation method. 

\section{Acknowledgements}
We thank Ruiqi Gao, Yizhe Zhu, Saito Kuniaki, Ben Usman, Ping Hu for their useful discussions and suggestions. We thank anonymous annotators for their hard work to label the data. This work was partially supported by NSF and Honda Research Institute. The authors also acknowledge support from CIFAR AI Chairs Program.

{\small
\bibliographystyle{ieee_fullname}
\bibliography{egbib}

\begin{thebibliography}{10}\itemsep=-1pt

\bibitem{ben2010theory}
Shai Ben-David, John Blitzer, Koby Crammer, Alex Kulesza, Fernando Pereira, and
  Jennifer~Wortman Vaughan.
\newblock A theory of learning from different domains.
\newblock {\em Machine learning}, 79(1-2):151--175, 2010.

\bibitem{ben2007analysis}
Shai Ben-David, John Blitzer, Koby Crammer, Fernando Pereira, et~al.
\newblock Analysis of representations for domain adaptation.
\newblock {\em Advances in neural information processing systems}, pages
  137--144, 2007.

\bibitem{bousmalis2016domain}
Konstantinos Bousmalis, George Trigeorgis, Nathan Silberman, Dilip Krishnan,
  and Dumitru Erhan.
\newblock Domain separation networks.
\newblock In {\em Advances in Neural Information Processing Systems}, pages
  343--351, 2016.

\bibitem{crammer2008learning}
Koby Crammer, Michael Kearns, and Jennifer Wortman.
\newblock Learning from multiple sources.
\newblock {\em Journal of Machine Learning Research}, 9(Aug):1757--1774, 2008.

\bibitem{de1978practical}
Carl De~Boor, Carl De~Boor, Etats-Unis Math{\'e}maticien, Carl De~Boor, and
  Carl De~Boor.
\newblock {\em A practical guide to splines}, volume~27.
\newblock Springer-Verlag New York, 1978.

\bibitem{duan2012exploiting}
Lixin Duan, Dong Xu, and Shih-Fu Chang.
\newblock Exploiting web images for event recognition in consumer videos: A
  multiple source domain adaptation approach.
\newblock In {\em Computer Vision and Pattern Recognition (CVPR), 2012 IEEE
  Conference on}, pages 1338--1345. IEEE, 2012.

\bibitem{SE}
Geoff French, Michal Mackiewicz, and Mark Fisher.
\newblock Self-ensembling for visual domain adaptation.
\newblock In {\em International Conference on Learning Representations}, 2018.

\bibitem{DANN}
Yaroslav Ganin and Victor Lempitsky.
\newblock Unsupervised domain adaptation by backpropagation.
\newblock In Francis Bach and David Blei, editors, {\em Proceedings of the 32nd
  International Conference on Machine Learning}, volume~37 of {\em Proceedings
  of Machine Learning Research}, pages 1180--1189, Lille, France, 07--09 Jul
  2015. PMLR.

\bibitem{ghifary2014domain}
Muhammad Ghifary, W~Bastiaan Kleijn, and Mengjie Zhang.
\newblock Domain adaptive neural networks for object recognition.
\newblock In {\em Pacific Rim international conference on artificial
  intelligence}, pages 898--904. Springer, 2014.

\bibitem{ghifary2016deep}
Muhammad Ghifary, W~Bastiaan Kleijn, Mengjie Zhang, David Balduzzi, and Wen Li.
\newblock Deep reconstruction-classification networks for unsupervised domain
  adaptation.
\newblock In {\em European Conference on Computer Vision}, pages 597--613.
  Springer, 2016.

\bibitem{gong2012geodesic}
Boqing Gong, Yuan Shi, Fei Sha, and Kristen Grauman.
\newblock Geodesic flow kernel for unsupervised domain adaptation.
\newblock In {\em Computer Vision and Pattern Recognition (CVPR), 2012 IEEE
  Conference on}, pages 2066--2073. IEEE, 2012.

\bibitem{gretton2007kernel}
Arthur Gretton, Karsten~M Borgwardt, Malte Rasch, Bernhard Sch{\"o}lkopf, and
  Alex~J Smola.
\newblock A kernel method for the two-sample-problem.
\newblock In {\em Advances in neural information processing systems}, pages
  513--520, 2007.

\bibitem{he2015deep}
Kaiming He, Xiangyu Zhang, Shaoqing Ren, and Jian Sun.
\newblock Deep residual learning for image recognition.
\newblock In {\em Proceedings of the IEEE conference on computer vision and
  pattern recognition}, pages 770--778, 2016.

\bibitem{NIPS2018_8046}
Judy Hoffman, Mehryar Mohri, and Ningshan Zhang.
\newblock Algorithms and theory for multiple-source adaptation.
\newblock In S. Bengio, H. Wallach, H. Larochelle, K. Grauman, N. Cesa-Bianchi,
  and R. Garnett, editors, {\em Advances in Neural Information Processing
  Systems 31}, pages 8246--8256. Curran Associates, Inc., 2018.

\bibitem{hoffman2017cycada}
Judy Hoffman, Eric Tzeng, Taesung Park, Jun-Yan Zhu, Phillip Isola, Kate
  Saenko, Alexei Efros, and Trevor Darrell.
\newblock {C}y{CADA}: Cycle-consistent adversarial domain adaptation.
\newblock In Jennifer Dy and Andreas Krause, editors, {\em Proceedings of the
  35th International Conference on Machine Learning}, volume~80 of {\em
  Proceedings of Machine Learning Research}, pages 1989--1998,
  Stockholmsmässan, Stockholm Sweden, 10--15 Jul 2018. PMLR.

\bibitem{kim2017learning}
Taeksoo Kim, Moonsu Cha, Hyunsoo Kim, Jung~Kwon Lee, and Jiwon Kim.
\newblock Learning to discover cross-domain relations with generative
  adversarial networks.
\newblock In Doina Precup and Yee~Whye Teh, editors, {\em Proceedings of the
  34th International Conference on Machine Learning}, volume~70 of {\em
  Proceedings of Machine Learning Research}, pages 1857--1865, International
  Convention Centre, Sydney, Australia, 06--11 Aug 2017. PMLR.

\bibitem{openmic}
Piotr Koniusz, Yusuf Tas, Hongguang Zhang, Mehrtash Harandi, Fatih Porikli, and
  Rui Zhang.
\newblock Museum exhibit identification challenge for the supervised domain
  adaptation and beyond.
\newblock In {\em The European Conference on Computer Vision (ECCV)}, September
  2018.

\bibitem{alexnet}
Alex Krizhevsky, Ilya Sutskever, and Geoffrey~E Hinton.
\newblock Imagenet classification with deep convolutional neural networks.
\newblock In {\em Advances in neural information processing systems}, pages
  1097--1105, 2012.

\bibitem{lecun1998gradient}
Yann LeCun, L{\'e}on Bottou, Yoshua Bengio, and Patrick Haffner.
\newblock Gradient-based learning applied to document recognition.
\newblock {\em Proceedings of the IEEE}, 86(11):2278--2324, 1998.

\bibitem{li2017mmd}
Chun-Liang Li, Wei-Cheng Chang, Yu Cheng, Yiming Yang, and Barnab{\'a}s
  P{\'o}czos.
\newblock Mmd gan: Towards deeper understanding of moment matching network.
\newblock In {\em Advances in Neural Information Processing Systems}, pages
  2203--2213, 2017.

\bibitem{PACS}
Da Li, Yongxin Yang, Yi-Zhe Song, and Timothy Hospedales.
\newblock Deeper, broader and artier domain generalization.
\newblock In {\em International Conference on Computer Vision}, 2017.

\bibitem{li2015generative}
Yujia Li, Kevin Swersky, and Rich Zemel.
\newblock Generative moment matching networks.
\newblock In {\em International Conference on Machine Learning}, pages
  1718--1727, 2015.

\bibitem{UNIT}
Ming-Yu Liu, Thomas Breuel, and Jan Kautz.
\newblock Unsupervised image-to-image translation networks.
\newblock In {\em Advances in Neural Information Processing Systems}, pages
  700--708, 2017.

\bibitem{cogan}
Ming-Yu Liu and Oncel Tuzel.
\newblock Coupled generative adversarial networks.
\newblock In {\em Advances in neural information processing systems}, pages
  469--477, 2016.

\bibitem{long2015}
Mingsheng Long, Yue Cao, Jianmin Wang, and Michael Jordan.
\newblock Learning transferable features with deep adaptation networks.
\newblock In Francis Bach and David Blei, editors, {\em Proceedings of the 32nd
  International Conference on Machine Learning}, volume~37 of {\em Proceedings
  of Machine Learning Research}, pages 97--105, Lille, France, 07--09 Jul 2015.
  PMLR.

\bibitem{RTN}
Mingsheng Long, Han Zhu, Jianmin Wang, and Michael~I Jordan.
\newblock Unsupervised domain adaptation with residual transfer networks.
\newblock In {\em Advances in Neural Information Processing Systems}, pages
  136--144, 2016.

\bibitem{JAN}
Mingsheng Long, Han Zhu, Jianmin Wang, and Michael~I. Jordan.
\newblock Deep transfer learning with joint adaptation networks.
\newblock In {\em Proceedings of the 34th International Conference on Machine
  Learning, {ICML} 2017, Sydney, NSW, Australia, 6-11 August 2017}, pages
  2208--2217, 2017.

\bibitem{Mansour_nips2018}
Yishay Mansour, Mehryar Mohri, Afshin Rostamizadeh, and A R.
\newblock Domain adaptation with multiple sources.
\newblock In D. Koller, D. Schuurmans, Y. Bengio, and L. Bottou, editors, {\em
  Advances in Neural Information Processing Systems 21}, pages 1041--1048.
  Curran Associates, Inc., 2009.

\bibitem{mroueh2017mcgan}
Youssef Mroueh, Tom Sercu, and Vaibhava Goel.
\newblock {M}c{G}an: Mean and covariance feature matching {GAN}.
\newblock In Doina Precup and Yee~Whye Teh, editors, {\em Proceedings of the
  34th International Conference on Machine Learning}, volume~70 of {\em
  Proceedings of Machine Learning Research}, pages 2527--2535, International
  Convention Centre, Sydney, Australia, 06--11 Aug 2017. PMLR.

\bibitem{Muradyan1977}
OA Muradyan and S~Ya Khavinson.
\newblock Absolute values of the coefficients of the polynomials in
  weierstrass's approximation theorem.
\newblock {\em Mathematical notes of the Academy of Sciences of the USSR},
  22(2):641--645, 1977.

\bibitem{pan2010survey}
Sinno~Jialin Pan and Qiang Yang.
\newblock A survey on transfer learning.
\newblock {\em IEEE Transactions on knowledge and data engineering},
  22(10):1345--1359, 2010.

\bibitem{peng2017synthetic}
Xingchao Peng and Kate Saenko.
\newblock Synthetic to real adaptation with generative correlation alignment
  networks.
\newblock In {\em 2018 {IEEE} Winter Conference on Applications of Computer
  Vision, {WACV} 2018, Lake Tahoe, NV, USA, March 12-15, 2018}, pages
  1982--1991, 2018.

\bibitem{peng2015learning}
Xingchao Peng, Baochen Sun, Karim Ali, and Kate Saenko.
\newblock Learning deep object detectors from 3d models.
\newblock In {\em Proceedings of the IEEE International Conference on Computer
  Vision}, pages 1278--1286, 2015.

\bibitem{visda}
Xingchao Peng, Ben Usman, Neela Kaushik, Judy Hoffman, Dequan Wang, and Kate
  Saenko.
\newblock Visda: The visual domain adaptation challenge.
\newblock {\em arXiv preprint arXiv:1710.06924}, 2017.

\bibitem{syn2real}
Xingchao Peng, Ben Usman, Kuniaki Saito, Neela Kaushik, Judy Hoffman, and Kate
  Saenko.
\newblock Syn2real: A new benchmark forsynthetic-to-real visual domain
  adaptation.
\newblock {\em CoRR}, abs/1806.09755, 2018.

\bibitem{datashift_book2009}
Joaquin Quionero-Candela, Masashi Sugiyama, Anton Schwaighofer, and Neil~D.
  Lawrence.
\newblock {\em Dataset Shift in Machine Learning}.
\newblock The MIT Press, 2009.

\bibitem{office}
Kate Saenko, Brian Kulis, Mario Fritz, and Trevor Darrell.
\newblock Adapting visual category models to new domains.
\newblock In {\em European conference on computer vision}, pages 213--226.
  Springer, 2010.

\bibitem{MCD}
Kuniaki Saito, Kohei Watanabe, Yoshitaka Ushiku, and Tatsuya Harada.
\newblock Maximum classifier discrepancy for unsupervised domain adaptation.
\newblock In {\em The IEEE Conference on Computer Vision and Pattern
  Recognition (CVPR)}, June 2018.

\bibitem{sun2015return}
Baochen Sun, Jiashi Feng, and Kate Saenko.
\newblock Return of frustratingly easy domain adaptation.
\newblock In {\em AAAI}, volume~6, page~8, 2016.

\bibitem{adda}
Eric Tzeng, Judy Hoffman, Kate Saenko, and Trevor Darrell.
\newblock Adversarial discriminative domain adaptation.
\newblock In {\em Computer Vision and Pattern Recognition (CVPR)}, volume~1,
  page~4, 2017.

\bibitem{tzeng2014deep}
Eric Tzeng, Judy Hoffman, Ning Zhang, Kate Saenko, and Trevor Darrell.
\newblock Deep domain confusion: Maximizing for domain invariance.
\newblock {\em arXiv preprint arXiv:1412.3474}, 2014.

\bibitem{vapnik2015uniform}
Vladimir~N Vapnik and A~Ya Chervonenkis.
\newblock On the uniform convergence of relative frequencies of events to their
  probabilities.
\newblock In {\em Measures of complexity}, pages 11--30. Springer, 2015.

\bibitem{officehome}
Hemanth Venkateswara, Jose Eusebio, Shayok Chakraborty, and Sethuraman
  Panchanathan.
\newblock Deep hashing network for unsupervised domain adaptation.
\newblock In {\em ({IEEE}) Conference on Computer Vision and Pattern
  Recognition ({CVPR})}, 2017.

\bibitem{meda}
Jindong Wang, Wenjie Feng, Yiqiang Chen, Han Yu, Meiyu Huang, and Philip~S Yu.
\newblock Visual domain adaptation with manifold embedded distribution
  alignment.
\newblock In {\em ACM Multimedia Conference}, 2018.

\bibitem{xu2018deep}
Ruijia Xu, Ziliang Chen, Wangmeng Zuo, Junjie Yan, and Liang Lin.
\newblock Deep cocktail network: Multi-source unsupervised domain adaptation
  with category shift.
\newblock In {\em Proceedings of the IEEE Conference on Computer Vision and
  Pattern Recognition}, pages 3964--3973, 2018.

\bibitem{yi2017dualgan}
Zili Yi, Hao~(Richard) Zhang, Ping Tan, and Minglun Gong.
\newblock Dualgan: Unsupervised dual learning for image-to-image translation.
\newblock In {\em ICCV}, pages 2868--2876, 2017.

\bibitem{zellinger2017central}
Werner Zellinger, Thomas Grubinger, Edwin Lughofer, Thomas Natschl{\"{a}}ger,
  and Susanne Saminger{-}Platz.
\newblock Central moment discrepancy {(CMD)} for domain-invariant
  representation learning.
\newblock {\em CoRR}, abs/1702.08811, 2017.

\bibitem{zhang2018aligning}
Zhen Zhang, Mianzhi Wang, Yan Huang, and Arye Nehorai.
\newblock Aligning infinite-dimensional covariance matrices in reproducing
  kernel hilbert spaces for domain adaptation.
\newblock In {\em Proceedings of the IEEE Conference on Computer Vision and
  Pattern Recognition}, pages 3437--3445, 2018.

\bibitem{Zhao2017MultipleSD}
Han Zhao, Shanghang Zhang, Guanhang Wu, Jos{\'e}~MF Moura, Joao~P Costeira, and
  Geoffrey~J Gordon.
\newblock Adversarial multiple source domain adaptation.
\newblock In {\em Advances in Neural Information Processing Systems}, pages
  8568--8579, 2018.

\bibitem{CycleGAN2017}
Jun-Yan Zhu, Taesung Park, Phillip Isola, and Alexei~A Efros.
\newblock Unpaired image-to-image translation using cycle-consistent
  adversarial networks.
\newblock In {\em Computer Vision (ICCV), 2017 IEEE International Conference
  on}, 2017.

\bibitem{zhuang2015supervised}
Fuzhen Zhuang, Xiaohu Cheng, Ping Luo, Sinno~Jialin Pan, and Qing He.
\newblock Supervised representation learning: Transfer learning with deep
  autoencoders.
\newblock In {\em IJCAI}, pages 4119--4125, 2015.

\end{thebibliography}
}

\clearpage

\section{Appendix}
\addcontentsline{toc}{section}{Appendices}
\renewcommand{\thesubsection}{\Alph{subsection}}

The appendix is organized as follows: Section~\ref{sec_ablation_LSDAC} shows the ablation study for source-source alignment. Section~\ref{sec_supp_cross_moment} introduces the formal definition of the cross-moment divergence; Section~\ref{sec_proof} gives the proof of Theorem~\ref{thm:pairwise_bound} and further discussions; Section~\ref{sec_digit_exp} provides the details of experiments on ``Digit-Five" dataset; Section~\ref{sec_supp_feature_visualization} shows feature visualization with t-SNE plot; Section~\ref{sec_supp_category_number} shows how the number of categories will affect the performance of the state-of-the-art models; Section~\ref{sec_supp_resnet_baselinese} and Section~\ref{sec_supp_train_test_split} introduce the ResNet baselines and Train/Test split of our \DATASETNAME dataset, respectively; Section~\ref{sec_supp_image_sample} and Section~\ref{sec_supp_dataset_statistics} show the image samples and the detailed statistics of our \DATASETNAME dataset; Section~\ref{toy_exp} shows a toy experiment to demonstrate the importance of aligning the source domains; Section~\ref{time_consumption} shows the time consumption of our method, compared to baseline.

\subsection{Ablation Study}
\label{sec_ablation_LSDAC}

To show how much performance gain we can get through source-source alignment (S-S) and source-target (S-T) alignment, we perform ablation study based on our model. From Table~\ref{table_abalation}, we observe the key factor to the performance boost is matching the moments of source distributions to the target distribution. Matching source domains with each other further boosts the performance. The experimental results empirically demonstrate that aligning source domains is essential for MSDA.
 
 \begin{table}[h]
\centering
\small
{
\vspace{-0.2cm}
\begin{tabular}{c |c |c| c}

{Schema} & 
{\scriptsize{digit-five }} &  
{\scriptsize{Office-Caltech10}}&
{\scriptsize{\DATASETNAMENoSpace}}\\ 

\hline
S-S only&81.5 (+4.1) & 94.5 (+1.6)  &34.4 (+1.5)  \\
S-T only& 85.8 (\textbf{+8.1})& 96.2 (\textbf{+3.3})& 39.7 (\textbf{+6.8}) \\
\NameB & 87.7 (+10) & 96.4 (+3.5) & 42.6 (+9.7)\\
                                  
\end{tabular}
} 

\vspace{0.1cm}
\caption{S-S only: only matching source domains with each other; S-T only: only matching source  with target; ``+'': performance gain from baseline.}

\label{table_abalation}
\vspace{-0.2cm}
\end{table}

\subsection{Cross-moment Divergence}
\label{sec_supp_cross_moment}

\begin{definition}[cross-moment divergence]
\label{def:cmomentdiv}
Given a compact domain $\gX\subset\R^n$ and two probability measures $\mu,\mu'$ on $\gX$, the $k$-th order cross-moment divergence between $\mu$ and $\mu'$ is
\begin{equation*}
\begin{split}
    & \ d_{CM^k}(\mu,\mu')\\
   =& \sum_{\rvi\in\Delta_k}
    	\Bigg|\int_{\gX}\prod_{j=1}^n(x_j)^{i_j}d\mu(\vx) 
      -\int_{\gX}\prod_{j=1}^n(x_j)^{i_j}d\mu'(\vx)\Bigg|,
\end{split}
\end{equation*}
where $\Delta_k=\{(i_1,i_2,\ldots,i_n)\in\N^n_0|\sum_{j=1}^ni_j=k\}$.
\end{definition} 

As seen in the rest of the paper, for two domains $D=(\mu, f)$ and $D'=(\mu',f')$, we use $d_{CM^k}(D,D')$ to denote $d_{CM^k}(\mu,\mu')$ for readability concerns.

\subsection{Proof of Theorem~\ref{thm:pairwise_bound}}
\label{sec_proof}

\begin{theorem}[Weierstrass Approximation Theorem] 
\label{lem:weierstrass}
Let $f: \gC \rightarrow\R$ be continuous, where $\gC$ is a compact subset of $\R^n$. There exists a sequence of real polynomials ${(P_m(\vx))}_{m\in\N}$, such that 
\begin{equation*}
\sup_{\vx\in\gC} |f(\vx) - P_{m}(\vx)| \rightarrow 0,\quad as \: m \rightarrow \infty.
\end{equation*}
\end{theorem}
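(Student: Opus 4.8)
The plan is to reduce the multivariate statement to an approximation problem on the unit cube and then apply the Bernstein-polynomial construction. Since $\gC$ is compact it is bounded, so $\gC\subseteq[a,b]^n$ for some $a<b$; after the affine change of coordinates $\vx\mapsto(\vx-a\vone)/(b-a)$ we may assume $\gC\subseteq[0,1]^n$, because an affine substitution turns a polynomial into a polynomial and so a uniform polynomial approximant on the cube pulls back to one for the original $f$. Because $\R^n$ is a metric space it is normal and $\gC$ is closed, so the Tietze extension theorem yields a continuous $\tilde f:[0,1]^n\to\R$ with $\tilde f|_{\gC}=f$ and $\sup_{[0,1]^n}|\tilde f|=\sup_{\gC}|f|$; it therefore suffices to approximate $\tilde f$ uniformly on $[0,1]^n$ by polynomials and restrict to $\gC$ at the end.

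I would then introduce the $n$-variate Bernstein polynomials
\begin{equation*}
P_m(\vx)=\sum_{k_1=0}^m\cdots\sum_{k_n=0}^m \tilde f\!\left(\tfrac{k_1}{m},\ldots,\tfrac{k_n}{m}\right)\prod_{j=1}^n\binom{m}{k_j}x_j^{k_j}(1-x_j)^{m-k_j},
\end{equation*}
which are genuine polynomials in $\vx$ of degree at most $mn$. Viewing $\binom{m}{k}t^k(1-t)^{m-k}$ as the mass function of a $\mathrm{Binomial}(m,t)$ variable, the weights above are the joint law of a vector $\vS=(S_1,\ldots,S_n)$ of independent $S_j\sim\mathrm{Binomial}(m,x_j)$, so $P_m(\vx)=\E\big[\tilde f(\vS/m)\big]$ with $\E[S_j/m]=x_j$ and $\Var(S_j/m)=x_j(1-x_j)/m\le 1/(4m)$.

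For the convergence estimate, fix $\varepsilon>0$; since $\tilde f$ is uniformly continuous on the compact cube, pick $\delta>0$ with $|\tilde f(\vx)-\tilde f(\vy)|<\varepsilon$ whenever $\|\vx-\vy\|<\delta$. Splitting $\E\big[\tilde f(\vS/m)-\tilde f(\vx)\big]$ over $\{\|\vS/m-\vx\|<\delta\}$ and its complement gives
\begin{equation*}
|\tilde f(\vx)-P_m(\vx)|\;\le\;\varepsilon+2\Big(\sup_{[0,1]^n}|\tilde f|\Big)\Pr\!\big(\|\vS/m-\vx\|\ge\delta\big).
\end{equation*}
If $\|\vS/m-\vx\|\ge\delta$ then $|S_j/m-x_j|\ge\delta/\sqrt n$ for some $j$, so a union bound and Chebyshev's inequality give $\Pr(\|\vS/m-\vx\|\ge\delta)\le\sum_{j=1}^n\Pr(|S_j/m-x_j|\ge\delta/\sqrt n)\le\sum_{j=1}^n\frac{n\,\Var(S_j/m)}{\delta^2}\le\frac{n^2}{4m\delta^2}$, uniformly in $\vx\in[0,1]^n$. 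Hence $\limsup_{m\to\infty}\sup_{\vx}|\tilde f(\vx)-P_m(\vx)|\le\varepsilon$, and since $\varepsilon$ was arbitrary, $\sup_{\vx\in[0,1]^n}|\tilde f(\vx)-P_m(\vx)|\to 0$; restricting $P_m$ to $\gC$ (and undoing the affine reduction) completes the argument.

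The main difficulty is not conceptual depth but making the multivariate deviation estimate genuinely uniform in $\vx$ — which is precisely what the independence of the $S_j$ and the crude bound $x(1-x)\le 1/4$ provide — together with verifying that the two preliminary reductions really do preserve polynomiality and control the supremum norm. Alternative routes I would keep in reserve are (i) Weierstrass's original mollification argument: extend $f$ to a compactly supported continuous function on $\R^n$, convolve with a Gaussian $G_t$ to get a real-analytic approximant that is uniformly close for small $t$, then truncate the power series of $G_t$ to a polynomial on the compact region of interest; and (ii) invoking the Stone--Weierstrass theorem, since the polynomials form a point-separating subalgebra of $C(\gC)$ containing the constants. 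I would nonetheless present the Bernstein argument, as it is elementary and self-contained.
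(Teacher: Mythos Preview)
Your argument is correct: the reduction to the unit cube via an affine map, the Tietze extension to $[0,1]^n$, and the probabilistic estimate for the multivariate Bernstein operator are all sound, and the uniformity in $\vx$ is indeed secured by the bound $\Var(S_j/m)\le 1/(4m)$. The minor point that the Tietze extension can be chosen with $\sup_{[0,1]^n}|\tilde f|=\sup_{\gC}|f|$ is true and worth stating, though any finite bound on $\|\tilde f\|_\infty$ would suffice.

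As for the comparison: the paper does \emph{not} prove this statement at all. It is invoked as a classical black box (the multivariate Weierstrass theorem) and used only to justify that, for any $\epsilon>0$, a continuous function on a compact set can be approximated to within $\epsilon/2$ by some polynomial $P_{n_\epsilon}$ --- this is the key step in the paper's Lemma bounding $|\epsilon_S(h,h')-\epsilon_T(h,h')|$ by cross-moment divergences. So your Bernstein-polynomial proof supplies strictly more than the paper does; the paper's ``proof'' of this theorem is simply its statement. Your alternative routes (Gaussian mollification, Stone--Weierstrass) would serve equally well for the paper's purposes, since all it needs is existence of the approximating polynomial, not any particular construction or rate.
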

Note that for $\vx\in\R^n$, a multivariate polynomial $P_m:\R^n\to\R$ is of the form
\[P_m(\vx)=\sum_{k=1}^m\sum_{\rvi\in\Delta_k}a_{\rvi}\prod\limits_{j=1}^n(x_j)^{i_j},\]
where $\Delta_k=\{(i_1,i_2,\ldots,i_n)\in\N_0^n\big|\sum_{j=1}^ni_j=k\}.$

\begin{lemma}
\label{lem:keymoment}
For any hypothesis $h,h'\in\gH$, for any $\epsilon>0$, there exist 
an integer $n_\epsilon$ and a constant $a_{n_\epsilon}$, 
such that
\[|\epsilon_S(h,h')-\epsilon_T(h,h')|
\leq \frac{1}{2}a_{n_\epsilon}
\sum_{k=1}^{n_\epsilon}d_{CM^{k}}\big(\gD_S, \gD_T\big)
+ \epsilon \nonumber.\]
\end{lemma}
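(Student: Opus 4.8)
The plan is to bound the difference of disagreement errors $|\epsilon_S(h,h')-\epsilon_T(h,h')|$ by first rewriting the disagreement $|h(\vx)-h'(\vx)|$ as a fixed function of $\vx$ and then approximating that function by polynomials using the Weierstrass Approximation Theorem (Theorem~\ref{lem:weierstrass}). Concretely, since $h,h'$ map into $\{0,1\}$, the disagreement indicator $g(\vx):=|h(\vx)-h'(\vx)|$ is a $\{0,1\}$-valued function on $\gX$, and $\epsilon_{\gD}(h,h')=\E_{\mu}[g(\vx)]=\int_{\gX}g\,d\mu$. Hence
\[
|\epsilon_S(h,h')-\epsilon_T(h,h')|
=\Bigl|\int_{\gX}g\,d\mu_S-\int_{\gX}g\,d\mu_T\Bigr|.
\]
The obstacle here is that $g$ need not be continuous, so Weierstrass does not apply directly to $g$ itself. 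First I would address this: either assume (as is standard in this line of work, and presumably built into the setup in the appendix) that the relevant integrands can be uniformly approximated on the compact set $\gX$, or pass to a continuous surrogate — e.g. approximate $g$ in $L^1(\mu_S)\!+\!L^1(\mu_T)$ by a continuous $\tilde g$ with $\|g-\tilde g\|\le\epsilon/2$ against both measures (possible since $\gX$ is compact and the measures are finite Borel measures, by Lusin/density of $C(\gX)$ in $L^1$), and work with $\tilde g$.

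Next I would apply Theorem~\ref{lem:weierstrass} to the continuous function $\tilde g$ on the compact set $\gX\subset\R^n$: there is a multivariate polynomial $P_{n_\epsilon}$ of some degree $n_\epsilon$ with $\sup_{\vx\in\gX}|\tilde g(\vx)-P_{n_\epsilon}(\vx)|\le \epsilon/(2\,\mu_S(\gX)+2\,\mu_T(\gX))$ (or $\le\epsilon/4$ if the measures are probability measures), where by the remark after Theorem~\ref{lem:weierstrass} we may write
\[
P_{n_\epsilon}(\vx)=\sum_{k=1}^{n_\epsilon}\sum_{\rvi\in\Delta_k}a_{\rvi}\prod_{j=1}^n(x_j)^{i_j}.
\]
Then I would split via the triangle inequality:
\[
\Bigl|\!\int_{\gX}\!\tilde g\,d\mu_S-\!\int_{\gX}\!\tilde g\,d\mu_T\Bigr|
\le
\Bigl|\!\int_{\gX}\!P_{n_\epsilon}d\mu_S-\!\int_{\gX}\!P_{n_\epsilon}d\mu_T\Bigr|
+\!\int_{\gX}\!|\tilde g-P_{n_\epsilon}|d\mu_S
+\!\int_{\gX}\!|\tilde g-P_{n_\epsilon}|d\mu_T,
\]
where the last two terms are each bounded by the uniform-approximation error times the total mass, contributing at most $\epsilon/2$ in total; combined with the $\epsilon/2$ from the $g\to\tilde g$ step this accounts for the additive $\epsilon$ in the claim.

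It remains to control the polynomial term. Expanding $P_{n_\epsilon}$ and using linearity of the integral,
\[
\Bigl|\int_{\gX}P_{n_\epsilon}d\mu_S-\int_{\gX}P_{n_\epsilon}d\mu_T\Bigr|
\le\sum_{k=1}^{n_\epsilon}\sum_{\rvi\in\Delta_k}|a_{\rvi}|
\Bigl|\int_{\gX}\prod_{j=1}^n(x_j)^{i_j}d\mu_S-\int_{\gX}\prod_{j=1}^n(x_j)^{i_j}d\mu_T\Bigr|,
\]
and setting $a_{n_\epsilon}:=2\max_{k\le n_\epsilon,\,\rvi\in\Delta_k}|a_{\rvi}|$ lets us pull the coefficients out so that the inner sum over $\rvi\in\Delta_k$ becomes exactly $d_{CM^k}(\gD_S,\gD_T)$ from Definition~\ref{def:cmomentdiv}, yielding the bound $\tfrac12 a_{n_\epsilon}\sum_{k=1}^{n_\epsilon}d_{CM^k}(\gD_S,\gD_T)$. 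Adding the two approximation contributions gives the lemma. The main obstacle is the non-continuity of the disagreement indicator $g$; handling it cleanly (and making sure the resulting constants $n_\epsilon$, $a_{n_\epsilon}$ depend only on $\epsilon$ and the domains, not on $h,h'$ in a way that breaks the later union-bound over $\gH$) is the delicate part — one wants the approximation step to be uniform over the finite set of disagreement patterns realizable by $\gH$ on the sample, which is where the $VC$-dimension bound in Theorem~\ref{thm:pairwise_bound} ultimately enters.
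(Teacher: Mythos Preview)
Your proposal is correct and follows essentially the same route as the paper: write the disagreement as an indicator, pass to a continuous approximant via density of $C_c(\gX)$ in $L^1$, apply Weierstrass to get a polynomial, then expand and bound coefficient-wise to recover the cross-moment divergences $d_{CM^k}$, with $a_{n_\epsilon}:=2\max_{k,\rvi}|a_{\rvi}|$ exactly as in the paper. Your closing worry about uniformity over $\gH$ is unnecessary here: the lemma is stated for fixed $h,h'$ and the paper allows $n_\epsilon,a_{n_\epsilon}$ to depend on them (the VC-dimension enters the theorem only through the separate Hoeffding-type concentration lemma, not through this approximation step).
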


\begin{proof}
\begin{align}
\label{eqn:lem12a}
 & \ |\epsilon_S(h,h')-\epsilon_T(h,h')|
     \leq \sup_{h,h'\in\gH}
       \big|\epsilon_S(h,h')-\epsilon_T(h,h')\big| \nonumber\\
=& \sup_{h,h'\in\gH}\big|\rmP_{\vx\sim D_S}[h(\vx)\neq h'(\vx)]
  - \rmP_{\vx\sim D_T}[h(\vx)\neq h'(\vx)]\big| \nonumber\\
=& \sup_{h,h'\in\gH}\left|\int_{\gX}\1_{h(\vx)\neq h'(\vx)}d\mu_S
  - \int_{\gX}\1_{h(\vx)\neq h'(\vx)}d\mu_T\right|,
\end{align}
where $\gX$ is a compact subset of $\R^n$. 
For any fixed $h,h'$, the indicator function $\1_{h(\vx)\neq h'(\vx)}(\vx)$ 
is a Lebesgue integrable function ($L^1$ function) on $\gX$.
It is known that the space of continuous functions with compact support, denoted by $\gC_c(\gX)$, is dense in $L^1(\gX)$, 
i.e., any $L^1$ function on $\gX$ can be approximated arbitrarily well\footnote{with respect to the corresponding norm} by functions in $\gC_c(\gX)$. As a result, for any $\frac{\epsilon}{2}>0$, there exists $f\in\gC_c(\gX)$, such that,
\begin{align}
\label{eqn:lem12b}
& \sup_{h,h'\in\gH}\left|\int_{\gX}\1_{h(\vx)\neq h'(\vx)}d\mu_S
  - \int_{\gX}\1_{h(\vx)\neq h'(\vx)}d\mu_T\right| \nonumber\\
\leq&\ \left|\int_{\gX}f(\vx) d\mu_S - \int_{\gX}f(\vx) d\mu_T\right| + \frac{\epsilon}{2}.
\end{align}
Using Theorem~\ref{lem:weierstrass}, for any $\frac{\epsilon}{2}$, there exists a polynomial $P_{n_\epsilon}(\vx)=\sum\limits_{k=1}^{n_\epsilon}\sum\limits_{\rvi\in\Delta_k}\alpha_{\rvi}\prod\limits_{j=1}^n(x_j)^{i_j}$, such that
\begin{align}
\label{eqn:lem12c}
    &\ \left|\int_{\gX}f(\vx) d\mu_S - \int_{\gX}f(\vx)d\mu_T\right|\nonumber\\
\leq&\ \left|\int_{\gX}P_{n_\epsilon}(\vx)d\mu_S 
       - \int_{\gX}P_{n_\epsilon}(\vx)d\mu_T\right| 
       + \frac{\epsilon}{2} \nonumber\\
\leq&\ \sum_{k=1}^{n_\epsilon}\bigg|\sum\limits_{\rvi\in\Delta_{k}}
       a_{\rvi}\int_{\gX}\prod\limits_{j=1}^n(x_j)^{i_j}d\mu_S \nonumber\\
    &\ - \sum\limits_{\rvi\in\Delta_{k}}a_{\rvi}
       \int_{\gX}\prod\limits_{j=1}^n(x_j)^{i_j}d\mu_T\bigg| 
       + \frac{\epsilon}{2} \nonumber\\
\leq&\ \sum_{k=1}^{n_\epsilon}\sum\limits_{\rvi\in\Delta_{k}}
       \Bigg(|a_{\rvi}|
  	   \bigg|\int_{\gX}\prod\limits_{j=1}^n(x_j)^{i_j}d\mu_S \nonumber\\
    &\ - \int_{\gX}\prod\limits_{j=1}^n(x_j)^{i_j}d\mu_T\bigg|\Bigg) 
       + \frac{\epsilon}{2} \nonumber\\
\leq&\ \sum_{k=1}^{n_\epsilon}\Bigg(a_{\Delta_k}
	   \sum\limits_{\rvi\in\Delta_k}
  	   \bigg|\int_{\gX}\prod\limits_{j=1}^n(x_j)^{i_j}d\mu_S \nonumber\\
    &\ - \int_{\gX}
       \prod\limits_{j=1}^n(x_j)^{i_j}d\mu_T\bigg|\Bigg) 
       + \frac{\epsilon}{2} \nonumber\\
  = &\ \sum_{k=1}^{n_\epsilon}a_{\Delta_k}d_{CM^{k}}\big(\gD_S, \gD_T\big)
  	   + \frac{\epsilon}{2} \nonumber\\
\leq&\ \frac{1}{2}a_{n_\epsilon}\sum_{k=1}^{n_\epsilon}
       d_{CM^{k}}\big(\gD_S,\gD_T\big) + \frac{\epsilon}{2},
\end{align}
where $a_{\Delta_k}=\max\limits_{\rvi\in\Delta_k}|a_{\rvi}|$ and $a_{n_\epsilon}=2\max\limits_{1\leq k\leq n_\epsilon}a_{\Delta_k}.$
Combining Equation~\ref{eqn:lem12a},~\ref{eqn:lem12b},~\ref{eqn:lem12c}, we prove the lemma.
\end{proof}
Note that the constants $a_{\Delta_k}$ can actually be meaningfully bounded when applying the Weierstrass Approximation Theorem. According to~\cite{Muradyan1977}, a sequence of positive numbers $\{M_k\}$ can be constructed, such that, for any $\epsilon>0$, there exists a polynomial $P_{n_\epsilon}$ such that $\left|P_{n_\epsilon}(\vx)-f(\vx)\right|<\epsilon$ and $\left|a_{\Delta_k}\right|<\epsilon M_k, \forall k=1,\ldots,n_{\epsilon}$.

\begin{lemma}[Lemma~6,~\cite{ben2010theory}]
\label{lem:ben2010}
For each $\gD_j\in\{\gD_1,\ldots,\gD_N\}$, let $S_j$ be a labeled sample set of size $\beta_jm$ drawn from $\mu_j$ and labeled by the groundtruth labeling function $f_j$. For any fixed weight vector $\valpha$, let $\hat{\epsilon}_{\valpha}(h)$ be the empirical $\valpha$-weighted error of some fixed hypothesis $h$ on these sample sets, and let $\epsilon_{\valpha}(h)$ be the true $\valpha$-weighted error, then
\[
\rmP\big[|\hat{\epsilon}_{\valpha}(h)-\epsilon_{\alpha}(h)|
  \geq\epsilon\big] 
\leq 2\exp
  \Bigg(\frac{-2m\epsilon^2}{\sum_{j=1}^N\frac{\alpha^2_j}{\beta_j}}\Bigg).
\] 
\label{lem:sample_ineq}
\end{lemma}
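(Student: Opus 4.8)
The plan is to recognize $\hat{\epsilon}_{\valpha}(h)$ as an average of $m$ independent bounded random variables and invoke Hoeffding's inequality. First I would unpack the definitions: with $\hat{\epsilon}_j(h)=\frac{1}{\beta_jm}\sum_{\vx\in S_j}|h(\vx)-f_j(\vx)|$, one has
\[
\hat{\epsilon}_{\valpha}(h)=\sum_{j=1}^N\alpha_j\hat{\epsilon}_j(h)
  =\sum_{j=1}^N\sum_{\vx\in S_j}\frac{\alpha_j}{\beta_jm}\,|h(\vx)-f_j(\vx)|,
\]
which exhibits $\hat{\epsilon}_{\valpha}(h)$ as a sum over the $\sum_j\beta_jm=m$ total sample points of the independent variables $Z_{\vx}:=\frac{\alpha_j}{\beta_jm}|h(\vx)-f_j(\vx)|$ for $\vx\in S_j$. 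Independence holds because the draws within each $S_j$ are i.i.d.\ and the sets $S_j$ are drawn independently across $j$. Since $|h(\vx)-f_j(\vx)|\in[0,1]$, each $Z_{\vx}$ lies in the interval $[0,\,\alpha_j/(\beta_jm)]$, and linearity of expectation gives $\E[\hat{\epsilon}_{\valpha}(h)]=\epsilon_{\valpha}(h)$; I would note here that $h$ is fixed in advance, so no union bound over $\gH$ and no VC-type complexity term enters at this stage.

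Next I would apply the two-sided Hoeffding inequality to the family $\{Z_{\vx}\}$: for independent $X_i\in[a_i,b_i]$ one has $\rmP\big[\big|\sum_iX_i-\E[\sum_iX_i]\big|\geq\epsilon\big]\leq 2\exp\!\big(-2\epsilon^2/\sum_i(b_i-a_i)^2\big)$. The only quantity to evaluate is the sum of squared ranges: each of the $\beta_jm$ points in $S_j$ contributes $\big(\alpha_j/(\beta_jm)\big)^2$, so
\[
\sum_{\vx}(b_{\vx}-a_{\vx})^2
  =\sum_{j=1}^N\beta_jm\cdot\frac{\alpha_j^2}{\beta_j^2m^2}
  =\frac{1}{m}\sum_{j=1}^N\frac{\alpha_j^2}{\beta_j}.
\]
Substituting this denominator into the Hoeffding bound yields exactly $2\exp\!\big(-2m\epsilon^2/\sum_{j=1}^N\alpha_j^2/\beta_j\big)$, which is the claim.

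I do not expect a genuine obstacle; this is essentially a bookkeeping argument. The one place to be careful is the scaling: because the $j$-th block is weighted by $\alpha_j$ and normalized by its own size $\beta_jm$ rather than by the total $m$, the per-sample range is $\alpha_j/(\beta_jm)$ and not $1/m$, and propagating this correctly through the sum of squared ranges is precisely what produces the $\sum_j\alpha_j^2/\beta_j$ factor (and accounts for why heavily down-weighting or heavily under-sampling a source degrades the bound). If one prefers to avoid the non-identically-distributed form of Hoeffding, an equivalent route is to apply a per-block Hoeffding bound to each $\hat{\epsilon}_j(h)$ and combine the resulting sub-Gaussian parameters of $\alpha_j\hat{\epsilon}_j(h)$ additively, but the direct application above is cleanest.
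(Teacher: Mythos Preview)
Your proposal is correct and matches exactly the approach the paper indicates: the paper does not actually prove this lemma but simply cites it from Ben-David et al.\ and remarks that it ``is a slight modification of the Hoeffding's inequality for the empirical $\valpha$-weighted source error.'' Your argument---writing $\hat{\epsilon}_{\valpha}(h)$ as a sum of $m$ independent variables with per-sample range $\alpha_j/(\beta_jm)$ and reading off $\sum_j\alpha_j^2/\beta_j$ from the sum of squared ranges---is precisely that modification.
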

Lemma~\ref{lem:ben2010} is a slight modification
of the Hoeffding’s inequality for the empirical $\valpha$-weighted source error, which will be useful in proving the uniform convergence bound for hypothesis space of finite VC dimension.
Now we are ready to prove Theorem~\ref{thm:pairwise_bound}.
\pairwisebound*

\begin{proof}
Let $h^*_j=\argmin\limits_{h\in\gH}\{\epsilon_T(h)+\epsilon_j(h)\}$. Then for any $\epsilon>0$, there exists $N$ integers $\{n^j_\epsilon\}_{j=1}^N$ and $N$ constant $\{a_{n^j_\epsilon}\}_{j=1}^N$, such that
\begin{align}
\label{eq:intermediate}
&\ |\epsilon_{\valpha}(h) - \epsilon_T(h)| \nonumber\\
\leq&\ \bigg|\sum_{j=1}^N\alpha_j\epsilon_j(h) - \epsilon_T(h)\bigg|
	   \leq \sum_{j=1}^N\alpha_j|\epsilon_j(h) - \epsilon_T(h)|\nonumber\\
\leq&\ \sum_{j=1}^N\alpha_j\bigg( |\epsilon_j(h) - \epsilon_j(h,h^*_j)| 
  	   + |\epsilon_j(h,h^*_j) - \epsilon_T(h,h^*_j)| \nonumber\\
  	   &\ + |\epsilon_T(h,h^*_j) - \epsilon_T(h)| \bigg)\nonumber\\
\leq&\ \sum_{j=1}^N\alpha_j\big(\epsilon_j(h^*_j)
	   + |\epsilon_j(h,h^*_j) - \epsilon_T(h,h^*_j)| 
	   + \epsilon_T(h^*_j)\big) \nonumber\\
\leq&\ \sum_{j=1}^N\alpha_j\bigg(\lambda_j 
	   + \frac{1}{2}a^j_{n_\epsilon}\sum_{k=1}^{n^j_{\epsilon}}
	     d_{CM^{k}}\big(\gD_j, \gD_T\big) \bigg) + \frac{\epsilon}{2}.
\end{align}
The third inequality follows from the triangle inequality of classification error\footnote{For any labeling function $f_1,f_2,f_3$, we have $\epsilon(f_1,f_2)\leq\epsilon(f_1,f_3)+\epsilon(f_2,f_3)$.}~\cite{ben2007analysis,crammer2008learning}. The last inequality follows from the definition of $\lambda_j$ and Lemma~\ref{lem:keymoment}. Now using both~\Eqref{eq:intermediate} and Lemma~\ref{lem:ben2010}, we have for any $\delta\in(0,1)$ and any $\epsilon>0$, with probability $1-\delta$,
\begin{align*}
\epsilon_T(\hat{h})
\leq&\ \epsilon_{\valpha}(\hat{h}) + \frac{\epsilon}{2} \\
	   & + \sum\limits_{j=1}^N\alpha_j\bigg(\lambda_j 
	       + \frac{1}{2}a^j_{n_\epsilon}\sum_{k=1}^{n^j_{\epsilon}}
	         d_{CM^{k}}\big(\gD_j, \gD_T\big) \bigg) \\
\leq&\ \hat{\epsilon}_{\valpha}(\hat{h}) 
		   + \frac{1}{2}\eta_{\valpha,\vbeta,m,\delta}
		   + \frac{\epsilon}{2}\\
	   & + \sum\limits_{j=1}^N\alpha_j\bigg(\lambda_j
	       + \frac{1}{2}a^j_{n_\epsilon}\sum_{k=1}^{n^j_{\epsilon}}
	         d_{CM^{k}}\big(\gD_j, \gD_T\big) \bigg) \\
\leq&\ \hat{\epsilon}_{\valpha}(h^*_T) 
		   + \frac{1}{2}\eta_{\valpha,\vbeta,m,\delta}
		   + \frac{\epsilon}{2}\\
	   & + \sum\limits_{j=1}^N\alpha_j\bigg(\lambda_j
	       + \frac{1}{2}a^j_{n_\epsilon}\sum_{k=1}^{n^j_{\epsilon}}
	         d_{CM^{k}}\big(\gD_j, \gD_T\big) \bigg) \\	
\leq&\ \epsilon_{\valpha}(h^*_T) 
		   + \eta_{\valpha,\vbeta,m,\delta} + \frac{\epsilon}{2}\\
	   & + \sum\limits_{j=1}^N\alpha_j\bigg(\lambda_j
	       + \frac{1}{2}a^j_{n_\epsilon}\sum_{k=1}^{n^j_{\epsilon}}
	         d_{CM^{k}}\big(\gD_j, \gD_T\big) \bigg) \\
\leq&\ \epsilon_T(h^*_T) 
		   + \eta_{\valpha,\vbeta,m,\delta} + \epsilon\\
	   & + \sum\limits_{j=1}^N\alpha_j\bigg(2\lambda_j
	       + a^j_{n_\epsilon}\sum_{k=1}^{n^j_{\epsilon}}
	         d_{CM^{k}}\big(\gD_j, \gD_T\big) \bigg). 
\end{align*}
The first and the last inequalities follow from~\Eqref{eq:intermediate}, the second and the fourth inequalities follow from applying Lemma~\ref{lem:ben2010} (instead of standard Hoeffding's inequality) in the standard proof of uniform convergence
for empirical risk minimizers~\cite{vapnik2015uniform}. The third inequality follows from the definition of $\hat{h}$.
\end{proof}

To better understand the bounds in Theorem~\ref{thm:pairwise_bound}, the second term of the bound is the VC-dimension based generalization error, which is the upper bound of the difference between the empirical error $\hat{\epsilon}_{\valpha}$ and the true expected error $\epsilon_{\valpha}$. The last term (a summation), as shown in Equation~\ref{eq:intermediate}, characterizes the upper bound of the difference between the $\valpha$-weighted error $\epsilon_{\valpha}$ and the target error $\epsilon_T$. The constants $\{a_{n^j_\epsilon}\}_{j=1}^N$ in this term can be meaningfully bounded, as explained at the end of the proof of Lemma~\ref{lem:keymoment}.

Note that the bound explicitly depends on cross-moment divergence terms $d_{CM^{k}}\big(\gD_j, \gD_T\big)$, and thus sheds new light on the theoretical motivation of moment matching approaches, including our proposed approach and many existing approaches for both single and multiple source domain adaptation.
To the best of our knowledge, this is the first target error bound in the literature of domain adaptation that explicitly incorporates a moment-based divergence between the source(s) and the target domains.


\begin{table*}[t]
\vspace{-0.1cm}
\setlength{\tabcolsep}{0.085em}
\renewcommand{\arraystretch}{0.8}
{
\centering

\begin{tabular}{c|c c c c c c c || c | c c c c c c c }
\hline
\redbold{ResNet101} & \BV{clp} & \BV{inf} & \BV{pnt} & \BV{qdr} & \BV{rel} & \BV{skt} & \BU{Avg.} &
\redbold{ResNet152} & \BV{clp} & \BV{inf} & \BV{pnt} & \BV{qdr} & \BV{rel} & \BV{skt} & \BU{Avg.} \\
\hline

\BV{clp} &N/A &19.3 &37.5 &11.1 &52.2 &41.0 &	\BU{32.2} & 
\BV{clp} & N/A  &  19.8 &37.9 &12.2 &52.3 &44.8 &	\BU{33.4}\\

\BV{inf}&30.2&N/A &31.2 &3.6 &44.0 &27.9 &	\BU{27.4}&	
\BV{inf}&	31.3&N/A &31.1 &4.7 &45.5 &29.6 &	\BU{28.4}\\

\BV{pnt}&	39.6 &18.7 &N/A&4.9 &54.5 &36.3 &	\BU{30.8}&  	
\BV{pnt}&	42.0 &19.5&N/A &7.4 &55.0 &37.7 &	\BU{32.3} \\

\BV{qdr}&7.0 &0.9 &1.4 &N/A&4.1 &8.3 & 	\BU{4.3}&
\BV{qdr}&	12.2 &1.8 &2.9 &N/A&6.3 &9.4 &	\BU{6.5}\\

\BV{rel}&48.4 &22.2 &49.4 &6.4&N/A &38.8 & 	\BU{33.0}&	
\BV{rel}&	50.5 &24.4 &49.0 &6.2&N/A&39.9 & \BU{34.0}\\

\BV{skt}&46.9 &15.4 &37.0 &10.9 &47.0 &N/A&	\BU{31.4}&	
\BV{skt}&	51.0 &18.2 &39.7 &12.5 &47.4 &N/A&	\BU{33.8}\\

\BU{Avg.}&	\BU{34.4}&	\BU{15.3}&	\BU{31.3}&	\BU{7.4}&	\BU{40.4}&	\BU{30.5}&	\redbold{26.5}&
\BU{Avg.}&	\BU{37.4}&	\BU{16.7}&	\BU{32.1}&	\BU{8.6}&	\BU{41.3}&	\BU{32.3}&	\redbold{28.1}\\
\hline

\end{tabular}

}
\vspace{0.1cm}
\caption{\textbf{Single-source \textit{ResNet101} and \textit{ResNet152}~\cite{he2015deep} baselines on the \DATASETNAME dataset.} We provide ResNet baselines for Table~\ref{tab_challengeI}. In each sub-table, the column-wise domains are selected as the source domain and the row-wise domains are selected as the target domain. The green numbers represent the average performance of each column or row. The red numbers denote the average accuracy for all the 30 (source, target) combinations. ({\BV{clp}}: \textit{clipart}, {\BV{inf}}: \textit{infograph}, {\BV{pnt}}: \textit{painting}, {\BV{qdr}}: \textit{quickdraw}, {\BV{rel}}: \textit{real}, {\BV{skt}}: \textit{sketch}.) }
\label{tab_resnet_source_only}
\vspace{-0.3cm}
\end{table*}

\begin{figure*}
    \centering
    
    \begin{minipage}{\hsize}
      \centering
      \subfigure[ \textit{painting}$\rightarrow$\textit{real} ]
      {\includegraphics[width=0.46\hsize]{images/num_cat_2_acc.pdf}
      \label{fig_supp_painting_real} }
     \centering
      \subfigure[\textit{infograph}$\rightarrow$\textit{real}]
      {\includegraphics[width=0.46\hsize]{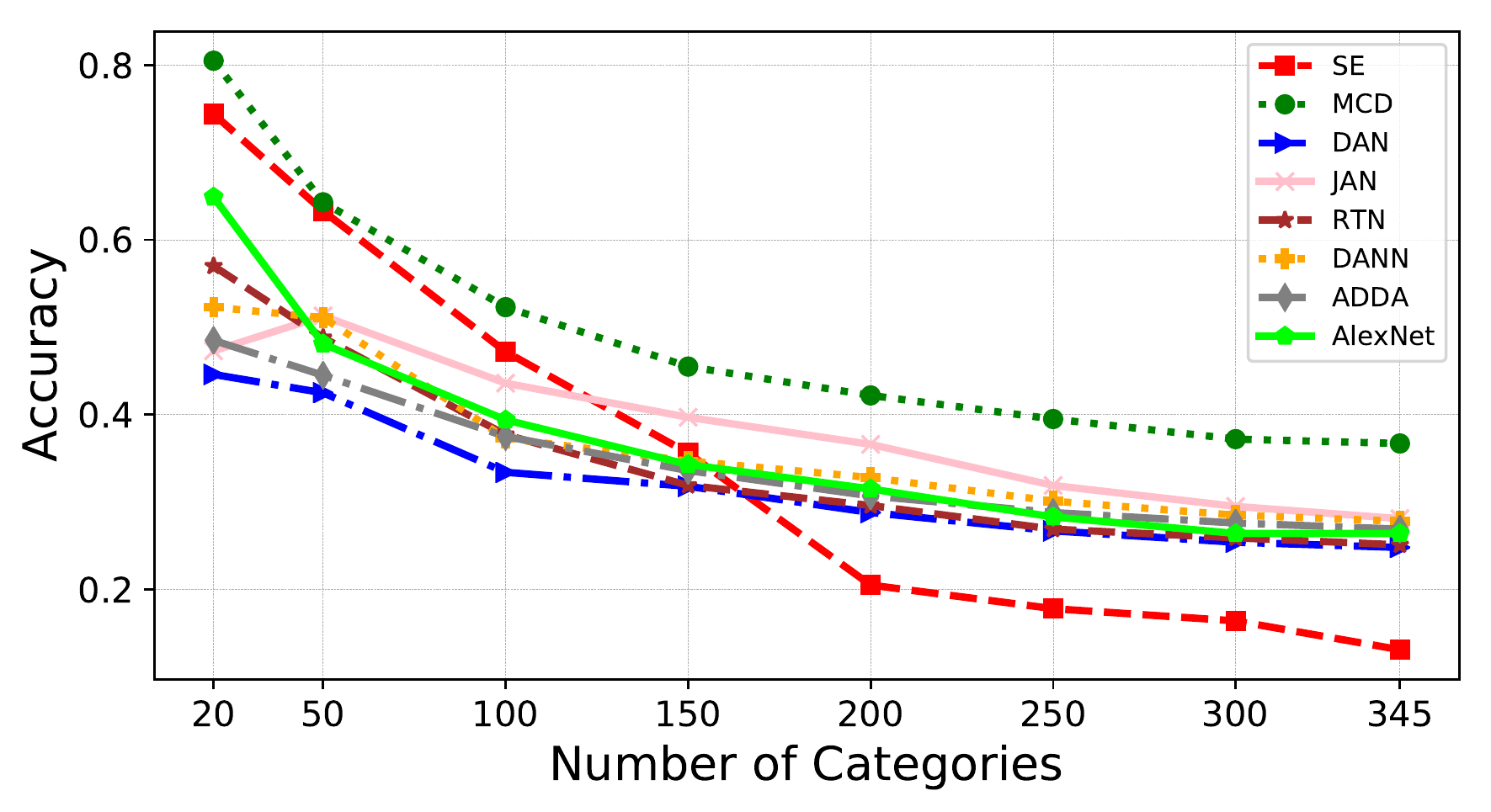}
      \label{fig_supp_info_rel}}
      \centering
      \subfigure[\textit{sketch}$\rightarrow$\textit{clipart}]
      {\includegraphics[width=0.46\hsize,height=4.5cm]{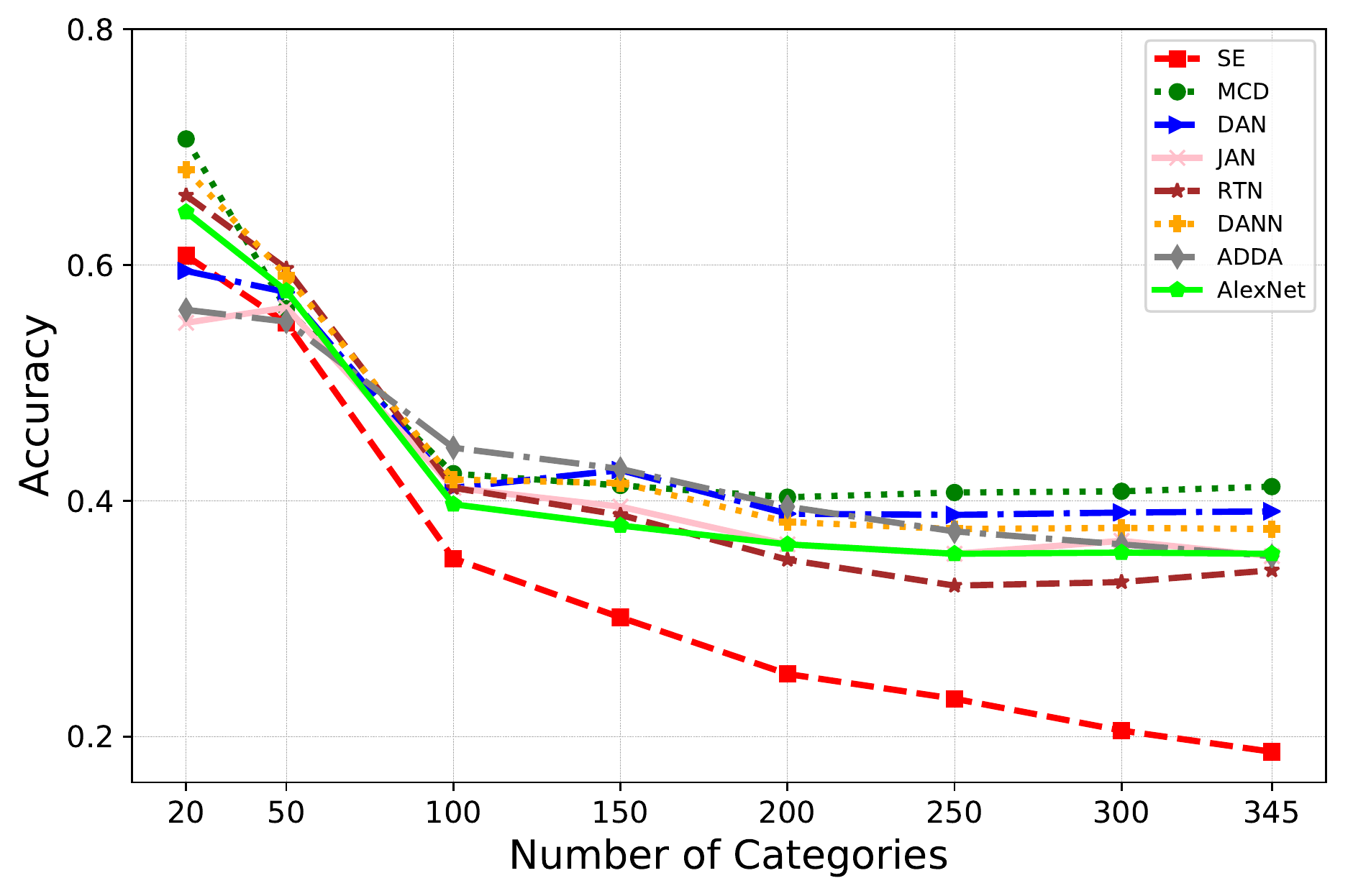} 
      \label{fig_supp_skt_clp}}
      \centering
      \subfigure[\textit{quickdraw}$\rightarrow$\textit{clipart}]
      { \includegraphics[width=0.46\hsize,height=4.7cm]{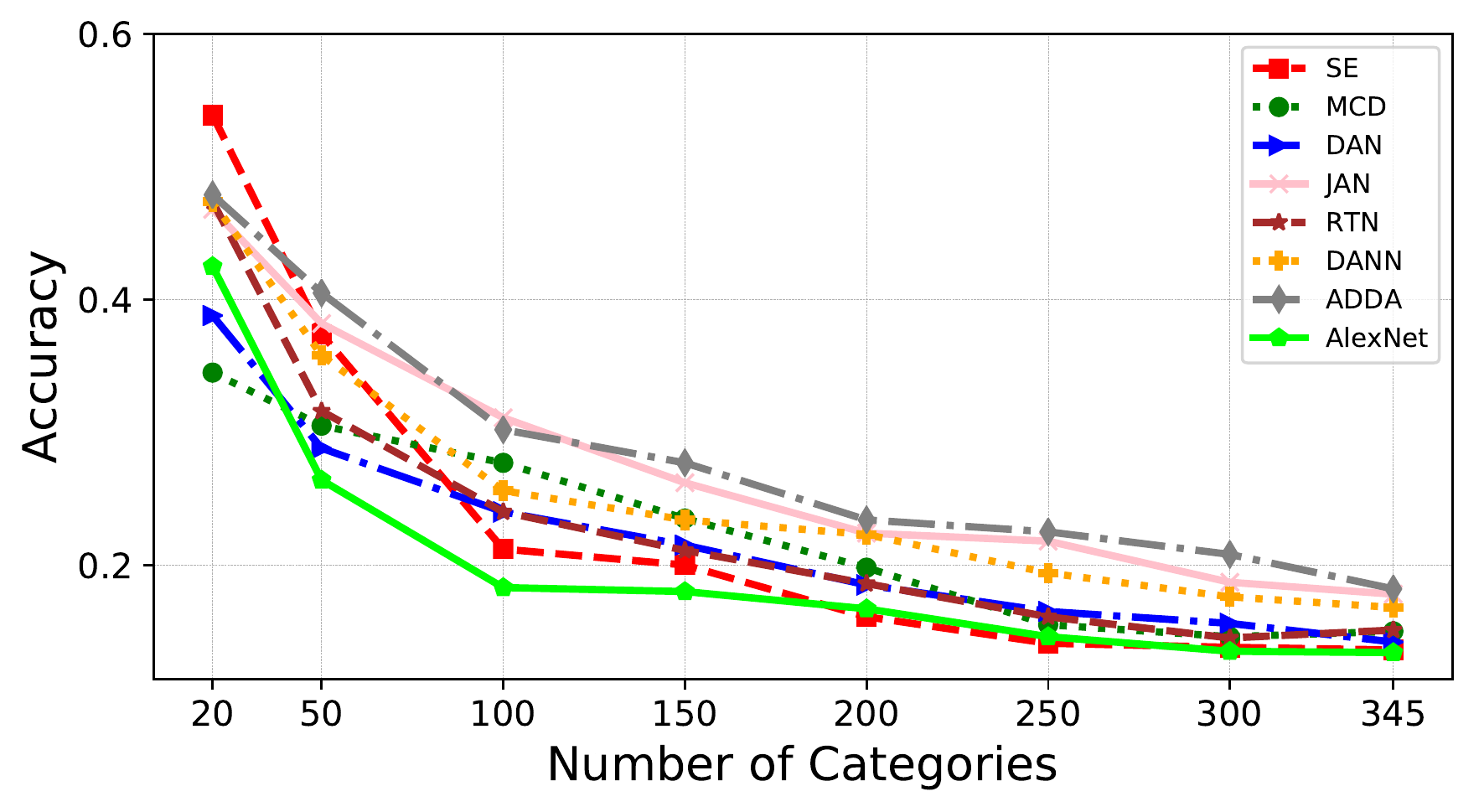}
      \label{fig_supp_qdr_clp}}
    \end{minipage}
    
    \caption{\textbf{Accuracy \textit{vs.} Number of Catogries}. We plot how the performances of different models will change when the number of categories increases. We select four UDA settings, \ie, \textit{painting}$\rightarrow$\textit{real}, \textit{infograph}$\rightarrow$\textit{real},
    \textit{sketch}$\rightarrow$\textit{clipart},
    \textit{quickdraw}$\rightarrow$\textit{clipart}. The figure shows the performance of all models drop significantly with the increase of category number.
    }
    \label{fig_plot_acc_vs_num}
    \vspace{-0.2cm}
\end{figure*}

\begin{figure*}
\vspace{-0.4cm}
    \begin{minipage}{\hsize}
      \centering
      \subfigure[\scriptsize DAN Features on SVHN ]
      {\includegraphics[width=0.23\hsize]{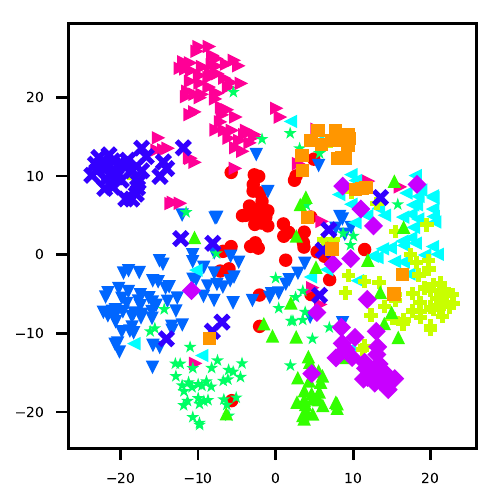}
      \label{mnist_dan} }
     \centering
      \subfigure[\scriptsize \NameB Features on SVHN]
      {\includegraphics[width=0.23\hsize]{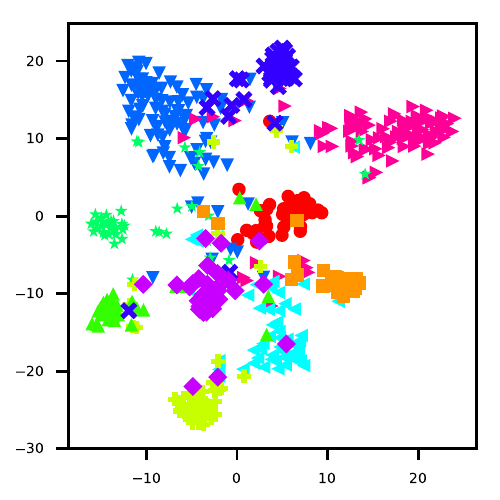}
      \label{mnist_msda}}
      \centering
      \subfigure[\scriptsize DAN Features on Caltech]
      {\includegraphics[width=0.23\hsize]{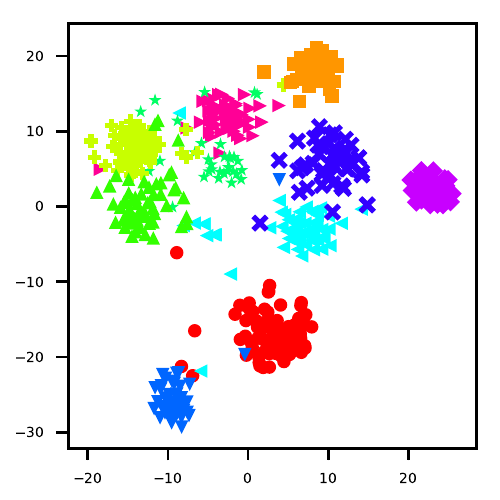} 
      \label{office_dan}}
      \centering
      \subfigure[\scriptsize \NameB Features on Caltech]
      { \includegraphics[width=0.23\hsize]{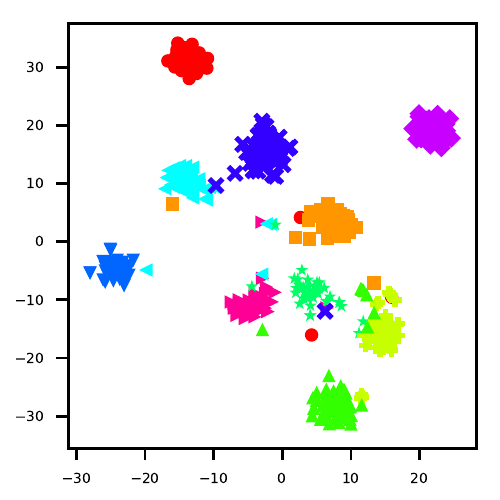}
      \label{office_msda}}
    \end{minipage}
    \vspace{-0.2cm}
  \caption{Feature visualization: t-SNE plot of DAN features and \NameB features on SVHN in mm,mt,up,sy$\rightarrow$sv setting; t-SNE of DAN features and \NameB features on Caltech in A,D,W$\rightarrow$C setting. We use different markers and different colors to denote different categories. (Best viewed in color.)}
  \label{fig_analysis}
  \vspace{-0.4cm}
\end{figure*}

\subsection{Details of Digit Experiments}
\label{sec_digit_exp}

\noindent \textbf{Network Architecture} In our digital experiments (Table~\ref{tab_digit_five}), our feature extractor is composed of three \textit{conv} layers and two \textit{fc} layers. We present the \textit{conv} layers as (\textit{input}, \textit{output}, \textit{kernel}, \textit{stride}, \textit{padding}) and \textit{fc} layers as (\textit{input}, \textit{output}). The three \textit{conv} layers are: \textit{conv1} (3, 64, 5, 1, 2), \textit{conv2} (64, 64, 5, 1, 2), \textit{conv3} (64, 128, 5, 1, 2). The two \textit{fc} layers are: \textit{fc1} (8192, 3072), \textit{fc2} (3072, 2048). The architecture of the feature generator is: (\textit{conv1}, \textit{bn1}, \textit{relu1}, \textit{pool1})-(\textit{conv2}, \textit{bn2}, \textit{relu2}, \textit{pool2})-(\textit{conv3}, \textit{bn3}, \textit{relu3})-(\textit{fc1}, \textit{bn4}, \textit{relu4}, \textit{dropout})-(\textit{fc2}, \textit{bn5}, \textit{relu5}). The classifier is a single \textit{fc} layer, \ie \textit{fc3} (2048, 10).

\begin{figure}[t]
    \centering
    \includegraphics{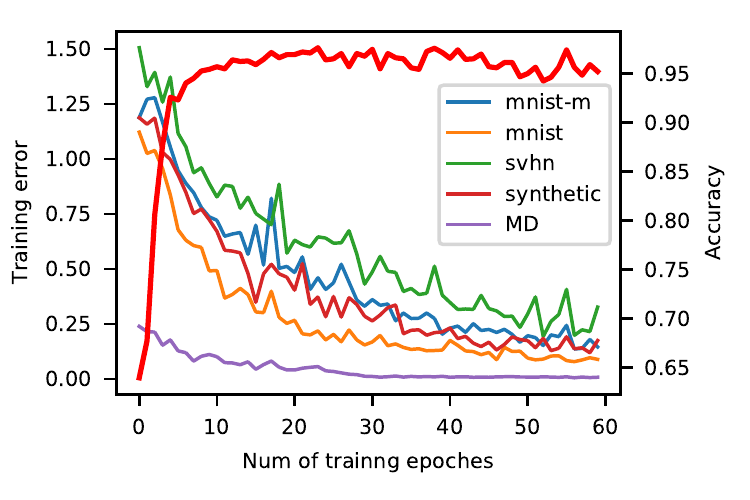}
    \vspace{-0.2cm}
    \caption{Analysis: training error of each classifier, Moment Discrepancy (MD, defined in Equation~\ref{eqn:MD2}), and accuracy (red bold line) \textit{w.r.t.} training epochs in {\BV{mm,mt,sv,sy}}$\rightarrow${\BV{up}} setting. The bold red line shows how the accuracy changes \textit{w.r.t.} training epochs.}
    \label{fig_covergence}
    \vspace{-0.2cm}
\end{figure}

\noindent \textbf{Convergence Analysis}
As our framework involves multiple classifiers and the model is trained with multiple losses, we visualize the learning procedure of {\BV{mm,mt,sv,sy}}$\rightarrow${\BV{up}} setting in Figure~\ref{fig_covergence}. The figure shows the training errors of multiple classifiers are decreasing, despite some frequent deviations. The MD (Moment Discrepancy) loss decreases in a steady way, demonstrating that the MD loss and the cross-entropy loss gradually converge.


\subsection{Feature visualization} 
\label{sec_supp_feature_visualization}
To demonstrate the transfer ability of our model, we visualize the DAN~\cite{long2015} features and \NameB features with t-SNE embedding in two tasks: mm,mt,up,sy$\rightarrow$sv and A,D,W$\rightarrow$C. The results are shown in Figure~\ref{fig_analysis}. We make two important observations: i) Comparing Figure~\ref{mnist_dan} with Figure~\ref{mnist_msda}, we find that \NameB is capable of learning more discriminative features; ii) From Figure~\ref{office_dan} and Figure~\ref{office_msda}, we find that the clusters of \NameB features are more compact than those of DAN, which suggests that the features learned by \NameB attain more desirable discriminative property. These observations imply the superiority of our model over DAN in multi-source domain adaptation.

\subsection{Effect of Category Number}
\label{sec_supp_category_number}
In this section, we show more results to clarify how the number of categories affects the performances of the state-of-the-art models. 
We choose the following four settings, \ie, \textit{painting}$\rightarrow$\textit{real} (Figure~\ref{fig_supp_painting_real}), \textit{infograph}$\rightarrow$\textit{real} (Figure~\ref{fig_supp_info_rel}), \textit{sketch}$\rightarrow$\textit{clipart} (Figure~\ref{fig_supp_skt_clp}), \textit{quickdraw}$\rightarrow$\textit{clipart} (Figure~\ref{fig_supp_qdr_clp}), and gradually increase the number of categories from 20 to 345. From the four figures, we make the following interesting observations:
\begin{itemize}
    \item All the models perform well when the number of categories is small. However, their performances drop rapidly when the number of categories increases.
    \item Self-Ensembling~\cite{SE} model has a good performance when the number of categories is small, but it is not suitable to large scale domain adaptation. 
    
\end{itemize}

\subsection{ResNet baselines}
\label{sec_supp_resnet_baselinese}
We report ResNet~\cite{he2015deep} source only baselines in Table~\ref{tab_resnet_source_only}. The MCD~\cite{MCD} and the SE~\cite{SE} methods (In Table~\ref{tab_challengeI}) are based on ResNet101 and ResNet152, respectively. We have observed these two methods both perform worse than their source only baselines, which indicates negative transfer~\cite{pan2010survey} phenomenon occurs in these two scenarios. Exploring why negative transfer happens is beyond the scope of this literature. One preliminary guess is due to the large number of categories.

\subsection{Train/Test Split}
\label{sec_supp_train_test_split}
\vspace{-0.2cm}
We show the detailed number of images we used in our experiments in Table~\ref{tab_split_detail}. For each domain, we follow a 70\%/30\% schema to split the dataset to training and testing trunk. The ``Per-Class'' row shows the average number of images that each category contains.

\begin{table}[t]
\setlength{\tabcolsep}{0.22em}
\renewcommand{\arraystretch}{0.8}
\footnotesize{
    \centering
    \begin{tabular}{c|c| c| c| c |c |c |c}
     &\BV{clp} & \BV{inf} & \BV{pnt} & \BV{qdr} & \BV{rel} & \BV{skt} & Total\\
     \hline
    Train & 34,019 &	37,087&	52,867	&120,750&	122,563&	49,115	&416,401 \\  
    Test  &  14,818	&16,114	&22,892&	51,750&	52,764&	21,271&	179,609 \\ 
    \hline
    Total & 48,837&	53,201&	75,759&	172,500&	175,327&	70,386&	596,010 \\
    Per-Class & 141&	154&	219&	500&	508&	204& 1,728 \\
    \end{tabular}
    }
    \caption{\textbf{Train/Test split.} We split \DATASETNAME with a 70\%/30\% ratio. The ``Per-Class" row shows the average number of images that each category contains.}

    \label{tab_split_detail}
\end{table}

\subsection{Image Samples}
\label{sec_supp_image_sample}
We sample the images for each domain and show them in Figure~\ref{fig_clipart} (\textit{clipart}), Figure~\ref{fig_infograph}~(\textit{infograph}), Figure~\ref{fig_painting}~(\textit{painting}), Figure~\ref{fig_quickdraw}~(\textit{quickdraw}), Figure~\ref{fig_real}~(\textit{real}), and Figure~\ref{fig_sketch}~(\textit{sketch}).

\begin{figure}[t]
    \begin{minipage}[t]{\hsize}
            
      \centering
      \subfigure[\scriptsize Original]{\includegraphics[width=0.33\hsize]{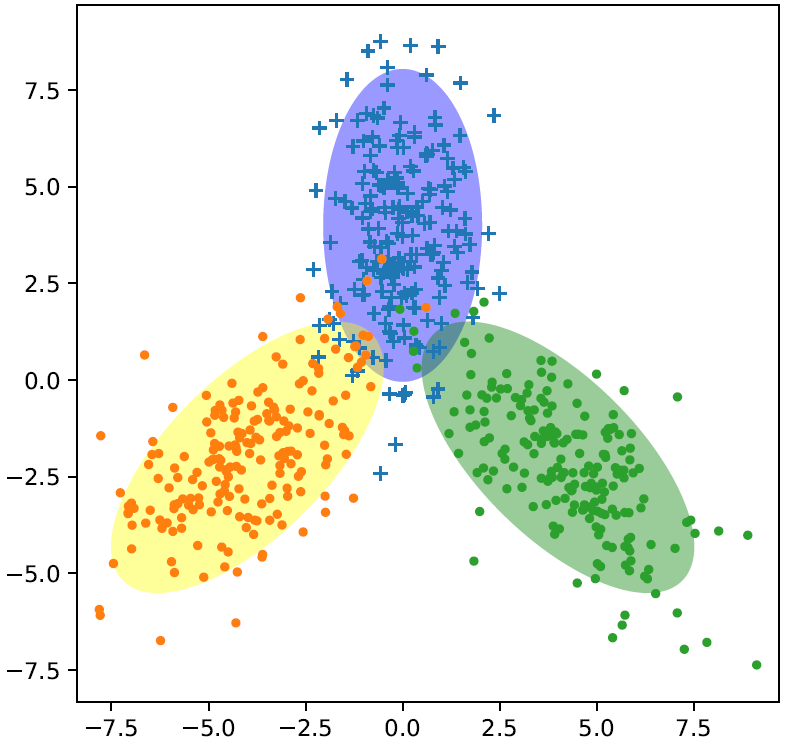}
      \label{toy_fig_1}}
      \centering
      \subfigure[{\scriptsize No Within-Source Matching}]{\includegraphics[width=0.3\hsize]{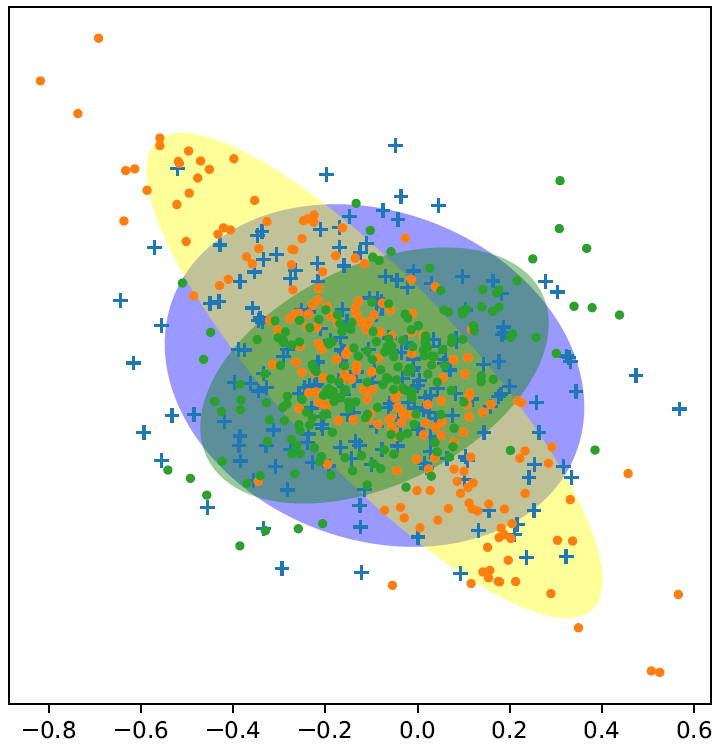}
      \label{toy_fig_2}}
      \centering
      \subfigure[\scriptsize Our model]{\includegraphics[width=0.315\hsize]{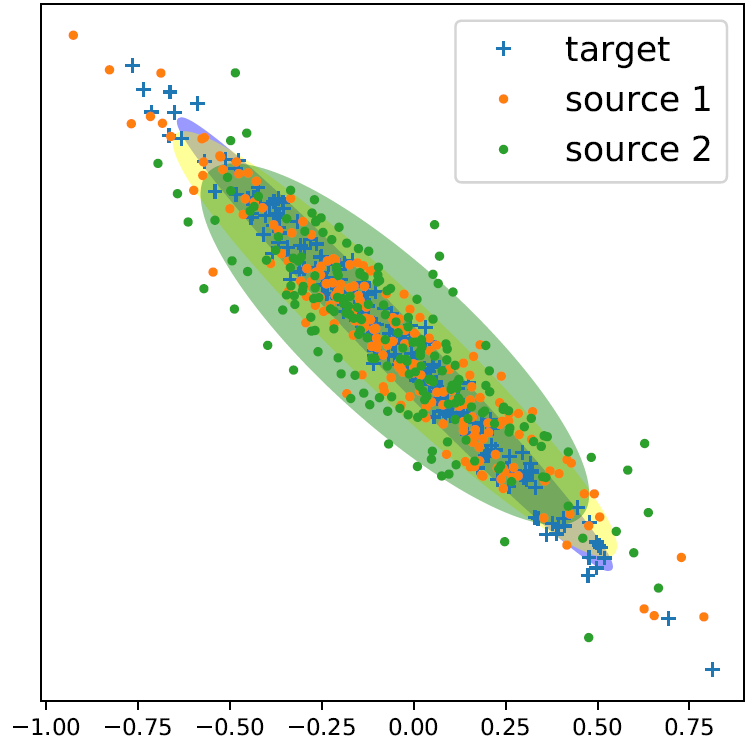}
      \label{toy_fig_3}}
    \end{minipage}
 \caption{Yellow and green points are sampled from two source domains. Blue points are sampled from target domain.
 }
  \label{fig_toy_exp}
  \vspace{-0.3cm}
    \end{figure}

\begin{figure*}[t]
    \centering
   \includegraphics[width=\linewidth]{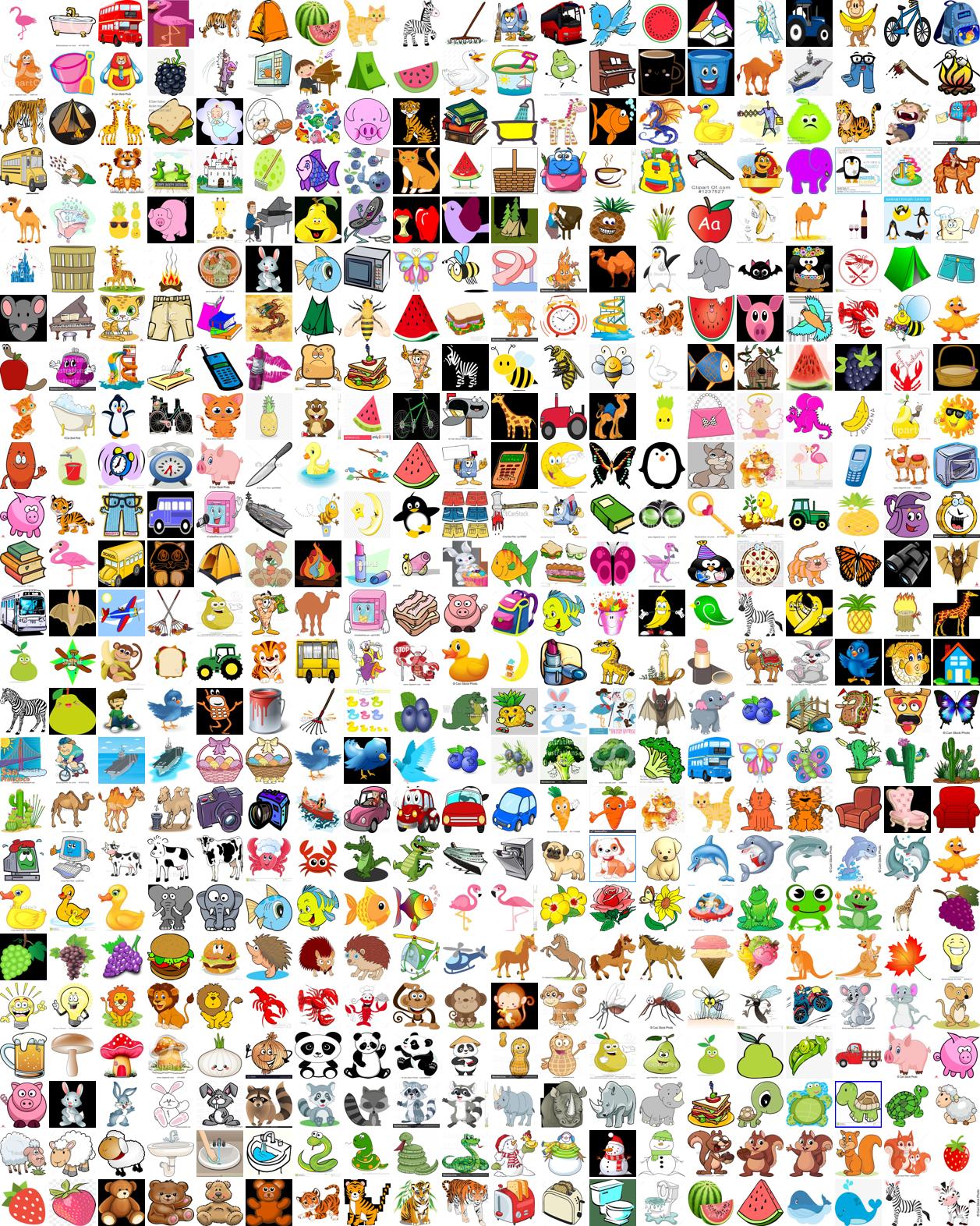}
    \caption{Images sampled from \textit{clipart} domain of the \DATASETNAME dataset.}
    \label{fig_clipart}
\end{figure*}

\begin{figure*}[t]
    \centering
   \includegraphics[width=\linewidth]{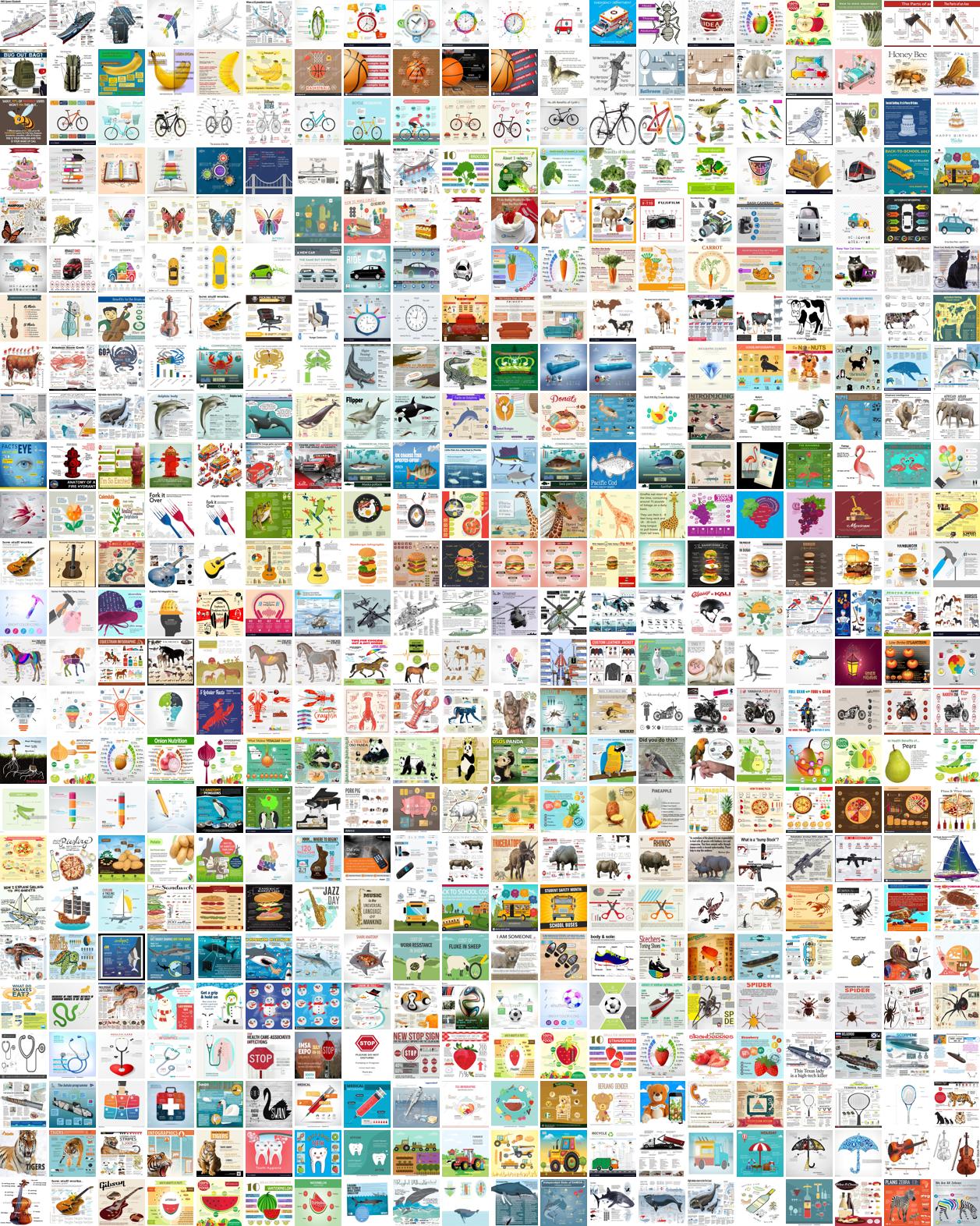}
    \caption{Images sampled from \textit{infograph} domain of the \DATASETNAME dataset.}
    \label{fig_infograph}
\end{figure*}

\begin{figure*}[t]
    \centering
   \includegraphics[width=\linewidth]{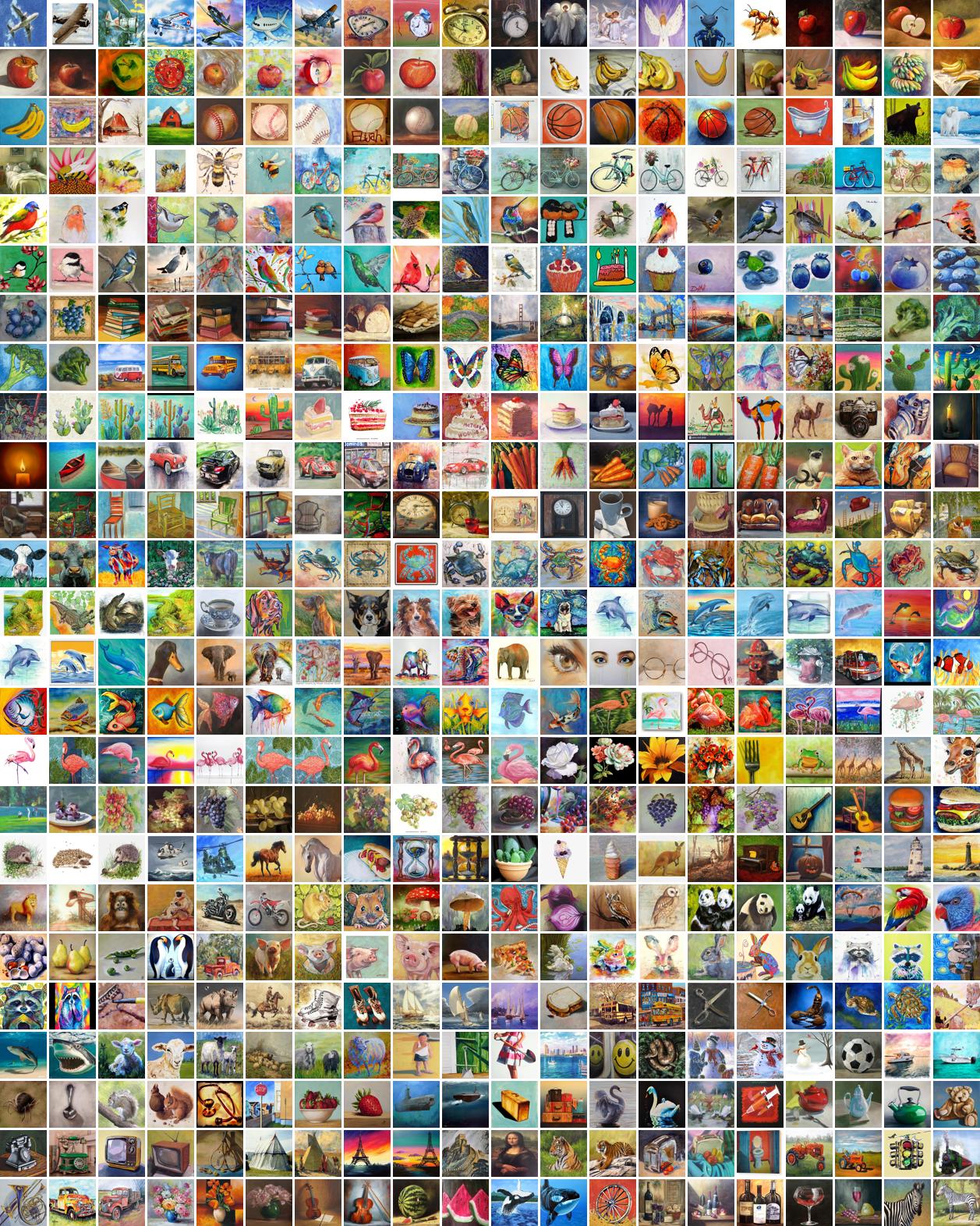}
    \caption{Images sampled from \textit{painting} domain of the \DATASETNAME dataset.}
    \label{fig_painting}
\end{figure*}

\begin{figure*}[t]
    \centering
   \includegraphics[width=\linewidth]{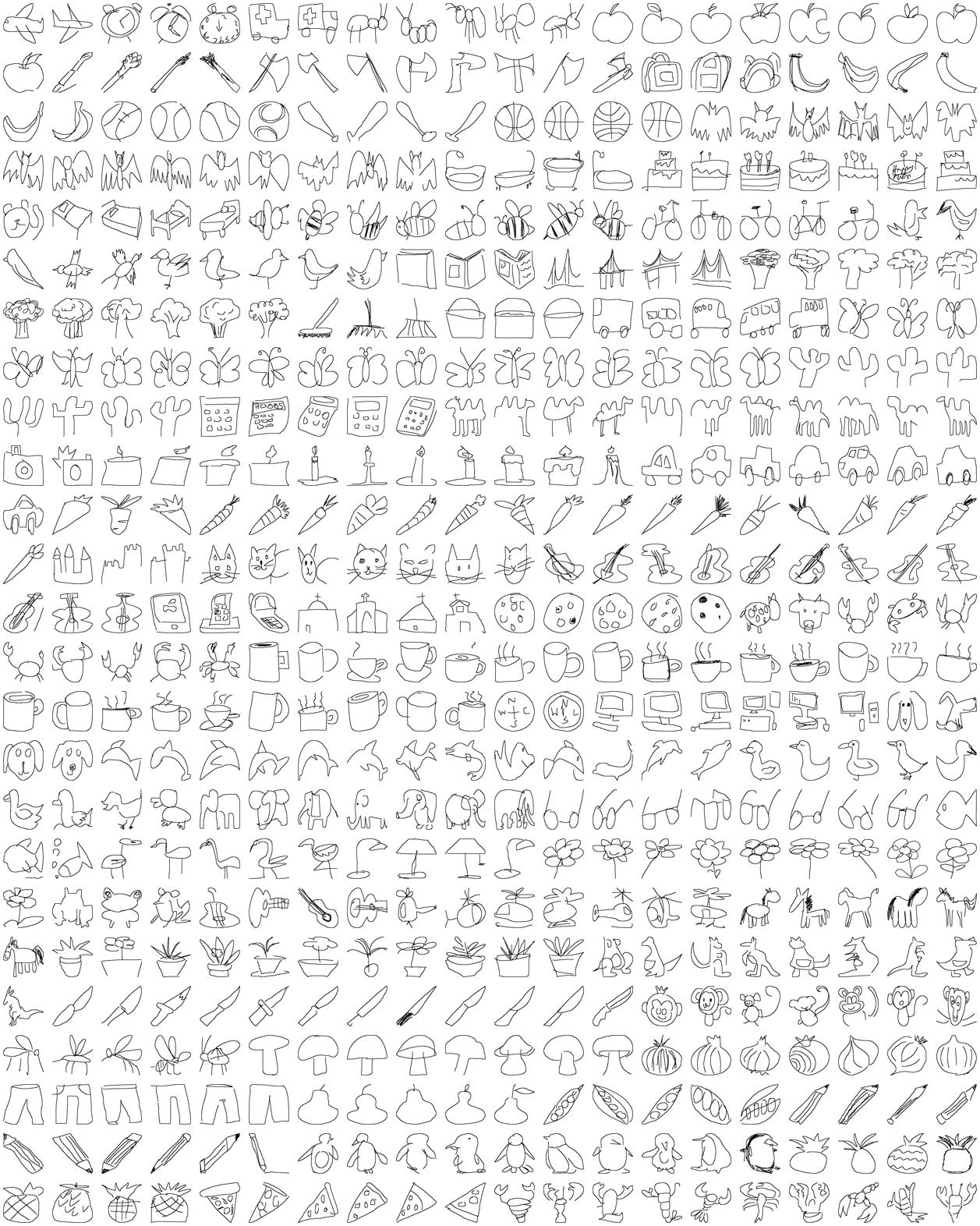}
    \caption{Images sampled from \textit{quickdraw} domain of the \DATASETNAME dataset.}
    \label{fig_quickdraw}
\end{figure*}

\begin{figure*}[t]
    \centering
   \includegraphics[width=\linewidth]{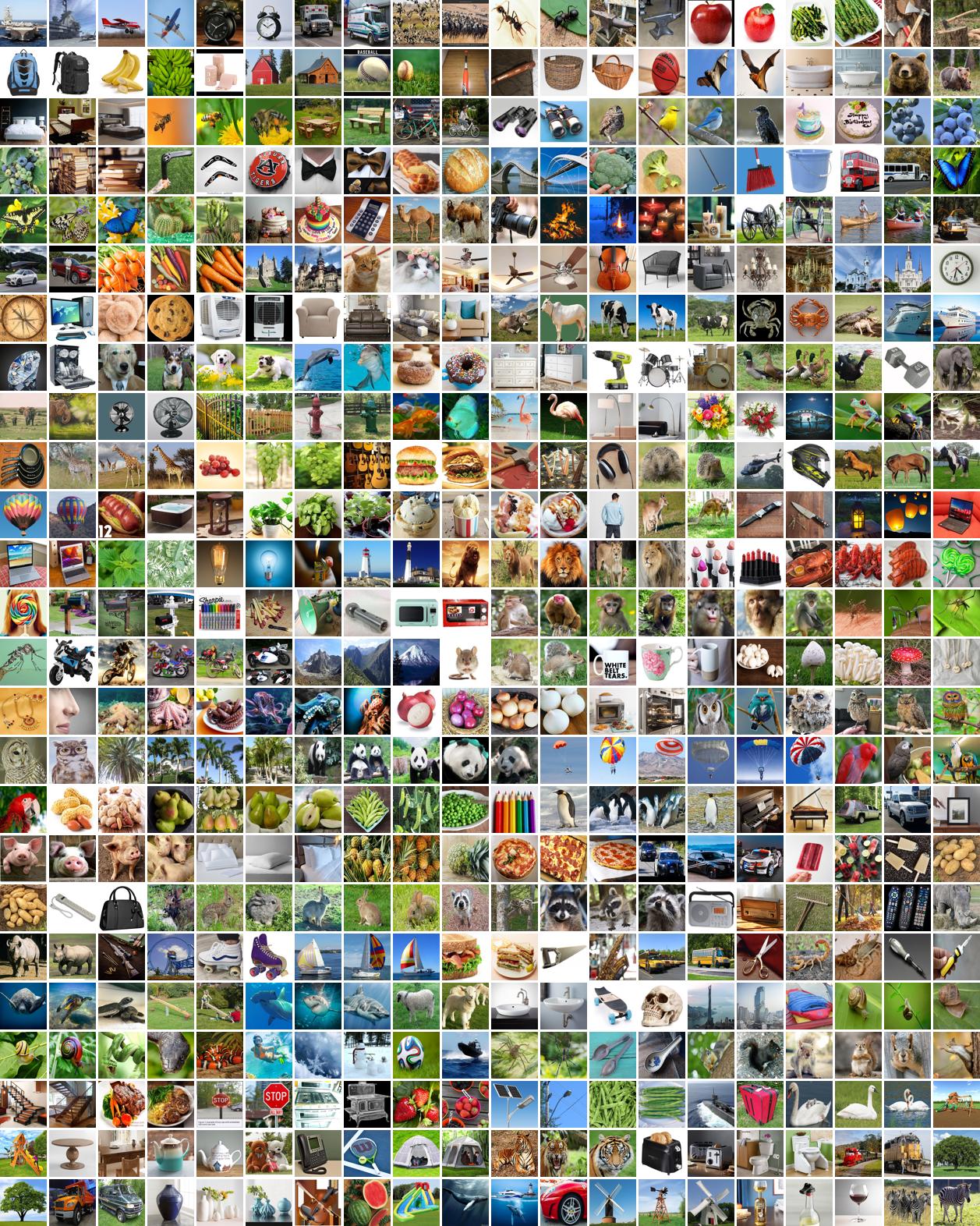}
    \caption{Images sampled from \textit{real} domain of the \DATASETNAME dataset.}
    \label{fig_real}
\end{figure*}

\begin{figure*}[t]
    \centering
   \includegraphics[width=\linewidth]{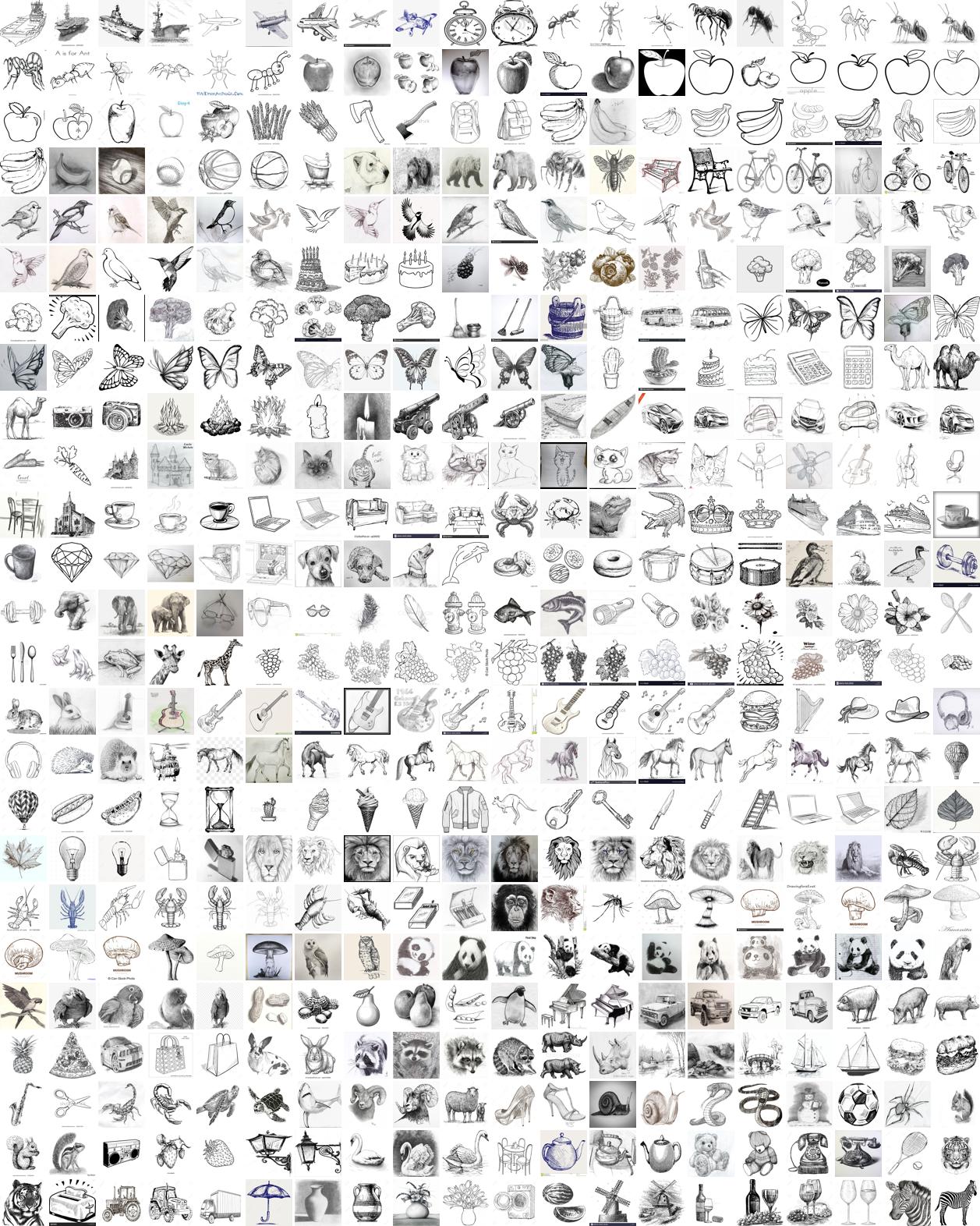}
    \caption{Images sampled from \textit{sketch} domain of the \DATASETNAME dataset.}
    \label{fig_sketch}
\end{figure*}

\subsection{Dataset Statistics}
\label{sec_supp_dataset_statistics}
Table~\ref{dataset_1}, Table~\ref{dataset_2}, and Table~\ref{dataset_3} show the detailed statistics of our \DATASETNAME dataset. Our dataset contains 6 distinct domains, 345 categories and $\sim$0.6 million images. The categories are from 24 divisions, which are: \textit{Furniture}, \textit{Mammal}, \textit{Tool}, \textit{Cloth}, \textit{Electricity}, \textit{Building}, \textit{Office}, \textit{Human Body}, \textit{Road Transportation}, \textit{Food}, \textit{Nature}, \textit{Cold Blooded}, \textit{Music}, \textit{Fruit}, \textit{Sport}, \textit{Tree}, \textit{Bird}, \textit{Vegetable}, \textit{Shape}, \textit{Kitchen}, \textit{Water Transportation}, \textit{Sky Transportation}, \textit{Insect}, \textit{Others}.

\subsection{Toy Experiment}
\label{toy_exp}
In multi-source domain adaptation, the source domains are not \textit{i.i.d} and as a result, are not automatically aligned with each other. Intuitively, it is not possible to perfectly align the target domain with every source domain, if the source domains are not aligned themselves. More specifically, as explained in the paragraph following Theorem ~\ref{thm:pairwise_bound}, the last term of our target error bound is lower bounded by pairwise divergences between source domains. This motivates our algorithm to also align the moments between each pair of source domains. 

Empirically, let's consider a toy experiment setting in which we have two source domains (denoted by yellow and green) and one target domain (denoted by blue), as shown in Figure~\ref{fig_toy_exp}. In the experiments, we utilize a 2-layer fully connected neural network as the backbone. We can observe the source domains and target domain are better aligned when matching source domain distributions using our model is applied. 

\subsection{Time Consumption}
\label{time_consumption}
We show the training and testing time consumption in Figure~\ref{tab_time_consumption}. The experiments are run in PyTorch with a NVIDIA TITAN X GPU on CentOS 7 server. The server has 8 Intel CORE i7 processors. We benchmark all the models with a minibatch size of 16 and an image size of 224 x 224. The CUDA version is 8.0 with CuDNN 5.15.

\begin{table}[]
    \centering
    \begin{tabular}{|c|c|c|}
         \hline
         Models& Training (ms)&Testing (ms)  \\
         \hline
         ResNet101&200.87 &60.84 \\
         \hline
         \Name & 267.58 & 61.20\\
         \hline
    \end{tabular}
    \caption{Time consumption of our model and the baseline.}
    \label{tab_time_consumption}
\end{table}

\clearpage
\begin{table*}
\centering
\renewcommand{\arraystretch}{1.2}
\renewcommand{\tabcolsep}{1.2mm}
\resizebox{\linewidth}{!}{
\begin{tabular}{|l l l l l l l l |l l l l l l l l|l l l l l l l l|}
\hline
\cline{1-24}
              \multicolumn{24}{|c|}{\BV{Furniture}}\\
\cline{1-24}
\hline
\textbf{class} & \textbf{clp} & \textbf{inf} & \textbf{pnt} & \textbf{qdr} & \textbf{rel} & \textbf{skt} & \textbf{total} &
\textbf{class} & \textbf{clp} & \textbf{inf} & \textbf{pnt} & \textbf{qdr} & \textbf{rel} & \textbf{skt} & \textbf{total} &
\textbf{class} & \textbf{clp} & \textbf{inf} & \textbf{pnt} & \textbf{qdr} & \textbf{rel} & \textbf{skt} & \textbf{total}\\
\hline
bathtub & 100 & 135 & 45 & 500 & 517 & 210 & 1507 &
bed & 197 & 180 & 46 & 500 & 724 & 188 & 1835 &
bench & 71 & 47 & 167 & 500 & 662 & 290 & 1737 \\
ceiling fan & 35 & 63 & 38 & 500 & 217 & 25 & 878 &
chair & 94 & 148 & 53 & 500 & 320 & 96 & 1211 &
chandelier & 223 & 29 & 57 & 500 & 393 & 34 & 1236 \\
couch & 232 & 61 & 26 & 500 & 601 & 60 & 1480 &
door & 81 & 49 & 347 & 500 & 371 & 361 & 1709 &
dresser & 41 & 23 & 141 & 500 & 234 & 13 & 952 \\
fence & 165 & 99 & 49 & 500 & 770 & 140 & 1723 &
fireplace & 138 & 98 & 15 & 500 & 700 & 123 & 1574 &
floor lamp & 180 & 100 & 10 & 500 & 246 & 278 & 1314 \\
hot tub & 144 & 86 & 197 & 500 & 757 & 49 & 1733 &
ladder & 74 & 96 & 418 & 500 & 442 & 244 & 1774 &
lantern & 179 & 58 & 218 & 500 & 526 & 40 & 1521 \\
mailbox & 18 & 45 & 101 & 500 & 595 & 151 & 1410 &
picture frame & 88 & 60 & 372 & 500 & 207 & 115 & 1342 &
pillow & 151 & 170 & 144 & 500 & 656 & 115 & 1736 \\
postcard & 91 & 37 & 88 & 500 & 636 & 49 & 1401 &
see saw & 299 & 28 & 166 & 500 & 273 & 519 & 1785 &
sink & 133 & 32 & 94 & 500 & 231 & 464 & 1454 \\
sleeping bag & 96 & 14 & 17 & 500 & 406 & 591 & 1624 &
stairs & 386 & 282 & 27 & 500 & 353 & 525 & 2073 &
stove & 256 & 255 & 16 & 500 & 614 & 269 & 1910 \\
streetlight & 326 & 113 & 537 & 500 & 463 & 268 & 2207 &
suitcase & 377 & 224 & 187 & 500 & 432 & 309 & 2029 &
swing set & 143 & 35 & 129 & 500 & 556 & 96 & 1459 \\
table & 297 & 736 & 104 & 500 & 563 & 300 & 2500 &
teapot & 222 & 209 & 391 & 500 & 631 & 327 & 2280 &
toilet & 175 & 519 & 31 & 500 & 583 & 118 & 1926 \\
toothbrush & 159 & 556 & 11 & 500 & 582 & 235 & 2043 &
toothpaste & 105 & 468 & 31 & 500 & 511 & 198 & 1813 &
umbrella & 145 & 511 & 299 & 500 & 362 & 297 & 2114 \\
vase & 161 & 319 & 262 & 500 & 632 & 187 & 2061 &
wine glass & 220 & 628 & 168 & 500 & 338 & 245 & 2099 &
& & & & & & & \\
\hline
\cline{1-24}
              \multicolumn{24}{|c|}{\BV{Mammal}}\\
\cline{1-24}
\hline
bat & 35 & 99 & 306 & 500 & 361 & 160 & 1461 &
bear & 124 & 81 & 379 & 500 & 585 & 178 & 1847 &
camel & 154 & 31 & 289 & 500 & 493 & 130 & 1597 \\
cat & 43 & 172 & 344 & 500 & 796 & 130 & 1985 &
cow & 188 & 134 & 156 & 500 & 541 & 17 & 1536 &
dog & 70 & 225 & 721 & 500 & 782 & 311 & 2609 \\
dolphin & 84 & 165 & 401 & 500 & 581 & 85 & 1816 &
elephant & 115 & 188 & 425 & 500 & 789 & 266 & 2283 &
giraffe & 134 & 172 & 105 & 500 & 594 & 186 & 1691 \\
hedgehog & 138 & 48 & 248 & 500 & 727 & 109 & 1770 &
horse & 201 & 216 & 521 & 500 & 645 & 103 & 2186 &
kangaroo & 106 & 60 & 214 & 500 & 613 & 122 & 1615 \\
lion & 46 & 64 & 505 & 500 & 516 & 330 & 1961 &
monkey & 123 & 85 & 405 & 500 & 699 & 166 & 1978 &
mouse & 74 & 50 & 445 & 500 & 147 & 127 & 1343 \\
panda & 87 & 86 & 264 & 500 & 587 & 79 & 1603 &
pig & 93 & 203 & 326 & 500 & 577 & 227 & 1926 &
rabbit & 105 & 135 & 269 & 500 & 695 & 94 & 1798 \\
raccoon & 187 & 24 & 249 & 500 & 676 & 348 & 1984 &
rhinoceros & 102 & 91 & 220 & 500 & 684 & 183 & 1780 &
sheep & 114 & 70 & 334 & 500 & 796 & 475 & 2289 \\
squirrel & 221 & 180 & 779 & 500 & 693 & 389 & 2762 &
tiger & 315 & 285 & 422 & 500 & 607 & 386 & 2515 &
whale & 343 & 432 & 357 & 500 & 671 & 272 & 2575 \\
zebra & 235 & 306 & 298 & 500 & 683 & 278 & 2300 &
& & & & & & & &
& & & & & & & \\
\hline
\cline{1-24}
              \multicolumn{24}{|c|}{\BV{Tool}}\\
\cline{1-24}
\hline
anvil & 122 & 23 & 152 & 500 & 332 & 91 & 1220 &
axe & 48 & 92 & 219 & 500 & 382 & 219 & 1460 &
bandage & 47 & 322 & 197 & 500 & 399 & 56 & 1521 \\
basket & 78 & 219 & 417 & 500 & 444 & 192 & 1850 &
boomerang & 92 & 45 & 41 & 500 & 628 & 120 & 1426 &
bottlecap & 118 & 26 & 538 & 500 & 606 & 139 & 1927 \\
broom & 171 & 35 & 84 & 500 & 639 & 234 & 1663 &
bucket & 142 & 56 & 61 & 500 & 335 & 162 & 1256 &
compass & 191 & 36 & 78 & 500 & 272 & 14 & 1091 \\
drill & 136 & 44 & 21 & 500 & 573 & 144 & 1418 &
dumbbell & 387 & 86 & 189 & 500 & 581 & 190 & 1933 &
hammer & 147 & 70 & 46 & 500 & 347 & 71 & 1181 \\
key & 59 & 68 & 97 & 500 & 229 & 137 & 1090 &
nail & 41 & 256 & 838 & 500 & 674 & 23 & 2332 &
paint can & 60 & 42 & 172 & 500 & 560 & 34 & 1368 \\
passport & 26 & 120 & 34 & 500 & 535 & 97 & 1312 &
pliers & 38 & 65 & 293 & 500 & 453 & 163 & 1512 &
rake & 119 & 66 & 58 & 500 & 594 & 93 & 1430 \\
rifle & 83 & 149 & 240 & 500 & 520 & 122 & 1614 &
saw & 76 & 34 & 150 & 500 & 118 & 110 & 988 &
screwdriver & 205 & 34 & 73 & 500 & 417 & 373 & 1602 \\
shovel & 214 & 17 & 112 & 500 & 450 & 630 & 1923 &
skateboard & 263 & 50 & 152 & 500 & 557 & 419 & 1941 &
stethoscope & 343 & 107 & 346 & 500 & 496 & 237 & 2029 \\
stitches & 206 & 285 & 17 & 500 & 207 & 34 & 1249 &
sword & 139 & 124 & 470 & 500 & 591 & 384 & 2208 &
syringe & 128 & 240 & 10 & 500 & 589 & 222 & 1689 \\
wheel & 133 & 385 & 19 & 500 & 410 & 166 & 1613 &
& & & & & & & &
& & & & & & & \\
\hline
\cline{1-24}
              \multicolumn{24}{|c|}{\BV{Cloth}}\\
\cline{1-24}
\hline

belt & 137 & 95 & 17 & 500 & 661 & 125 & 1535 &
bowtie & 146 & 95 & 292 & 500 & 533 & 327 & 1893 &
bracelet & 293 & 123 & 150 & 500 & 715 & 300 & 2081 \\
camouflage & 181 & 27 & 72 & 500 & 124 & 69 & 973 &
crown & 208 & 17 & 81 & 500 & 170 & 176 & 1152 &
diamond & 207 & 109 & 17 & 500 & 577 & 117 & 1527 \\
eyeglasses & 201 & 118 & 83 & 500 & 680 & 219 & 1801 &
flip flops & 147 & 53 & 206 & 500 & 525 & 120 & 1551 &
hat & 120 & 201 & 400 & 500 & 529 & 77 & 1827 \\
helmet & 163 & 263 & 27 & 500 & 622 & 210 & 1785 &
jacket & 72 & 82 & 272 & 500 & 457 & 84 & 1467 &
lipstick & 101 & 104 & 196 & 500 & 446 & 110 & 1457 \\
necklace & 83 & 115 & 347 & 500 & 697 & 114 & 1856 &
pants & 16 & 173 & 381 & 500 & 398 & 136 & 1604 &
purse & 41 & 119 & 49 & 500 & 544 & 228 & 1481 \\
rollerskates & 204 & 49 & 322 & 500 & 493 & 141 & 1709 &
shoe & 127 & 291 & 260 & 500 & 587 & 645 & 2410 &
shorts & 140 & 29 & 161 & 500 & 443 & 529 & 1802 \\
sock & 167 & 453 & 31 & 500 & 531 & 453 & 2135 &
sweater & 222 & 92 & 153 & 500 & 579 & 167 & 1713 &
t-shirt & 98 & 320 & 12 & 500 & 367 & 155 & 1452 \\
underwear & 253 & 354 & 12 & 500 & 286 & 132 & 1537 &
wristwatch & 285 & 470 & 18 & 500 & 553 & 224 & 2050 &
& & & & & & & \\
\hline
\cline{1-24}
              \multicolumn{24}{|c|}{\BV{Electricity}}\\
\cline{1-24}
\hline
calculator & 55 & 28 & 12 & 500 & 374 & 69 & 1038 &
camera & 58 & 66 & 156 & 500 & 480 & 109 & 1369 &
cell phone & 38 & 170 & 136 & 500 & 520 & 23 & 1387 \\
computer & 287 & 97 & 19 & 500 & 362 & 31 & 1296 &
cooler & 214 & 21 & 13 & 500 & 528 & 90 & 1366 &
dishwasher & 109 & 47 & 107 & 500 & 508 & 40 & 1311 \\
fan & 148 & 49 & 16 & 500 & 460 & 66 & 1239 &
flashlight & 221 & 62 & 418 & 500 & 461 & 95 & 1757 &
headphones & 285 & 224 & 181 & 500 & 551 & 188 & 1929 \\
keyboard & 32 & 95 & 370 & 500 & 503 & 64 & 1564 &
laptop & 26 & 118 & 161 & 500 & 387 & 319 & 1511 &
light bulb & 12 & 185 & 482 & 500 & 262 & 405 & 1846 \\
megaphone & 73 & 91 & 160 & 500 & 560 & 189 & 1573 &
microphone & 143 & 70 & 152 & 500 & 562 & 156 & 1583 &
microwave & 16 & 114 & 10 & 500 & 338 & 170 & 1148 \\
oven & 14 & 59 & 11 & 500 & 492 & 176 & 1252 &
power outlet & 25 & 76 & 102 & 500 & 620 & 95 & 1418 &
radio & 30 & 101 & 65 & 500 & 398 & 165 & 1259 \\
remote control & 117 & 70 & 111 & 500 & 554 & 47 & 1399 &
spreadsheet & 187 & 397 & 34 & 500 & 751 & 677 & 2546 &
stereo & 289 & 334 & 12 & 500 & 211 & 90 & 1436 \\
telephone & 148 & 279 & 78 & 500 & 479 & 255 & 1739 &
television & 136 & 546 & 51 & 500 & 400 & 127 & 1760 &
toaster & 196 & 337 & 107 & 500 & 536 & 267 & 1943 \\
washing machine & 265 & 519 & 15 & 500 & 466 & 155 & 1920 &
& & & & & & & &
& & & & & & & \\
\hline
\end{tabular}

}
\caption{Detailed statistics of the \DATASETNAME dataset.}
\label{dataset_1}
\end{table*}

\clearpage
\begin{table*}
\centering
\renewcommand{\arraystretch}{1.2}
\renewcommand{\tabcolsep}{1.2mm}
\resizebox{\linewidth}{!}{
\begin{tabular}{|l l l l l l l l |l l l l l l l l|l l l l l l l l|}
\hline
\cline{1-24}
              \multicolumn{24}{|c|}{\BV{Building}}\\
\cline{1-24}
\hline
\textbf{class} & \textbf{clp} & \textbf{inf} & \textbf{pnt} & \textbf{qdr} & \textbf{rel} & \textbf{skt} & \textbf{total} &
\textbf{class} & \textbf{clp} & \textbf{inf} & \textbf{pnt} & \textbf{qdr} & \textbf{rel} & \textbf{skt} & \textbf{total} &
\textbf{class} & \textbf{clp} & \textbf{inf} & \textbf{pnt} & \textbf{qdr} & \textbf{rel} & \textbf{skt} & \textbf{total}\\
\hline

The Eiffel Tower & 114 & 190 & 321 & 500 & 553 & 276 & 1954 &
The Great Wall & 116 & 80 & 159 & 500 & 530 & 148 & 1533 &
barn & 157 & 150 & 426 & 500 & 313 & 201 & 1747 \\
bridge & 66 & 61 & 471 & 500 & 769 & 335 & 2202 &
castle & 47 & 123 & 225 & 500 & 682 & 56 & 1633 &
church & 54 & 20 & 142 & 500 & 668 & 35 & 1419 \\
diving board & 182 & 12 & 127 & 500 & 593 & 71 & 1485 &
garden & 63 & 291 & 213 & 500 & 815 & 98 & 1980 &
garden hose & 147 & 48 & 179 & 500 & 534 & 84 & 1492 \\
golf club & 207 & 169 & 650 & 500 & 552 & 695 & 2773 &
hospital & 95 & 48 & 50 & 500 & 674 & 24 & 1391 &
house & 108 & 306 & 105 & 500 & 374 & 144 & 1537 \\
jail & 104 & 26 & 54 & 500 & 587 & 94 & 1365 &
lighthouse & 123 & 66 & 411 & 500 & 722 & 384 & 2206 &
pond & 105 & 72 & 603 & 500 & 777 & 95 & 2152 \\
pool & 139 & 173 & 90 & 500 & 680 & 103 & 1685 &
skyscraper & 195 & 159 & 179 & 500 & 284 & 466 & 1783 &
square & 163 & 211 & 144 & 500 & 98 & 727 & 1843 \\
tent & 153 & 234 & 141 & 500 & 590 & 339 & 1957 &
waterslide & 159 & 328 & 12 & 500 & 606 & 115 & 1720 &
windmill & 245 & 372 & 397 & 500 & 635 & 245 & 2394 \\

\hline
\cline{1-24}
              \multicolumn{24}{|c|}{\BV{Office}}\\
\cline{1-24}

alarm clock & 93 & 23 & 84 & 500 & 521 & 202 & 1423 &
backpack & 33 & 341 & 265 & 500 & 439 & 220 & 1798 &
bandage & 47 & 322 & 197 & 500 & 399 & 56 & 1521 \\
binoculars & 246 & 55 & 148 & 500 & 402 & 266 & 1617 &
book & 167 & 188 & 65 & 500 & 731 & 146 & 1797 &
calendar & 66 & 54 & 44 & 500 & 176 & 59 & 899 \\
candle & 151 & 68 & 261 & 500 & 621 & 77 & 1678 &
clock & 69 & 50 & 266 & 500 & 619 & 44 & 1548 &
coffee cup & 357 & 191 & 185 & 500 & 643 & 33 & 1909 \\
crayon & 141 & 41 & 19 & 500 & 512 & 285 & 1498 &
cup & 128 & 52 & 582 & 500 & 406 & 396 & 2064 &
envelope & 147 & 60 & 291 & 500 & 665 & 183 & 1846 \\
eraser & 138 & 34 & 17 & 500 & 356 & 51 & 1096 &
map & 42 & 206 & 423 & 500 & 507 & 193 & 1871 &
marker & 57 & 44 & 103 & 500 & 336 & 240 & 1280 \\
mug & 168 & 41 & 500 & 500 & 598 & 186 & 1993 &
nail & 41 & 256 & 838 & 500 & 674 & 23 & 2332 &
paintbrush & 25 & 28 & 365 & 500 & 413 & 75 & 1406 \\
paper clip & 122 & 25 & 112 & 500 & 549 & 119 & 1427 &
pencil & 51 & 369 & 183 & 500 & 461 & 26 & 1590 &
scissors & 205 & 103 & 65 & 500 & 568 & 437 & 1878 \\

\cline{1-24}
              \multicolumn{24}{|c|}{\BV{Human Body}}\\
\cline{1-24}

arm & 50 & 129 & 422 & 500 & 235 & 249 & 1585 &
beard & 156 & 93 & 373 & 500 & 728 & 481 & 2331 &
brain & 76 & 283 & 233 & 500 & 724 & 270 & 2086 \\
ear & 101 & 58 & 187 & 500 & 348 & 199 & 1393 &
elbow & 199 & 74 & 97 & 500 & 398 & 155 & 1423 &
eye & 108 & 168 & 292 & 500 & 695 & 489 & 2252 \\
face & 54 & 110 & 20 & 500 & 696 & 452 & 1832 &
finger & 153 & 71 & 57 & 500 & 625 & 283 & 1689 &
foot & 85 & 111 & 86 & 500 & 558 & 261 & 1601 \\
goatee & 255 & 236 & 129 & 500 & 562 & 219 & 1901 &
hand & 97 & 268 & 262 & 500 & 563 & 264 & 1954 &
knee & 45 & 56 & 130 & 500 & 505 & 273 & 1509 \\
leg & 89 & 174 & 178 & 500 & 659 & 145 & 1745 &
moustache & 222 & 28 & 430 & 500 & 424 & 107 & 1711 &
mouth & 110 & 103 & 51 & 500 & 457 & 172 & 1393 \\
nose & 57 & 226 & 512 & 500 & 362 & 103 & 1760 &
skull & 178 & 29 & 189 & 500 & 329 & 600 & 1825 &
smiley face & 113 & 46 & 77 & 500 & 226 & 441 & 1403 \\
toe & 85 & 407 & 12 & 500 & 356 & 78 & 1438 &
tooth & 101 & 473 & 109 & 500 & 257 & 181 & 1621 &
& & & & & & & \\

\cline{1-24}
              \multicolumn{24}{|c|}{\BV{Road Transportation}}\\
\cline{1-24}
ambulance & 80 & 20 & 74 & 500 & 623 & 115 & 1412 &
bicycle & 71 & 272 & 196 & 500 & 705 & 343 & 2087 &
bulldozer & 111 & 55 & 58 & 500 & 635 & 199 & 1558 \\
bus & 101 & 183 & 112 & 500 & 745 & 233 & 1874 &
car & 99 & 356 & 45 & 500 & 564 & 145 & 1709 &
firetruck & 167 & 39 & 359 & 500 & 562 & 328 & 1955 \\
motorbike & 42 & 209 & 106 & 500 & 772 & 209 & 1838 &
pickup truck & 46 & 116 & 143 & 500 & 619 & 188 & 1612 &
police car & 104 & 51 & 87 & 500 & 740 & 119 & 1601 \\
roller coaster & 143 & 46 & 75 & 500 & 637 & 61 & 1462 &
school bus & 230 & 142 & 66 & 500 & 478 & 405 & 1821 &
tractor & 154 & 316 & 183 & 500 & 636 & 263 & 2052 \\
train & 109 & 373 & 406 & 500 & 681 & 240 & 2309 &
truck & 117 & 678 & 158 & 500 & 673 & 265 & 2391 &
van & 207 & 806 & 12 & 500 & 442 & 138 & 2105 \\
\cline{1-24}
              \multicolumn{24}{|c|}{\BV{Food}}\\
\cline{1-24}
birthday cake & 165 & 69 & 119 & 500 & 307 & 233 & 1393 &
bread & 197 & 232 & 315 & 500 & 794 & 276 & 2314 &
cake & 108 & 140 & 172 & 500 & 786 & 77 & 1783 \\
cookie & 97 & 78 & 54 & 500 & 677 & 33 & 1439 &
donut & 139 & 65 & 373 & 500 & 630 & 127 & 1834 &
hamburger & 187 & 210 & 147 & 500 & 559 & 185 & 1788 \\
hot dog & 38 & 138 & 148 & 500 & 644 & 143 & 1611 &
ice cream & 160 & 187 & 311 & 500 & 657 & 184 & 1999 &
lollipop & 74 & 28 & 252 & 500 & 607 & 106 & 1567 \\
peanut & 84 & 60 & 82 & 500 & 423 & 130 & 1279 &
pizza & 77 & 157 & 127 & 500 & 600 & 202 & 1663 &
popsicle & 288 & 79 & 171 & 500 & 639 & 117 & 1794 \\
sandwich & 189 & 110 & 139 & 500 & 579 & 132 & 1649 &
steak & 155 & 360 & 50 & 500 & 758 & 238 & 2061 &
 & & & & & & & \\

\cline{1-24}
              \multicolumn{24}{|c|}{\BV{Nature}}\\
\cline{1-24}
beach & 105 & 183 & 499 & 500 & 622 & 79 & 1988 &
cloud & 172 & 142 & 278 & 500 & 324 & 100 & 1516 &
hurricane & 92 & 68 & 133 & 500 & 420 & 99 & 1312 \\
lightning & 171 & 68 & 199 & 500 & 560 & 94 & 1592 &
moon & 126 & 195 & 324 & 500 & 568 & 155 & 1868 &
mountain & 67 & 57 & 319 & 500 & 791 & 195 & 1929 \\
ocean & 54 & 47 & 475 & 500 & 591 & 77 & 1744 &
rain & 71 & 78 & 352 & 500 & 274 & 235 & 1510 &
rainbow & 61 & 44 & 84 & 500 & 231 & 46 & 966 \\
river & 134 & 155 & 558 & 500 & 651 & 111 & 2109 &
snowflake & 175 & 41 & 405 & 500 & 66 & 460 & 1647 &
star & 111 & 204 & 98 & 500 & 61 & 205 & 1179 \\
sun & 248 & 352 & 572 & 500 & 161 & 258 & 2091 &
tornado & 169 & 329 & 373 & 500 & 497 & 211 & 2079 &
 & & & & & & & \\
 
 \cline{1-24}
              \multicolumn{24}{|c|}{\BV{Cold Blooded}}\\
\cline{1-24}
crab & 108 & 50 & 153 & 500 & 717 & 152 & 1680 &
crocodile & 164 & 56 & 120 & 500 & 713 & 161 & 1714 &
fish & 130 & 195 & 429 & 500 & 479 & 373 & 2106 \\
frog & 163 & 118 & 167 & 500 & 761 & 203 & 1912 &
lobster & 243 & 47 & 254 & 500 & 649 & 174 & 1867 &
octopus & 190 & 49 & 331 & 500 & 726 & 149 & 1945 \\
scorpion & 171 & 53 & 133 & 500 & 447 & 455 & 1759 &
sea turtle & 236 & 190 & 410 & 500 & 621 & 254 & 2211 &
shark & 203 & 279 & 269 & 500 & 183 & 532 & 1966 \\
snail & 166 & 18 & 321 & 500 & 465 & 405 & 1875 &
snake & 168 & 57 & 425 & 500 & 501 & 470 & 2121 &
spider & 161 & 154 & 308 & 500 & 593 & 645 & 2361 \\
 
 \cline{1-24}
              \multicolumn{24}{|c|}{\BV{Music}}\\
\cline{1-24}
cello & 93 & 34 & 158 & 500 & 450 & 64 & 1299 &
clarinet & 214 & 25 & 89 & 500 & 407 & 33 & 1268 &
drums & 194 & 18 & 205 & 500 & 769 & 214 & 1900 \\
guitar & 103 & 204 & 203 & 500 & 632 & 183 & 1825 &
harp & 258 & 37 & 224 & 500 & 649 & 45 & 1713 &
piano & 20 & 66 & 296 & 500 & 570 & 119 & 1571 \\
saxophone & 236 & 74 & 358 & 500 & 482 & 310 & 1960 &
trombone & 227 & 195 & 175 & 500 & 484 & 191 & 1772 &
trumpet & 117 & 247 & 122 & 500 & 391 & 188 & 1565 \\
violin & 174 & 282 & 203 & 500 & 512 & 203 & 1874 &
 & & & & & & & &
  & & & & & & & \\
 \hline
\end{tabular}
}

\caption{Detailed statistics of the \DATASETNAME dataset.}
\label{dataset_2}
\end{table*}

\clearpage
\begin{table*}
\centering
\renewcommand{\arraystretch}{1.2}
\renewcommand{\tabcolsep}{1.2mm}
\resizebox{\linewidth}{!}{
\begin{tabular}{|l l l l l l l l |l l l l l l l l|l l l l l l l l|}
\hline
\cline{1-24}
              \multicolumn{24}{|c|}{\BV{Fruit}}\\
\cline{1-24}
\hline
\textbf{class} & \textbf{clp} & \textbf{inf} & \textbf{pnt} & \textbf{qdr} & \textbf{rel} & \textbf{skt} & \textbf{total} &
\textbf{class} & \textbf{clp} & \textbf{inf} & \textbf{pnt} & \textbf{qdr} & \textbf{rel} & \textbf{skt} & \textbf{total} &
\textbf{class} & \textbf{clp} & \textbf{inf} & \textbf{pnt} & \textbf{qdr} & \textbf{rel} & \textbf{skt} & \textbf{total}\\
\hline

apple & 88 & 75 & 445 & 500 & 54 & 181 & 1343 & 
banana & 50 & 376 & 359 & 500 & 258 & 204 & 1747 & 
blackberry & 106 & 214 & 14 & 500 & 568 & 60 & 1462  \\
blueberry & 171 & 110 & 167 & 500 & 733 & 129 & 1810 & 
grapes & 93 & 171 & 318 & 500 & 734 & 287 & 2103 & 
pear & 74 & 115 & 448 & 500 & 438 & 183 & 1758 \\
pineapple & 83 & 139 & 333 & 500 & 673 & 131 & 1859 & 
strawberry & 357 & 308 & 530 & 500 & 454 & 198 & 2347 & 
watermelon & 193 & 401 & 410 & 500 & 671 & 128 & 2303 \\
  \cline{1-24}
              \multicolumn{24}{|c|}{\BV{Sport}}\\
\cline{1-24}
  baseball & 116 & 369 & 122 & 500 & 87 & 48 & 1242  &
baseball bat & 106 & 353 & 145 & 500 & 118 & 196 & 1418 &
basketball & 61 & 219 & 276 & 500 & 237 & 160 & 1453 \\
flying saucer & 233 & 40 & 242 & 500 & 396 & 137 & 1548 &
hockey puck & 188 & 59 & 236 & 500 & 400 & 95 & 1478 &
hockey stick & 155 & 197 & 194 & 500 & 394 & 119 & 1559 \\
snorkel & 278 & 81 & 179 & 500 & 689 & 397 & 2124 &
soccer ball & 85 & 163 & 268 & 500 & 272 & 377 & 1665 &
tennis racquet & 187 & 195 & 18 & 500 & 456 & 202 & 1558\\
yoga & 165 & 447 & 161 & 500 & 371 & 251 & 1895 &
  & & & & & & & &
    & & & & & & & \\
  \cline{1-24}
              \multicolumn{24}{|c|}{\BV{Tree}}\\
\cline{1-24}
bush & 46 & 12 & 67 & 500 & 31 & 626 & 1282 &
cactus & 119 & 36 & 122 & 500 & 658 & 61 & 1496 &
flower & 253 & 140 & 485 & 500 & 360 & 336 & 2074 \\
grass & 148 & 312 & 332 & 500 & 378 & 173 & 1843 &
house plant & 25 & 292 & 416 & 500 & 484 & 156 & 1873 &
leaf & 12 & 96 & 504 & 500 & 414 & 378 & 1904 \\
palm tree & 65 & 277 & 607 & 500 & 333 & 166 & 1948 &
tree & 126 & 511 & 571 & 500 & 536 & 555 & 2799 &
    & & & & & & & \\
  \cline{1-24}
              \multicolumn{24}{|c|}{\BV{Bird}}\\
\cline{1-24}
bird & 336 & 208 & 222 & 500 & 803 & 306 & 2375 &
duck & 142 & 51 & 419 & 500 & 404 & 276 & 1792 &
flamingo & 274 & 39 & 224 & 500 & 542 & 142 & 1721 \\
owl & 133 & 114 & 496 & 500 & 757 & 202 & 2202 &
parrot & 75 & 62 & 336 & 500 & 781 & 266 & 2020 &
penguin & 121 & 201 & 447 & 500 & 700 & 209 & 2178 \\
swan & 469 & 52 & 507 & 500 & 326 & 236 & 2090 &
  & & & & & & & &  
  & & & & & & & \\
  \cline{1-24}
              \multicolumn{24}{|c|}{\BV{Vegetable}}\\
\cline{1-24}
asparagus & 49 & 134 & 408 & 500 & 659 & 209 & 1959 &
broccoli & 105 & 229 & 100 & 500 & 679 & 181 & 1794 &
carrot & 52 & 251 & 265 & 500 & 565 & 34 & 1667 \\
mushroom & 136 & 298 & 254 & 500 & 788 & 252 & 2228 &
onion & 87 & 62 & 471 & 500 & 599 & 158 & 1877 &
peas & 90 & 120 & 90 & 500 & 680 & 81 & 1561 \\
potato & 86 & 61 & 58 & 500 & 608 & 83 & 1396 &
string bean & 139 & 87 & 70 & 500 & 491 & 68 & 1355 &
  & & & & & & & \\
  \cline{1-24}
              \multicolumn{24}{|c|}{\BV{Shape}}\\
\cline{1-24}

circle & 199 & 248 & 292 & 500 & 259 & 202 & 1700 &
hexagon & 196 & 362 & 160 & 500 & 592 & 116 & 1926 &
line & 13 & 210 & 502 & 500 & 102 & 25 & 1352 \\
octagon & 29 & 115 & 19 & 500 & 465 & 117 & 1245 &
squiggle & 148 & 115 & 674 & 500 & 71 & 442 & 1950 &
triangle & 183 & 364 & 298 & 500 & 376 & 303 & 2024 \\
zigzag & 323 & 412 & 110 & 500 & 515 & 144 & 2004 &
  & & & & & & & & 
  & & & & & & & \\
  \cline{1-24}
              \multicolumn{24}{|c|}{\BV{Kitchen}}\\
\cline{1-24}
fork & 200 & 63 & 84 & 500 & 351 & 176 & 1374 &
frying pan & 187 & 68 & 169 & 500 & 399 & 132 & 1455 &
hourglass & 100 & 100 & 206 & 500 & 289 & 134 & 1329 \\
knife & 32 & 108 & 30 & 500 & 582 & 129 & 1381 &
lighter & 66 & 27 & 27 & 500 & 587 & 118 & 1325 &
matches & 60 & 19 & 56 & 500 & 333 & 56 & 1024 \\
spoon & 228 & 127 & 158 & 500 & 534 & 406 & 1953 &

wine bottle & 230 & 442 & 59 & 500 & 407 & 274 & 1912 &
  & & & & & & & \\

  \cline{1-24}
              \multicolumn{24}{|c|}{\BV{Water Transportation}}\\
\cline{1-24}
aircraft carrier & 27 & 88 & 133 & 500 & 390 & 63 & 1201 &
canoe & 68 & 71 & 395 & 500 & 703 & 129 & 1866 &
cruise ship & 208 & 94 & 223 & 500 & 632 & 158 & 1815 \\
sailboat & 162 & 119 & 322 & 500 & 422 & 361 & 1886 &
speedboat & 271 & 76 & 141 & 500 & 620 & 487 & 2095 &
submarine & 344 & 183 & 550 & 500 & 607 & 207 & 2391 \\
  \cline{1-24}
              \multicolumn{24}{|c|}{\BV{Sky Transportation}}\\
\cline{1-24}
airplane & 73 & 62 & 212 & 500 & 218 & 331 & 1396 &
helicopter & 145 & 216 & 257 & 500 & 804 & 200 & 2122 &
hot air balloon & 198 & 48 & 453 & 500 & 732 & 170 & 2101 \\
parachute & 82 & 60 & 140 & 500 & 629 & 233 & 1644 &
  & & & & & & & &
    & & & & & & & \\
  \cline{1-24}
              \multicolumn{24}{|c|}{\BV{Insect}}\\
\cline{1-24}
ant & 81 & 62 & 235 & 500 & 381 & 111 & 1370 &
bee & 202 & 233 & 313 & 500 & 452 & 144 & 1844 &
butterfly & 160 & 162 & 387 & 500 & 658 & 249 & 2116 \\
mosquito & 56 & 232 & 65 & 500 & 562 & 144 & 1559 &
  & & & & & & & &  & & & & & & & \\

  \cline{1-24}
              \multicolumn{24}{|c|}{\BV{Others}}\\
\cline{1-24}
The Mona Lisa & 150 & 112 & 191 & 500 & 289 & 145 & 1387 &
angel & 165 & 17 & 504 & 500 & 31 & 299 & 1516 &
animal migration & 235 & 68 & 604 & 500 & 444 & 112 & 1963 \\
campfire & 122 & 53 & 217 & 500 & 489 & 86 & 1467 &
cannon & 103 & 14 & 54 & 500 & 300 & 93 & 1064 &
dragon & 105 & 30 & 231 & 500 & 485 & 196 & 1547 \\
feather & 268 & 432 & 344 & 500 & 505 & 336 & 2385 &
fire hydrant & 149 & 59 & 29 & 500 & 579 & 148 & 1464 &
mermaid & 207 & 30 & 99 & 500 & 449 & 228 & 1513 \\
snowman & 174 & 123 & 901 & 500 & 114 & 712 & 2524 &
stop sign & 169 & 54 & 87 & 500 & 168 & 109 & 1087 &
teddy-bear & 124 & 407 & 301 & 500 & 528 & 238 & 2098 \\
traffic light & 211 & 280 & 60 & 500 & 379 & 127 & 1557 &
  & & & & & & & & 
  & & & & & & & \\
  \hline

\end{tabular}
}
\caption{Detailed statistics of the \DATASETNAME dataset.}
\label{dataset_3}
\end{table*}

\end{document}